\newtheorem{lemma}{Lemma}
\newtheorem{theorem}{Theorem}
\newtheorem{theorem*}{Finding}
\newtheorem{property}{Property}
\newtheorem{proposition}{Proposition}
\newtheorem{definition}{Definition}
\theoremstyle{remark}
\newtheorem{remark}[theorem]{Remark}
\renewcommand{\theta}{h}
\renewcommand{\Theta}{H}
\newcommand{\bH}{\bm{H}}
\newcommand{\tbH}{\tilde{\bm{H}}}
\renewcommand{\mathbf}{\bm}
\renewcommand{\epsilon}{\varepsilon}
\newcommand{\kk}{{k'}}
\newcommand{\kkk}{{k''}}
\newcommand{\RR}{\mathbb{R}}
\newcommand{\bx}{\mathbf{x}}
\newcommand{\bu}{\mathbf{u}}
\newcommand{\bX}{\mathbf{X}}
\newcommand{\bW}{\mathbf{W}}
\newcommand{\bw}{\mathbf{w}}
\newcommand{\by}{\mathbf{y}}
\newcommand{\bh}{\mathbf{h}}
\newcommand{\bz}{\mathbf{z}}
\newcommand{\bZ}{\mathbf{Z}}
\newcommand{\bS}{\mathbf{S}}
\newcommand{\bI}{\mathbf{I}}
\newcommand{\bv}{\mathbf{v}}
\newcommand{\cI}{\mathbf{1}}
\newcommand{\bM}{{\mathbf{M}^\star}}
\newcommand{\bmm}{\mathbf{m}}
\newcommand{\bXi}{\mathbf{\xi}}
\newcommand{\bP}{\mathbf{P}}
\newcommand{\bU}{\mathbf{U}}
\newcommand{\bV}{\mathbf{V}}
\newcommand{\bSigma}{\mathbf{\Sigma}}
\newcommand{\ba}{\mathbf{a}}
\newcommand{\cS}{\mathcal{S}}
\newcommand{\E}{\mathbb{E}}
\newcommand{\Etheta}{E_{\Theta}}
\newcommand{\Ew}{E_{W}}
\newcommand{\btheta}{\bm{\theta}}
\newcommand{\barN}{n}
\newcommand{\cL}{\mathcal{L}}
\newcommand\bff{\bm{f}}
\newcommand\wf{\bm{W}_{\textnormal{full}}}
\newcommand{\bTheta}{\mathbf{\Theta}}
\newcommand{\bartheta}{\bar{\btheta}}
\DeclareMathOperator*{\argmax}{arg\,max}
\DeclareMathOperator*{\argmin}{arg\,min}
\newcommand{\bcdot}{
	\mathop{
		\mathchoice{\vcenter{\hbox{\LARGE$\cdot$}}}
		{\vcenter{\hbox{\LARGE$\cdot$}}}
		{\vcenter{\hbox{\normalsize$\cdot$}}}
		{\vcenter{\hbox{\small$\cdot$}}}
	}
}
\title{Exploring Deep Neural Networks via Layer-Peeled Model: Minority Collapse in Imbalanced Training}
\author[a]{Cong Fang}
\author[b]{Hangfeng He} 
\author[c]{Qi Long}
\author[d,1]{Weijie J.~Su}
\affil[a]{Department of Key Laboratory of Machine Perception (MOE), Peking University}
\affil[b]{Department of Computer and Information Science, University of Pennsylvania}
\affil[c]{Department of Biostatistics, Epidemiology and Informatics, University of Pennsylvania} 
\affil[d]{Department of Statistics and Data Science, University of Pennsylvania}
\keywords{deep learning $|$ surrogate model $|$ optimization $|$ class imbalance}
\begin{abstract}
In this paper, we introduce the \textit{Layer-Peeled Model}, a nonconvex yet analytically tractable optimization program, in a quest to better understand deep neural networks that are trained for a sufficiently long time. As the name suggests, this new model is derived by isolating the topmost layer from the remainder of the neural network, followed by imposing certain constraints separately on the two parts of the network. We demonstrate that the Layer-Peeled Model, albeit simple, inherits many characteristics of well-trained neural networks, thereby offering an effective tool for explaining and predicting common empirical patterns of deep learning training. First, when working on class-balanced datasets, we prove that any solution to this model forms a simplex equiangular tight frame, which in part explains the recently discovered phenomenon of neural collapse~\cite{papyan2020prevalence}. More importantly, when moving to the imbalanced case, our analysis of the Layer-Peeled
Model reveals a hitherto unknown phenomenon that we term \textit{Minority Collapse}, which fundamentally limits the performance of deep learning models on the minority classes. In addition, we use the Layer-Peeled Model to gain insights into how to mitigate Minority Collapse. Interestingly, this phenomenon is first predicted by the Layer-Peeled Model before being confirmed by our computational experiments. 
\end{abstract}
\begin{document}

\maketitle
\thispagestyle{firststyle}
\ifthenelse{\boolean{shortarticle}}{\ifthenelse{\boolean{singlecolumn}}{\abscontentformatted}{\abscontent}}{}


\section{Introduction}\label{sec:intro}

In the past decade, deep learning has achieved remarkable performance across a range of scientific and engineering domains \citep{krizhevsky2017imagenet,lecun2015deep, silver2016mastering}. Interestingly, these impressive accomplishments were mostly achieved by heuristics and tricks, though often plausible, without much principled guidance from a theoretical perspective. On the flip side, however, this reality suggests the great potential a theory could have for advancing the development of deep learning methodologies in the coming decade.


Unfortunately, it is not easy to develop a theoretical foundation for deep learning. Perhaps the most difficult hurdle lies in the nonconvexity of the optimization problem for training neural networks, which, loosely speaking, stems from the interaction between different layers of neural networks. To be more precise, consider a neural network for $K$-class classification (in logits), which in its simplest form reads\footnote{The softmax step is implicitly included in the loss function and we omit other operations such as max-pooling for simplicity.}
\[
\bff(\bx; \wf) = \bm{b}_L + \bW_L \sigma \left( \bm{b}_{L-1} + \bW_{L-1} \sigma(\cdots \sigma(\bm{b}_1 + \bW_1 \bx) \cdots ) \right).
\]
Here, $\wf := \{\bW_1, \bW_2, \ldots, \bW_L\}$ denotes the weights of the $L$ layers, $\{\bm{b}_1, \bm{b}_2, \ldots, \bm{b}_L\}$ denotes the biases, and $\sigma(\cdot)$ is a nonlinear activation function such as the ReLU. Owing to the complex and nonlinear interaction between the $L$ layers, when applying stochastic gradient descent to the optimization problem
\begin{equation}\label{eq:dl_opt}
\min_{\wf} ~ \frac1{N}\sum_{k=1}^K \sum_{i=1}^{n_k} \cL( \bff(\bx_{k,i}; \wf), \by_k  ) + \frac{\lambda}{2} \|\wf\|^2
\end{equation}
with a loss function $\cL$ for training the neural network, it becomes very difficult to pinpoint how a given layer influences the output $\bff$ (above, $\{\bx_{k, i}\}_{i=1}^{n_k}$ denotes the training examples in the $k$-th class, with label $\by_k$, $N = n_1 + \cdots + n_K$ is the total number of training examples, $\lambda > 0$ is the weight decay parameter, and $\|\cdot\|$ throughout the paper is the $\ell_2$ norm). Worse, this difficulty in analyzing deep learning models is compounded by an ever growing number of layers.

Therefore, any attempt to develop a tractable and comprehensive theory for demystifying deep learning would presumably first need to simplify the interaction between a large number of layers. Following this intuition, in this paper we introduce the following optimization program as a \textit{surrogate} model for  \eqref{eq:dl_opt} with the goal of unveiling quantitative patterns of deep neural networks:
\begin{equation}\label{eq:intro_peeled}
\begin{aligned}
\min_{\bW_L, \bTheta} ~ & \frac1N\sum_{k=1}^K \sum_{i=1}^{n_k} \cL( \bW_L \btheta_{k,i}, \by_k )\\
\mathrm{s.t.}~ & \frac{1}{K}\sum_{k=1}^K \left\|\bw_k \right\|^2 \leq E_{W}, \frac{1}{K}\sum_{k=1}^K \frac{1}{n_k}\sum_{i=1}^{n_k}\left\|\btheta_{k,i} \right\|^2 \leq E_{\Theta},
\end{aligned}
\end{equation}
where $\bW_L = \left[ \bw_1, \ldots, \bw_K \right]^\top \in \RR^{K \times p}$ is, as in  \eqref{eq:dl_opt}, comprised of $K$ linear classifiers in the last layer, $\bH = [\bh_{k,i}: 1 \le k \le K, 1 \le i \le n_k] \in \RR^{p \times N}$ corresponds to the $p$-dimensional last-layer activations/features of all $N$ training examples, and $E_{\Theta}$ and $E_W$ are two positive scalars. Note that the bias terms are omitted for simplicity. Although still nonconvex, this new optimization program is presumably much more amenable to analysis than the old one \eqref{eq:dl_opt} as the interaction now is only between two layers.

\begin{figure}[!htp]
		\centering
        \hspace{0.01in}
		\subfigure[1-Layer-Peeled Model]{
			\centering
			\includegraphics[scale=0.22]{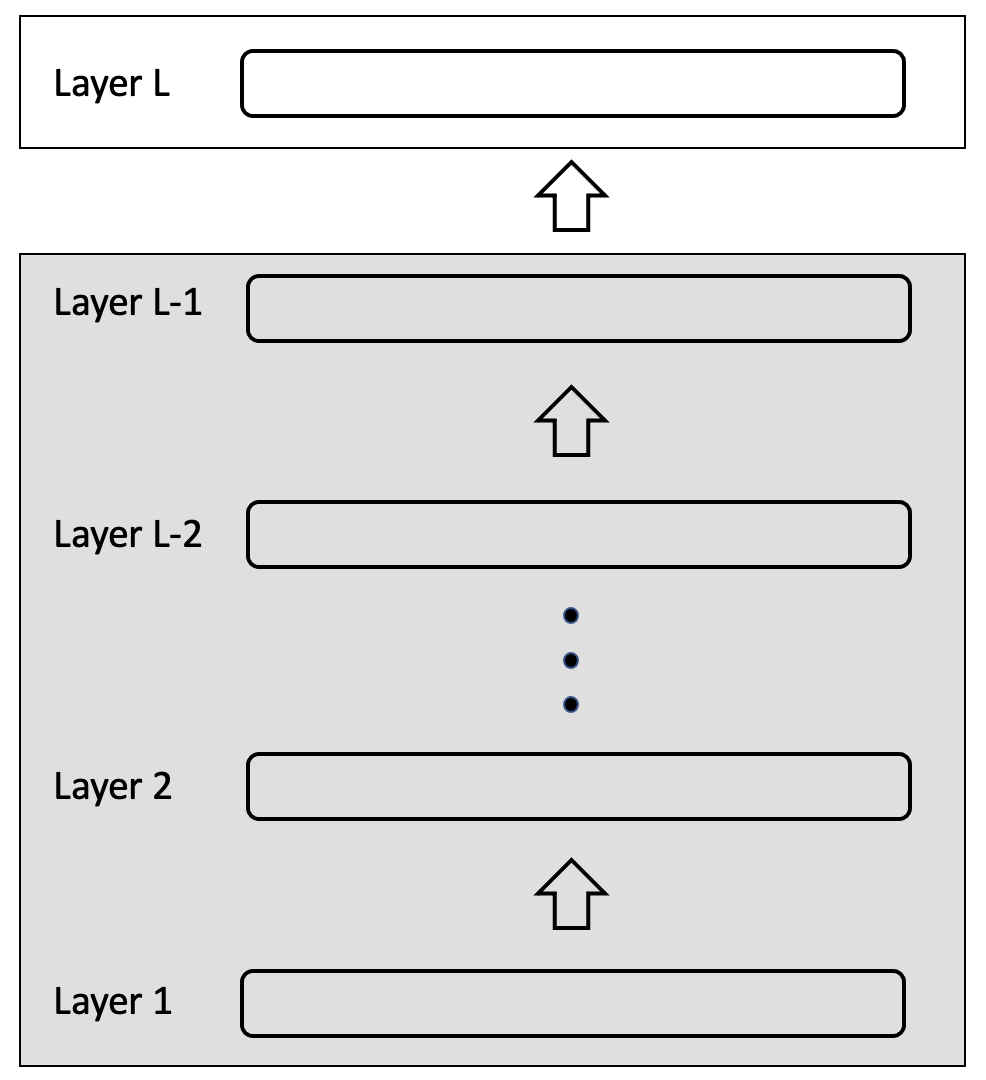}
			\label{fig:1-LPM}}
        \hspace{0.01in} 
        	\subfigure[2-Layer-Peeled Model]{
			\centering
		\includegraphics[scale=0.22]{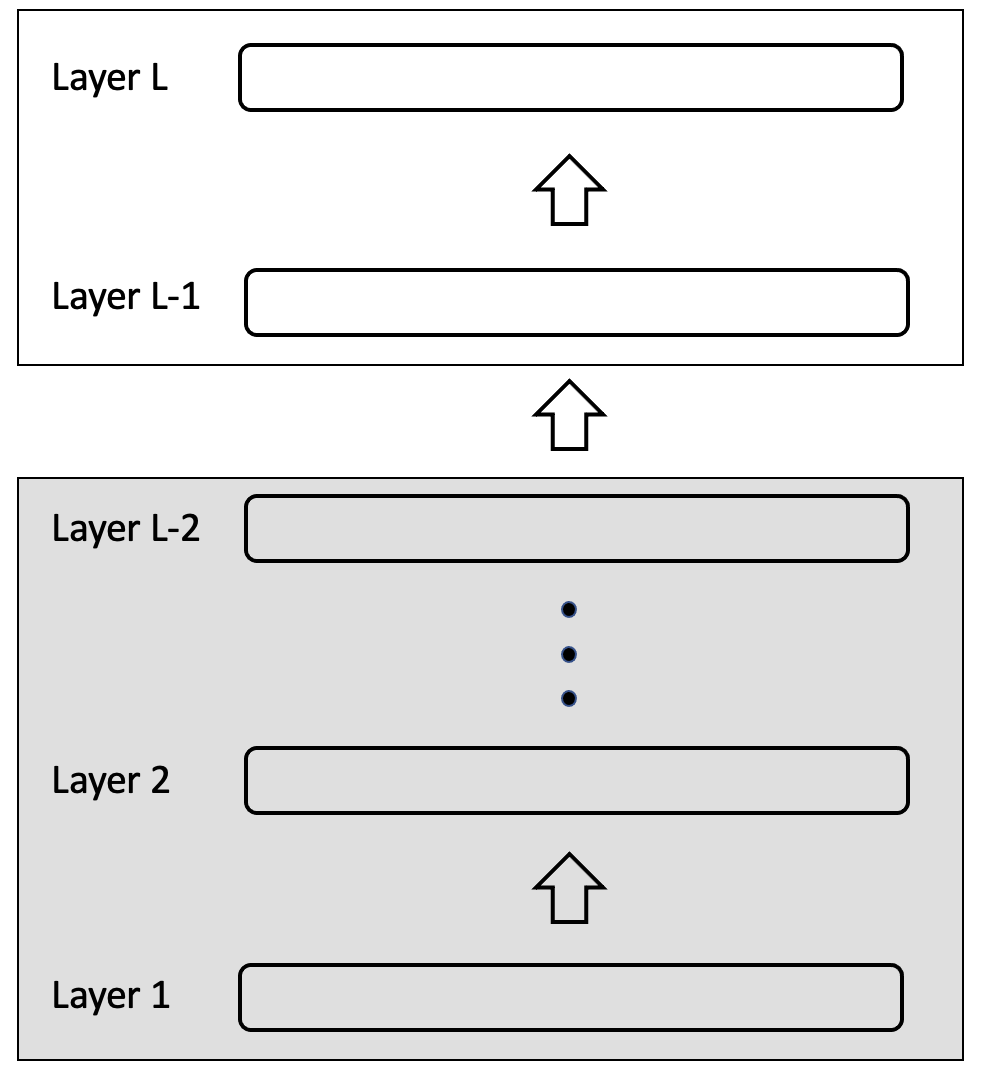}
			\label{2-LPM}}
		\caption{Illustration of Layer-Peeled Models. The right panel represents the 2-Layer-Peeled Model, which is discussed in Section~\ref{sec:discuss}. For each panel, we preserve the details of the white (top) box, whereas the gray (bottom) box is modeled by a simple decision variable for every training example.}
\label{fig:layer-peeled-model}
\end{figure}



In relating  \eqref{eq:intro_peeled} to \eqref{eq:dl_opt}, a first simple observation is that $\bff(\bx_{k,i}; \wf) = \bW_L\sigma(\bW_{L-1} \sigma(\cdots \sigma(\bW_1\bx_{k,i}) \cdots ))$ in \eqref{eq:dl_opt} is replaced by $\bW_L \bh_{k,i}$ in \eqref{eq:intro_peeled}. Put differently, the black-box nature of the last-layer features, namely $\sigma(\bW_{L-1} \sigma(\cdots \sigma(\bW_1\bx_{k,i}) \cdots ))$, is now modeled by a simple decision variable $\bh_{k,i}$ for each training example, with an overall constraint on their $\ell_2$ norm. Intuitively speaking, this simplification is done by \textit{peeling} off the topmost layer from the neural network. Thus, we call the optimization program (\ref{eq:intro_peeled}) the \textit{1-Layer-Peeled Model}, or simply the \textit{Layer-Peeled Model}.


At a high level, the Layer-Peeled Model takes a \textit{top-down} approach to the analysis of deep neural networks. As illustrated in Figure~\ref{fig:layer-peeled-model}, the essence of the modeling strategy is to break down the neural network from top to bottom, specifically singling out the topmost layer and modeling all bottom layers collectively as a single variable. In fact, the top-down perspective that we took in the development of the Layer-Peeled Model was inspired by a recent breakthrough made by Papyan, Han, and Donoho~\cite{papyan2020prevalence}, who discovered a mathematically elegant and pervasive phenomenon termed neural collapse in deep learning training. This top-down approach was also taken in \cite{webb1990optimised,soudry2018implicit,oymak2020toward,yu2020learning,shamir2020gradient} to investigate various aspects of deep learning models.


\subsection{Two Applications}

Despite its plausibility, the ultimate test of the Layer-Peeled Model lies in its ability to faithfully approximate deep learning models through explaining empirical observations and, better, predicting new phenomena. In what follows, we provide convincing evidence that the Layer-Peeled Model is up to this task by presenting two findings. To be concrete, we remark that the results below are concerned with well-trained deep learning models, which correspond to, in rough terms, (near) optimal solutions of  \eqref{eq:dl_opt}.



\paragraph{Balanced Data.}
Roughly speaking, neural collapse~\cite{papyan2020prevalence} refers to the emergence of certain geometric patterns of the last-layer features $\sigma(\bW_{L-1} \sigma(\cdots \sigma(\bW_1\bx_{k,i}) \cdots ))$ and the last-layer classifiers $\bW_L$, when the neural network for \textit{balanced} classification problems is well-trained in the sense that it is toward not only zero misclassification error but also negligible\footnote{Strictly speaking, in the presence of an $\ell_2$ regularization term, which is equivalent to weight decay, the cross-entropy loss evaluated at any global minimizer of  \eqref{eq:dl_opt} is bounded away from $0$.} cross-entropy loss. Specifically, the authors observed the following properties in their massive experiments: the last-layer features from the same class tend to be very close to their class mean; these $K$ class means centered at the global-mean have the same length and form the maximally possible equal-sized angles between any pair; moreover, the last-layer classifiers become dual to the class means in the sense that they are equal to each other for each class up to a scaling factor. See a more precise description in Section~\ref{sec:related_w}.


While it seems hopeless to rigorously prove neural collapse for multiple-layer neural networks~\eqref{eq:dl_opt} at the moment, alternatively, we seek to show that this phenomenon emerges in the surrogate model \eqref{eq:intro_peeled}. More precisely, when the size of each class $n_k = n$ for all $k$, is it true that any global minimizer $\bW_L^\star = \left[ \bw_1^\star, \ldots, \bw_K^\star \right]^\top, \bH^\star = [\bh_{k,i}^\star: 1 \le k \le K, 1 \le i \le n]$ of  \eqref{eq:intro_peeled} exhibits neural collapse? The following result answers this question in the affirmative:

\begin{theorem*}
Neural collapse occurs in the Layer-Peeled Model.
\end{theorem*}

A formal statement of this result and a detailed discussion are given in Section~\ref{sec:results}. 

This result applies to a family of loss functions $\cL$, particularly including the cross-entropy loss and the contrastive loss (see, e.g., \cite{chen2020simple}). As an immediate implication, this result provides evidence of the Layer-Peeled Model's ability to characterize well-trained deep learning models.  



\paragraph{Imbalanced Data.} While a surrogate model would be satisfactory if it explains some already observed phenomenon, we set a \textit{higher} standard for the model, asking whether it can predict a \textit{new} common empirical pattern. Encouragingly, the Layer-Peeled Model happens to meet this standard. Specifically, we consider training deep learning models on imbalanced datasets, where some classes contain many more training examples than others. Despite the pervasiveness of imbalanced classification in many practical applications~\cite{johnson2019survey}, the literature remains scarce on its impact on the trained neural networks from a theoretical standpoint. Here we provide mathematical insights into this problem by using the Layer-Peeled Model. In the following result, we consider optimal solutions to the Layer-Peeled Model on a dataset with two different class sizes: the first $K_A$ majority classes each contain $\barN_A$ training examples ($n_1 = n_2 = \dots = n_{K_A} = \barN_A$), and the remaining $K_B := K - K_A$ minority classes each contain $\barN_B$ examples ($n_{K_A+1} = n_{K_A+2}= \dots = n_{K} = \barN_B$). We call $R := \barN_A/ \barN_B > 1$ the imbalance ratio. 


\begin{theorem*}
In the Layer-Peeled Model, the last-layer classifiers corresponding to the minority classes, namely $\bw^\star_{K_A+1}, \bw^\star_{K_A+2}, \ldots, \bw^\star_K$, collapse to a single vector when $R$ is sufficiently large.
\end{theorem*}

This result is elaborated on in Section~\ref{sec:imbalanced}. The derivation involves some novel elements to tackle the nonconvexity of the Layer-Peeled Model \eqref{eq:intro_peeled} and the asymmetry due to the imbalance in class sizes.


\begin{figure}[!htp]
		\centering
		\includegraphics[scale=0.35]{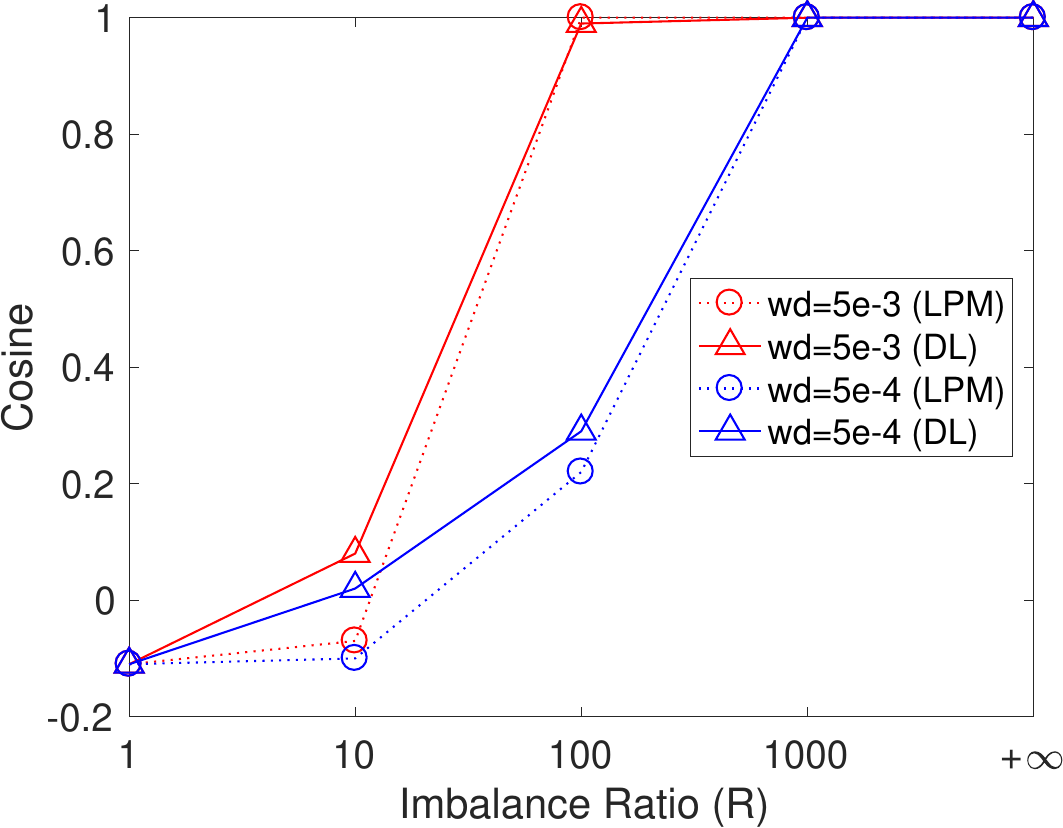}
		\caption{
		Minority Collapse predicted by the Layer-Peeled Model (LPM, in dotted lines) and empirically observed in deep learning (DL, in solid lines) on imbalanced datasets with $K_A=7$ and $K_B = 3$. The $y$-axis denotes the average cosine of the angles between any pair of the minority classifier $\bw_{K_A+1}^\star, \ldots, \bw_K^\star$ for both LPM and DL. The datasets we use are subsets of the CIFAR10 datasets \citep{krizhevsky2009learning} and the size of the majority classes is fixed to $5000$. The experiments use VGG13 \citep{simonyan2014very} as the deep learning architecture, with weight decay (wd) $\lambda = 5 \times 10^{-3}, 5\times 10^{-4}$. The prediction is especially accurate in capturing the phase transition point where the cosine becomes $1$ or, equivalently, the minority classifiers become parallel to each other. More details can be found in Section \ref{subsec:experiments}.}	
\label{fig:simulation-weight-decay}
\end{figure}

In slightly more detail, we identify a phase transition as the imbalance ratio $R$ increases: when $R$ is below a threshold, the minority classes are distinguishable in terms of their last-layer classifiers; when $R$ is above the threshold, they become indistinguishable. While this phenomenon is merely predicted by the simple Layer-Peeled Model \eqref{eq:intro_peeled}, it appears in our computational experiments on deep neural networks. More surprisingly, our prediction of the phase transition point is in excellent agreement with the experiments, as shown in Figure~\ref{fig:simulation-weight-decay}.

This phenomenon, which we refer to as \textit{Minority Collapse}, reveals the fundamental difficulty in using deep learning for classification when the dataset is widely imbalanced, even in terms of optimization, not to mention generalization. This is not a priori evident given that neural networks have a large approximation capacity (see, e.g., \cite{yarotsky2017error}). Importantly, Minority Collapse emerges at a finite value of the imbalance ratio rather than at infinity. Moreover, even below the phase transition point of this ratio, we find that the angles between any pair of the minority classifiers are already smaller than those of the majority classes, both theoretically and empirically.

\subsection{Related Work}\label{sec:related_w}
There is a venerable line of work attempting to gain insights into deep learning from a theoretical point of view~\cite{jacot2018neural,du2018gradient,allen2019convergence,zou2018stochastic,chizat2019lazy,ma2019comparative,bartlett2017spectrally,he2019local,poggio2020theoretical,MeiE7665,sirignano2019mean,rotskoff2018neural,fang2020modeling,kuditipudi2019explaining,shi2020learning}. See also the reviews \cite{9326403,he2020recent,fan2019selective,sun2019optimization} and references therein. 

The work of neural collapse by \cite{papyan2020prevalence} in this body of work is particularly noticeable with its mathematically elegant and convincing insights. In brief, \cite{papyan2020prevalence} observed the following four properties of the last-layer features and classifiers in deep learning training on balanced datasets:\footnote{See the mathematical description of neural collapse in Theorem~\ref{theo: cross-entropy balance}.}
\begin{itemize}
    \item[] \hypertarget{(NC1)}{(NC1)}
     Variability collapse: the within-class variation of the last-layer features becomes $0$,  which means that these features collapse to their class means. 
    \item[] \hypertarget{(NC2)}{(NC2)} The class means centered at their global mean collapse to the vertices of a simplex equiangular tight frame (ETF) up to scaling.\label{nc2}
     \item[] \hypertarget{(NC3)}{(NC3)}  Up to scaling, the last-layer classifiers each collapse to the corresponding class means.\label{nc3}
     \item[] \hypertarget{(NC4)}{(NC4)} The network's decision collapses to simply choosing the class with the closest Euclidean distance between its class mean and the activations of the test example.\label{nc4}
\end{itemize}


 Now we give the formal definition of ETF \cite{Strohmer2003, papyan2020prevalence}. %
\begin{definition}\label{def: ETF}
A $K$-simplex ETF is a collection of points in $\RR^p$ specified by the columns of the matrix
$$  \bM = \sqrt{\frac{K}{K-1}} \bP \left( \bI_K - \frac{1}{K}\cI_K\cI_K^\top \right), $$
where $\bI_K\in \RR^{K\times K}$ is the identity matrix, $\cI_K$ is the ones vector, and
$\bP\in \RR^{p\times K}$  ($p\ge K$)\footnotemark  ~is a partial orthogonal matrix such that $\bP^\top \bP = \bI_K$.
\end{definition}
\footnotetext{To be complete, we only require $p\geq K-1$. When $p=K-1$, we can choose $\bP$ such that $\left[\bP^\top, \cI_K\right]$ is an orthogonal matrix.}

A common setup of the experiments for validating neural collapse is the use of the cross-entropy loss with $\ell_2$ regularization, which corresponds to weight decay in stochastic gradient descent. Based on convincing arguments and numerical evidence, \cite{papyan2020prevalence} demonstrated that the symmetry and stability of neural collapse improve deep learning training in terms of generalization, robustness, and interpretability. Notably, these improvements occur with the benign overfitting phenomenon (see~\cite{ma2018power,belkin2019reconciling,liang2020just,bartlett2020benign,li2020benign}) during the terminal phase of training---when the trained model interpolates the in-sample training data. 

In passing, we remark that concurrent works \cite{mixon2020neural,wojtowytsch2020emergence,lu2020neural,ergen2020convex} produced neural collapse using different surrogate models. In slightly more detail, \cite{mixon2020neural,wojtowytsch2020emergence,lu2020neural} obtained their models by peeling off the topmost layer. The difference, however, is that \cite{wojtowytsch2020emergence,lu2020neural} considered models that impose a norm constraint for each class, as opposed to an overall constraint as employed in the Layer-Peeled Model. Moreover, \cite{mixon2020neural} analyzed gradient flow with an unconstrained features model using the squared loss instead of the cross-entropy loss. The work \cite{ergen2020convex} provided an insightful perspective for the analysis of neural networks using convex duality. Relying on a convex formulation that is in the same spirit as our semidefinite programming relaxation, the authors of \cite{ergen2020convex} observed neural collapse in their ReLU-based model by leveraging strong duality under certain conditions.

\section{Derivation}
\label{sec:derive}


In this section, we heuristically derive the Layer-Peeled Model as an analytical surrogate for well-trained neural networks. Although our derivation lacks rigor, the goal is to reduce the complexity of the optimization problem \eqref{eq:dl_opt} while roughly preserving its structure. Notably, the penalty $\frac{\lambda}{2} \|\wf\|^2$ corresponds to weight decay used in training deep learning models, which is necessary for preventing this optimization program from attaining its minimum at infinity when $\cL$ is the cross-entropy loss. For simplicity, we omit the biases in the neural network $\bff(\bx_{k,i}; \wf)$.


Taking a top-down standpoint, our modeling strategy starts by singling out the weights $\bW_L$ of the topmost layer and rewriting \eqref{eq:dl_opt} as
\begin{equation}\label{eq:dp_separate}
\begin{aligned}
\min_{\bW_L, \bH} ~& \frac{1}{N} \sum_{k=1}^K\sum_{i=1}^{n_k} \cL( \bW_L \bh(\bx_{k,i}; \bW_{-L}), \by_k)\\
&+ \frac{\lambda}{2} \|\bW_L\|^2 + \frac{\lambda}{2} \|\bW_{-L}\|^2,
\end{aligned}
\end{equation}
where the last-layer feature function $\bh(\bx_{k,i}; \bW_{-L}) := \sigma(\bW_{L-1} \sigma(\cdots \sigma(\bW_1\bx_{k,i}) \cdots ))$ and $\bW_{-L}$ denotes the weights from all layers but the last layer. From the Lagrangian dual viewpoint, a minimum of the optimization program above is also an optimal solution to
\begin{equation}\label{eq:lp_der_mid}
\begin{aligned}
\min_{\bW_L, \bW_{-L}} \quad & \frac{1}{N} \sum_{k=1}^K\sum_{i=1}^{n_k} \cL( \bW_L \bh(\bx_{k,i}; \bW_{-L}), \by_k)\\
\mathrm{s.t.} \quad & \|\bW_L\|^2 \le C_1, \|\bW_{-L}\|^2 \le C_2,
\end{aligned}
\end{equation}
for some positive numbers $C_1$ and $C_2$.\footnote{Denoting by $(\bW_L^\star, \bW_{-L}^\star)$ an optimal solution to \eqref{eq:dp_separate}, then we can take $C_1 = \|\bW_L^\star\|^2$ and $C_2 = \|\bW_{-L}^\star\|^2$.} To clear up any confusion, note that due to its nonconvexity, \eqref{eq:dp_separate} may admit multiple global minima and each in general corresponds to different values of $C_1, C_2$. Next, we can equivalently write \eqref{eq:lp_der_mid} as
\begin{equation}\label{eq:lp_der_almost}
\begin{aligned}
\min_{\bW_L, \bH} \quad & \frac{1}{N} \sum_{k=1}^K\sum_{i=1}^{n_k} \cL( \bW_L \bh_{k,i}, \by_k)\\
\mathrm{s.t.} \quad & \|\bW_L\|^2 \le C_1,\\
 \quad & \bH \in \left\{ \bH(\bW_{-L}): \|\bW_{-L}\|^2 \le C_2 \right\},
\end{aligned}
\end{equation}
where $\bH = [\bh_{k,i}: 1 \le k \le K, 1 \le i \le n_k]$ denotes a decision variable and the function $\bH(\bW_{-L})$ is defined as $\bH(\bW_{-L}) := \left[ \bh(\bx_{k,i}; \bW_{-L}): 1 \le k \le K, 1 \le i \le n_k \right]$ for any $\bW_{-L}$. 

To simplify \eqref{eq:lp_der_almost}, we make the \textit{ansatz} that the range of $\bh(\bx_{k,i}; \bW_{-L})$ under the constraint $\|\bW_{-L}\|^2 \le C_2$ is approximately an ellipse in the sense that 
\begin{equation}\label{eq:ansatz}
\begin{aligned}
&\left\{ \bH(\bW_{-L}): \|\bW_{-L}\|^2 \le C_2 \right\} \\
&\quad\quad\quad\quad\quad\quad\quad\quad\approx \left\{\bH: \sum_{k=1}^K \frac1{n_k} \sum_{i=1}^{n_k} \|\bh_{k,i}\|^2 \le C_2' \right\}
\end{aligned}
\end{equation}
for some $C_2' > 0$. Loosely speaking, this ansatz asserts that $\bH$ should be regarded as a variable in an $\ell_2$ space. To shed light on the rationale behind the ansatz, note that $\bh_{k,i}$ intuitively lives in the dual space of $\bW$ in view of the appearance of the product $\bW \bh_{k,i}$ in the objective. Furthermore, $\bW$ is in an $\ell_2$ space for the $\ell_2$ constraint on it. Last, note that $\ell_2$ spaces are self-dual. 


Inserting this approximation into \eqref{eq:lp_der_almost}, we obtain the following optimization program, which we call the Layer-Peeled Model:
\begin{equation}\label{eq: NN simplified model}
\begin{aligned}
\min_{\bW,\bTheta} \quad& \frac{1}{N} \sum_{k=1}^K \sum_{i=1}^{n_k} \cL( \bW\btheta_{k,i}, \by_k )\\
\mathrm{s.t.}\quad &     \frac{1}{K}\sum_{k=1}^K \left\|\bw_k \right\|^2 \leq E_{W},\\
&\frac{1}{K}\sum_{k=1}^K \frac{1}{n_k}\sum_{i=1}^{n_k}\left\|\btheta_{k,i} \right\|^2 \leq E_{\Theta}.
\end{aligned}
\end{equation}
For simplicity, above and henceforth we write $\bW := \bW_L \equiv [\bw_1, \ldots, \bw_K]^\top$ for the last-layer classifiers/weights and the thresholds $E_W = C_1/K$ and $E_{\Theta} = C_2'/K$.

This optimization program is nonconvex but, as we will show soon, is generally mathematically tractable for analysis. On the surface, the Layer-Peeled Model has no dependence on the data $\{\bx_{k,i}\}$, which however is not the correct picture, since the dependence has been implicitly incorporated into the threshold $E_H$. 


In passing, we remark that neural collapse does \textit{not} emerge if the second constraint of \eqref{eq: NN simplified model} uses the $\ell_q$ norm for any $q \ne 2$ (strictly speaking, $\ell_q$ is not a norm when $q < 1$), in place of the $\ell_2$ norm. This fact in turn justifies in part the ansatz \eqref{eq:ansatz}. This result is formally stated in Proposition~\ref{theo:counter example} in Section~\ref{sec:discuss}.

\section{Layer-Peeled Model for Explaining Neural Collapse}\label{sec:results}
In this section, we consider training deep neural networks on a balanced dataset---that is, $n_k = n$ for all classes $1 \le k \le K$. Our main finding is that the Layer-Peeled Model displays the neural collapse phenomenon, just as in deep learning training~\cite{papyan2020prevalence}. The proofs are all deferred to SI Appendix. Throughout this section, we assume $p \ge K-1$ unless otherwise specified. This assumption is satisfied in many popular network architectures, where $p$ is usually tens or hundreds of times of $K$.

\subsection{Cross-Entropy Loss}

The cross-entropy loss is perhaps the most popular loss used in training deep learning models for classification tasks. This loss function takes the form
\begin{equation}\label{eq:cross-entropy}
    \cL( \bz, \by_k  ) = -\log\left(    \frac{\exp(\bz(k))}{\sum_{\kk=1}^K \exp(\bz(\kk))} \right),
\end{equation}
where $\bz(k')$ denotes the $k'$-th entry of the logit $\bz$. Recall that $\by_k$ is the label of the $k$-th class and the feature $\bz$ is set to $\bW \bh_{k,i}$ in the Layer-Peeled Model \eqref{eq: NN simplified model}. In contrast to the complex deep neural networks, which are often considered a black-box,  the Layer-Peeled Model is much easier to deal with. As an exemplary use case, the following result shows that any minimizer of the Layer-Peeled Model \eqref{eq: NN simplified model} with the cross-entropy loss admits an almost closed-form expression.


\begin{theorem}\label{theo: cross-entropy balance}
In the balanced case,  any global minimizer $\bW^\star \equiv \left[ \bw_1^\star, \ldots, \bw_K^\star \right]^\top, \bH^\star \equiv [\bh_{k,i}^\star: 1 \le k \le K, 1 \le i \le n]$ of \eqref{eq: NN simplified model} with the cross-entropy loss obeys
\begin{equation}\label{eq:solution}
\btheta_{k,i}^\star = C \bw_k^\star = C' \bmm_k^\star
\end{equation}
for all $1 \le i \le n, 1 \le k \le K$, where the constants $C = \sqrt{\Etheta/\Ew}, C' = \sqrt{\Etheta}$, and the matrix $[\bmm_1^\star, \ldots, \bmm_K^\star]$ forms a $K$-simplex ETF specified in Definition \ref{def: ETF}.

\end{theorem}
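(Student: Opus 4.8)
The plan is to prove a lower bound on the objective of \eqref{eq: NN simplified model} (with the cross-entropy loss) that is attained exactly by an equiangular tight frame, and then extract the theorem from the conditions under which every inequality used becomes an equality. Write the cross-entropy term as
\[
\cL(\bW\bh_{k,i},\by_k)=\log\Bigl(1+\sum_{k'\ne k}e^{\langle\bw_{k'}-\bw_k,\,\bh_{k,i}\rangle}\Bigr),
\]
and note that the feasible set of \eqref{eq: NN simplified model} is compact, so a global minimizer $(\bW^\star,\bH^\star)$ exists. First I would apply AM--GM to the $K-1$ inner exponentials, $\sum_{k'\ne k}e^{\langle\bw_{k'}-\bw_k,\bh_{k,i}\rangle}\ge(K-1)e^{y_{k,i}}$ with $y_{k,i}:=\frac{K}{K-1}\langle\bar\bw-\bw_k,\bh_{k,i}\rangle$ and $\bar\bw:=\frac1K\sum_k\bw_k$, so that $\cL(\bW\bh_{k,i},\by_k)\ge\phi(y_{k,i})$ for the increasing and strictly convex function $\phi(t):=\log(1+(K-1)e^{t})$. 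Averaging over all $N=Kn$ examples and using Jensen for $\phi$, the objective is at least $\phi\bigl(\frac1{Kn}\sum_{k,i}y_{k,i}\bigr)$, and a short manipulation rewrites the argument as $-\frac{1}{K-1}\sum_k\langle\bw_k-\bar\bw,\bar\bh_k-\bar\bh\rangle$, where $\bar\bh_k:=\frac1n\sum_i\bh_{k,i}$ and $\bar\bh:=\frac1K\sum_k\bar\bh_k$. Finally, Cauchy--Schwarz in $\RR^{pK}$ together with $\sum_k\|\bw_k-\bar\bw\|^2\le\sum_k\|\bw_k\|^2\le KE_W$ and $\sum_k\|\bar\bh_k-\bar\bh\|^2\le\sum_k\|\bar\bh_k\|^2\le\sum_k\frac1n\sum_i\|\bh_{k,i}\|^2\le KE_\Theta$ bounds $\sum_k\langle\bw_k-\bar\bw,\bar\bh_k-\bar\bh\rangle$ by $K\sqrt{E_WE_\Theta}$; since $\phi$ is increasing, the objective is at least $\log\bigl(1+(K-1)e^{-\frac{K}{K-1}\sqrt{E_WE_\Theta}}\bigr)$.

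Next I would check this bound is tight by substituting $\bw_k=\sqrt{E_W}\,\bmm_k$ and $\bh_{k,i}=\sqrt{E_\Theta}\,\bmm_k$ for all $i$, where $[\bmm_1,\dots,\bmm_K]$ is a unit-norm $K$-simplex ETF (which exists because $p\ge K-1$); using $\langle\bmm_{k'},\bmm_k\rangle=-\frac1{K-1}$ for $k'\ne k$, the objective value is exactly the right-hand side above. Hence any global minimizer realizes equality in each step, and the theorem follows by reading off the equality conditions. Equality in the $\|\cdot\|^2$-Jensen bound forces $\bh^\star_{k,i}=\bar\bh^\star_k$ for all $i$ (within-class collapse, so $\btheta^\star_{k,i}=\bar\bh^\star_k$); equality in AM--GM and in the $\phi$-Jensen bound forces $\langle\bw^\star_{k'},\bh^\star_{k,i}\rangle$ to be constant over $k'\ne k$ and $\langle\bw^\star_k,\bh^\star_{k,i}\rangle$ to equal a single constant $\alpha$ over all $k,i$; equality in Cauchy--Schwarz gives $\bar\bh^\star_k=\lambda\bw^\star_k$ for a common $\lambda\ge 0$, which must be positive since $\lambda=0$ yields objective $\log K$ exceeding the bound; and tightness of the norm inequalities gives $\bar\bw^\star=\bar\bh^\star=0$, $\|\bw^\star_k\|^2=E_W$ for every $k$, and both constraints of \eqref{eq: NN simplified model} active. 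Combining, $\lambda=\sqrt{E_\Theta/E_W}$, and using $\sum_{k'}\bw^\star_{k'}=0$ the relation $\langle\bw^\star_{k'},\bh^\star_{k,i}\rangle=-\frac{\alpha}{K-1}$ for $k'\ne k$ becomes $\langle\bw^\star_{k'},\bw^\star_k\rangle=-E_W/(K-1)$; thus the unit vectors $\bw^\star_k/\sqrt{E_W}$ have Gram matrix $\frac{K}{K-1}(\bI_K-\frac1K\cI_K\cI_K^\top)$ and (since $p\ge K-1$) form a $K$-simplex ETF $\bmm^\star_k$ as in Definition~\ref{def: ETF}, while $\btheta^\star_{k,i}=\bar\bh^\star_k=\lambda\bw^\star_k=\sqrt{E_\Theta/E_W}\,\bw^\star_k=\sqrt{E_\Theta}\,\bmm^\star_k$, so $C=\sqrt{E_\Theta/E_W}$ and $C'=\sqrt{E_\Theta}$.

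I expect the crux to be twofold. On the quantitative side, the groupings in the AM--GM and Jensen steps must be chosen so that the ETF configuration saturates all three inequalities at once; with a different bracketing the lower bound would fail to be tight. On the structural side, the equality bookkeeping is delicate: the centering $\bar\bw^\star=0$, the equal-norm property of the classifiers, and the exact value $-1/(K-1)$ of the normalized off-diagonal inner products emerge only by combining several equality cases, and one still has to argue that a frame with this Gram matrix is a simplex ETF in the precise sense of Definition~\ref{def: ETF}. One point to treat carefully, since the statement concerns \emph{every} global minimizer: one cannot preemptively replace the within-class features by their class means---within-class collapse must be produced as the strict case of the $\|\cdot\|^2$-Jensen inequality inside the already-tight chain. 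It may streamline the write-up to first record, as a short lemma, that any global minimizer has both constraints active and satisfies $\bar\bw^\star=0$, via a shift-and-rescale argument: $\bw_k\mapsto\bw_k-\bar\bw$ leaves the loss unchanged but slackens the weight constraint, after which rescaling $\bW$ upward strictly decreases the loss as long as the current objective is below $\log K$, which holds at any minimizer.
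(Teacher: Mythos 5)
Your argument is correct, and it reaches the theorem by a genuinely different route from the paper. The paper linearizes each cross-entropy term via the concavity of $\log$ (a tangent-line bound with constants $C_a=\exp(\sqrt{\Etheta\Ew})$, $C_b=\exp(-\sqrt{\Etheta\Ew}/(K-1))$ that must be tuned in advance so the bound is tight at the ETF), reducing the objective to the symmetric bilinear form $\sum_i\bigl[(\sum_k\btheta_{k,i})^\top(\sum_k\bw_k)-K\sum_k\btheta_{k,i}^\top\bw_k\bigr]$, which it then bounds per sample by Young's inequality and the norm constraints, finally assembling the equality conditions through Lemma~\ref{lemma: to ETF}. You instead keep the nonlinearity: AM--GM on the $K-1$ exponentials, Jensen for the increasing strictly convex $\phi(t)=\log(1+(K-1)\mathrm{e}^t)$, and Cauchy--Schwarz on the \emph{centered class means}. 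What this buys you is that no constants need to be guessed (monotonicity plus Jensen make the bound automatically tight at the symmetric configuration), and the centering makes $\bar\bw^\star=\bm 0$, $\bar\bh^\star=\bm 0$ and the within-class collapse fall out directly from tightness of the norm factors, whereas the paper extracts these from the Young/Cauchy--Schwarz equality cases. What the paper's linearization buys is a set of equality conditions already in the inner-product form consumed by Lemma~\ref{lemma: to ETF}, and a template that it reuses almost verbatim in the imbalanced analysis (Lemmas~\ref{lemma:mini unblance limit} and~\ref{lemma:imbalance lemma fun1}); interestingly, your AM--GM/Jensen mechanism is close in spirit to the paper's own proof of Theorem~\ref{proposition: general loss balance}, which bounds $\bS(\bz)(k)$ via convexity of $\exp$ and the monotone function $g_0$. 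One small bookkeeping remark: the equal-norm property $\|\bw_k^\star\|^2=\Ew$ does not come from the norm-constraint tightness alone, as your phrasing suggests, but from combining $\bar\bh_k^\star=\lambda\bw_k^\star$ with the Jensen-equality constant $\<\bw_k^\star,\bh_{k,i}^\star\>=\alpha$ (which itself uses $\bar\bw^\star=\bm 0$), and then the active constraint fixes the common value; your ``Combining'' step contains all the needed ingredients, so this is an attribution issue in the write-up rather than a gap.
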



\begin{remark}
Note that the minimizers $(\bW^\star, \bH^\star)$'s are equivalent to each other up to rotation. This is because of the rational invariance of simplex ETFs (see $\bP$ in Definition \ref{def: ETF}). 
\end{remark}

This theorem demonstrates the highly symmetric geometry of the last-layer features and weights of the Layer-Peeled Model, which is precisely the phenomenon of neural collapse. Explicitly, \eqref{eq:solution} says that all within-class (last-layer) features are the same: $\bh_{k,i}^\star = \bh_{k, i'}^\star$ for all $1 \le i, i' \le n$; next, it also says that the $K$ class-mean features $\bh_k^\star := \bh_{k,i}^\star$ together exhibit a $K$-simplex ETF up to scaling, from which we immediately conclude that
\begin{equation}\label{eq:cos}
\cos \measuredangle(\bh_k^{\star},\bh_{\kk}^{\star}) = -\frac{1}{K-1} 
\end{equation}
for any $k \ne k'$ by Definition~\ref{def: ETF};\footnote{Note that the cosine value $-\frac{1}{K-1}$ corresponds to the largest possible angle for any $K$ points that have an equal $\ell_2$ norm and equal-sized angles between any pair. As pointed out in \cite{papyan2020prevalence}, the largest angle implies a large-margin solution~\cite{soudry2018implicit}.} in addition, \eqref{eq:solution} also displays the precise duality between the last-layer classifiers and features. Taken together, these facts indicate that the minimizer $\left(\bW^\star, \bH^\star\right)$ satisfies exactly (\hyperlink{(NC1)}{NC1})--(\hyperlink{(NC3)}{NC3}). Last, Property (\hyperlink{(NC4)}{NC4}) is also satisfied by recognizing that, for any given last-layer features $\bh$, the predicted class is $\argmax_{k} \bw_k^\star \bcdot \bh$, where $\bm a \bcdot \bm b$ denotes the inner product of the two vectors. Note that the prediction satisfies
\[
\argmax_{k}\bw_k^\star\bcdot \bh = \argmax_{k}\bh_k^\star\bcdot \bh =\argmin_{k}\|\bh_k^\star- \bh\|^2.     
\]

Conversely, the presence of neural collapse in the Layer-Peeled Model offers evidence of the effectiveness of our model as a tool for analyzing neural networks. To be complete, we remark that other models were very recently proposed to justify the neural collapse phenomenon \cite{mixon2020neural,wojtowytsch2020emergence,lu2020neural} (see also \cite{poggio2020explicit}). 


\subsection{Extensions to Other Loss Functions}
In the modern practice of deep learning, various loss functions are employed to take into account the problem characteristics. Here we show that the Layer-Peeled Model continues to exhibit the phenomenon of neural collapse for some popular loss functions.



\paragraph{Contrastive Loss.} 



Contrastive losses have been extensively used recently in both supervised and unsupervised deep learning~\cite{pennington2014glove,arora2019theoretical, chen2020simple,baevski2020wav2vec}. These losses pull similar training examples together in their embedding space while pushing apart dissimilar examples. Here we consider the supervised contrastive loss~\cite{khosla2020supervised}, which (in the balanced case) is defined through the last-layer features by introducing $\cL_c$ as 
\begin{equation}\label{eq:contrastive loss}
\frac{1}{\barN}\sum_{j=1}^{\barN} -\log\left( \frac{\exp(\btheta_{k,i}\bcdot \btheta_{k,j} /\tau)}{ \sum_{\kk=1}^K\sum_{\ell=1}^{\barN} \exp (\btheta_{k,i}\bcdot \btheta_{\kk, \ell} /\tau)}    \right),
\end{equation}
where $\tau > 0$ is a parameter. Note that this loss function uses the label information implicitly. As the loss does not involve the last-layer classifiers explicitly, the Layer-Peeled Model in this case takes the form\footnote{In \eqref{eq:contrastive loss}, $\bh_{k,i} \equiv \bh(\bx_{k,i}, \bW_{-L})$ depends on the data, whereas in \eqref{eq:contrastive loss1} $\bh_{k,i}$'s form the decision variable $\bH$.} 
\begin{equation}\label{eq:contrastive loss1}
    \begin{aligned}
    \min_{\bTheta}\quad &  \frac1N \sum_{k=1}^K \sum_{i=1}^n \cL_c (\bh_{k,i}, \by_k)\\
    \mathrm{s.t.}\quad&\frac{1}{K}\sum_{k=1}^K\frac{1}{\barN}\sum_{i=1}^{\barN}\left\|\btheta_{k,i} \right\|^2 \leq \Etheta.
    \end{aligned}
\end{equation}


We show that this Layer-Peeled Model also exhibits neural collapse in its last-layer features, even though the label information is not explicitly  explored in the loss.  


\begin{theorem}\label{proposition: contrastive loss balance}
Any global minimizer of \eqref{eq:contrastive loss1} satisfies
\begin{equation}\label{eq:solution2}
  \bh_{k,i}^\star = \sqrt{E_H}\bmm_k^\star  
\end{equation}
for all $1 \le k \le K$ and $1 \le i \le n$, where $[\bmm_1^\star, \ldots, \bmm_K^\star]$ forms a $K$-simplex ETF.

\end{theorem}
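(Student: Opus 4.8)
The plan is to show that the supervised contrastive objective in \eqref{eq:contrastive loss1}, restricted to the balanced case, is minimized exactly when the features collapse within classes and the $K$ class representatives form a simplex ETF at the extreme of the norm constraint. I would proceed in three stages: first reduce to within-class collapse, then reduce to a Gram-matrix optimization over class means, and finally identify the optimal Gram matrix as that of a simplex ETF.

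First I would establish within-class collapse. Fix the set of class means $\bmm_k = \frac1n\sum_i \btheta_{k,i}$ and the per-class second-moment budget. By strict convexity of $t \mapsto e^{t/\tau}$ and the structure of the log-sum-exp, I would argue (via Jensen applied to the $\btheta_{k,j}$ appearing linearly in the numerator exponent, together with a convexity/symmetry argument on the denominator terms that are symmetric in $\{\btheta_{k,i}\}$) that replacing every $\btheta_{k,i}$ by the class mean $\bmm_k$ cannot increase the objective, and strictly decreases it unless the features were already equal. A cleaner route: since $\cL_c$ only depends on the inner products $\btheta_{k,i}\bcdot\btheta_{\kk,\ell}$, pass to the Gram matrix $\bG = \bTheta^\top\bTheta \succeq 0$; the norm constraint is $\frac1{Kn}\operatorname{tr}(\bG_{\text{blocks}}) \le \Etheta$, and one shows the optimal $\bG$ has constant blocks, i.e. $\btheta_{k,i}\bcdot\btheta_{\kk,\ell}$ depends only on $(k,\kk)$ and on whether $i=\ell$ within a block — then a further averaging argument (the objective is Schur-type concave in the within-block off-diagonal entries, or simply convexity in those entries combined with the fact that decreasing them relaxes nothing) forces the within-class Gram entries up to their diagonal value, which is exactly $\btheta_{k,i}=\btheta_{k,i'}$. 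This reduces the problem to choosing $K$ vectors $\bmm_1^\star,\dots,\bmm_K^\star$ with $\frac1K\sum_k\|\bmm_k\|^2 \le \Etheta$.

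Second, with collapsed features the loss becomes, up to constants, $-\log\bigl(e^{\|\bmm_k\|^2/\tau} / (n e^{\|\bmm_k\|^2/\tau} + n\sum_{\kk\ne k} e^{\bmm_k\bcdot\bmm_{\kk}/\tau})\bigr)$ averaged over $k$. By symmetry of the problem under permuting classes and the convexity of log-sum-exp, I expect the optimum to be fully symmetric: $\|\bmm_k^\star\|^2 = E$ constant, and $\bmm_k^\star\bcdot\bmm_{\kk}^\star = c$ constant for $k\ne\kk$. To nail this down I would use a symmetrization argument — average a putative optimizer over the action of $S_K$ (realized by orthogonal transformations once we note $p \ge K-1$), and invoke convexity of the objective as a function of the Gram matrix to argue the symmetrized solution is at least as good. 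Given the symmetric form, $\sum_k \bmm_k^\star$ contributes a nonnegative quantity $\|\sum_k\bmm_k^\star\|^2 = KE + K(K-1)c \ge 0$, so $c \ge -E/(K-1)$, with equality iff the means are centered; since the objective is decreasing in $E$ (so the norm constraint is active, $E = \Etheta$) and increasing in $c$, the optimum takes $c = -E/(K-1)$, which is precisely the ETF inner-product structure. Then $\bmm_k^\star = \sqrt{\Etheta}\,\bmm_k^\star$-of-Definition~\ref{def: ETF} after matching the normalization, giving \eqref{eq:solution2}.

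The main obstacle I anticipate is the rigorous within-class collapse step: the supervised contrastive loss is not jointly convex in $\bTheta$, and the $\btheta_{k,i}$ that is being "averaged" also appears inside the denominator of every other example's summand, so a naive Jensen bound does not immediately close. The fix is to work entirely in the Gram-matrix / inner-product representation, where the objective becomes a composition of an affine map with log-sum-exp (hence convex in the entries of $\bG$), and then combine this convexity with the permutation and block symmetries of the constraint set to push the optimizer to the unique symmetric point; one must also check the feasibility/attainment issue (the constraint set in $\bG$ is compact, so a minimizer exists, and the PSD constraint plus $p \ge K-1$ guarantees the symmetric Gram matrix is realizable by actual vectors in $\RR^p$). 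A secondary technical point is ruling out the degenerate all-zero solution and confirming the norm constraint is active, which follows because decreasing all $\|\bmm_k\|$ proportionally strictly decreases the contrastive loss.
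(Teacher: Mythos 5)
Your Gram-matrix reformulation is sound in itself: the objective depends only on inner products, and each summand is an affine function of the Gram entries plus a log-sum-exp of affine functions, hence convex in the Gram matrix; your symmetric-case computation ($E=\Etheta$, $c=-E/(K-1)$) also correctly identifies the ETF among fully symmetric configurations. But the two steps carrying the real content of the theorem are not established. First, within-class collapse: the reasons you give (``Schur-type concavity'' or ``convexity in those entries combined with the fact that decreasing them relaxes nothing'') do not prove that pushing the within-block off-diagonal Gram entries up to the diagonal lowers the loss. Those same entries appear inside every example's log-sum-exp denominator, so raising them also raises the loss through the denominator, and convexity along a feasible interval says nothing about where the minimum sits on that interval. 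A correct argument exists but needs a quantitative comparison, e.g.\ after block-averaging the softmax mass placed on any single same-class example is strictly below $1/n$ (the diagonal dominates the off-diagonal and the cross-class terms are positive), so the linear gain of rate $\tfrac{n-1}{n\tau}$ strictly beats the denominator increase; nothing of this sort appears in your sketch, and one must also check that the collapsed Gram matrix remains realizable. Second, the theorem asserts that \emph{every} global minimizer is an ETF, whereas symmetrization by convexity only yields that \emph{some} symmetric minimizer exists. To conclude all minimizers collapse you must run the equality case of Jensen: the log-sum-exp part is strictly convex except along uniform shifts of a row, the objective is exactly invariant under $G \mapsto G + t\,\cI_N\cI_N^\top$, and this flat direction has to be excluded via the active trace constraint and positive semidefiniteness. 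None of this is addressed.

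A further structural point: the set of Gram matrices realizable by features in $\RR^p$ carries a rank-$\le p$ constraint and is \emph{not} convex when $p < Kn$, so your ``convex problem'' must be the rank-relaxed PSD program, together with a transfer argument (the ETF attains the relaxed optimum and has rank $K-1\le p$, so the two optima coincide and every original minimizer is a relaxed minimizer); you only note realizability of the final symmetric point. For comparison, the paper sidesteps all of these issues with a direct bound: it linearizes each $-\log$ of a convex combination using concavity of $\log$ with weights chosen so the inequality is tight exactly at the ETF, controls the resulting bilinear expression by Young's inequality and the norm constraint, and then reads off within-class collapse, zero global mean, and the equal-norm, equal-angle structure from the equality conditions. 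Your route can probably be completed, but as written the collapse step and the ``any global minimizer'' claim are genuine gaps.
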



Theorem~\ref{proposition: contrastive loss balance} shows that the contrastive loss in the associated Layer-Peeled Model does a perfect job in pulling together training examples from the same class. Moreover, as seen from the denominator in \eqref{eq:contrastive loss}, minimizing this loss would intuitively render the between-class inner products of last-layer features as small as possible, thereby pushing the features to form the vertices of a $K$-simplex ETF up to scaling.

\paragraph{Softmax-Based Loss.} 
The cross-entropy loss can be thought of as a softmax-based loss. To see this, define the softmax transform as
\begin{equation}\nonumber
   \bS(\bz) =   \left[\frac{\exp(\bz(1))}{\sum_{k=1}^K \exp(\bz(k))}, \ldots, \frac{\exp(\bz(K))}{\sum_{k=1}^K \exp(\bz(k))}                        \right]^\top
\end{equation}
for $\bz \in \RR^K$. Let $g_1$ be any nonincreasing convex function and $g_2$ be any nondecreasing convex function, both defined on $(0, 1)$. We consider a softmax-based loss function that takes the form
\begin{equation}\label{eqn:general loss}
    \cL( \bz, \by_k  ) =  
    g_1\left(\bS(\bz)(k)\right)+ \sum_{\kk=1, ~\kk\neq k}^K g_2\left( \bS(\bz)(\kk) \right).
\end{equation}
Here, $\bS(\bz)(k)$ denotes the $k$-th element of $\bS(\bz)$. Taking $g_1(x) = -\log x$ and $g_2 \equiv 0$, we recover the cross-entropy loss. Another example is to take $g_1(x) = (1-x)^q$ and $g_2(x) =x^q$ for $q > 1$, which can be implemented in most deep learning libraries such as PyTorch~\citep{paszke2019pytorch}.


We have the following theorem regarding the softmax-based loss functions in the balanced case.


\begin{theorem}\label{proposition: general loss balance}
Assume $\sqrt{\Etheta\Ew} > \frac{K-1}{K}\log\left(K^2\sqrt{\Etheta\Ew} +(2K-1)(K-1)\right)$. For any loss function defined in \eqref{eqn:general loss}, $(\bW^\star, \bH^\star)$ given by \eqref{eq:solution} is a global minimizer of  \eqref{eq: NN simplified model}.  Moreover, if $g_2$ is strictly convex and at least one of $g_1, g_2$ is strictly monotone, then any global minimizer must be given by \eqref{eq:solution}.
\end{theorem}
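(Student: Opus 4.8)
The plan is to reduce the two-variable program to a single-variable estimate in an ``average centered margin'', and then use the numerical hypothesis on $\sqrt{\Etheta\Ew}$ to close a convexity gap. \emph{Reduction and a budget inequality.} The objective depends on $(\bW,\bTheta)$ only through the logit vectors $\bz_{k,i}:=\bW\btheta_{k,i}\in\RR^K$, and each summand $\cL(\bz_{k,i},\by_k)$ depends on $\bz_{k,i}$ only through $\bS(\bz_{k,i})$, hence is unchanged if one subtracts from $\bz_{k,i}$ the vector $\big(\tfrac1K\sum_j z_{k,i}(j)\big)\cI_K$. Writing $\bar\bw:=\tfrac1K\sum_k\bw_k$ and the centered margin $t_{k,i}:=(\bw_k-\bar\bw)\bcdot\btheta_{k,i}$, after this centering $\sum_{k'}z_{k,i}(k')=0$ and the $k$-th logit equals $t_{k,i}$. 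Using the balanced assumption $n_k=n$, three applications of Cauchy--Schwarz (to $(\bw_k-\bar\bw)\bcdot\bar\btheta_k$ with $\bar\btheta_k:=\tfrac1n\sum_i\btheta_{k,i}$, then to $\|\bar\btheta_k\|\le(\tfrac1n\sum_i\|\btheta_{k,i}\|^2)^{1/2}$, then across $k$), the bound $\tfrac1K\sum_k\|\bw_k-\bar\bw\|^2\le\tfrac1K\sum_k\|\bw_k\|^2$, and the two norm constraints, I obtain
\[
\frac1N\sum_{k=1}^K\sum_{i=1}^{n}t_{k,i}=\frac1K\sum_{k=1}^K(\bw_k-\bar\bw)\bcdot\bar\btheta_k\ \le\ \sqrt{\Etheta\Ew}.
\]
All these inequalities are equalities exactly when, for each $k$, the $\btheta_{k,i}$ agree over $i$ and are aligned with $\bw_k$, the norms $\|\bw_k\|$ and $\|\btheta_{k,i}\|$ are constant in $k$, both constraints are tight, and $\bar\bw=0$.

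\emph{A one-variable lower bound per example.} Since $\sum_{k'\neq k}z_{k,i}(k')=-t_{k,i}$, AM--GM gives $\sum_{k'\neq k}e^{z_{k,i}(k')}\ge(K-1)e^{-t_{k,i}/(K-1)}$, with equality iff the off-diagonal logits are all equal; hence $\bS(\bz_{k,i})(k)\le\phi(t_{k,i})$ and, at equality, $\bS(\bz_{k,i})(k')=\tfrac{1-\phi(t_{k,i})}{K-1}$ for $k'\neq k$, where $\phi(t):=(1+(K-1)e^{-\frac{K}{K-1}t})^{-1}$. Using that $g_1$ is nonincreasing (for the first term) and that $g_2$ is nondecreasing together with Jensen over the $K-1$ off-diagonal coordinates and the AM--GM structure (for the remaining terms), I would prove $\cL(\bz_{k,i},\by_k)\ge\psi(t_{k,i})$ with
\[
\psi(t):=g_1\big(\phi(t)\big)+(K-1)\,g_2\!\Big(\tfrac{1-\phi(t)}{K-1}\Big),
\]
this bound being attained by the symmetric configuration above; moreover $\psi$ is nonincreasing, since $\psi'(t)=\phi'(t)\,[\,g_1'(\phi(t))-g_2'(\tfrac{1-\phi(t)}{K-1})\,]\le0$. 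A direct computation shows the candidate \eqref{eq:solution} realizes $t_{k,i}^\star=\sqrt{\Etheta\Ew}$ with balanced off-diagonal logits, so its objective value is exactly $\psi(\sqrt{\Etheta\Ew})$.

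\emph{Closing the loop, and the main obstacle.} Combining the two previous steps, $\tfrac1N\sum_{k,i}\cL(\bz_{k,i},\by_k)\ge\tfrac1N\sum_{k,i}\psi(t_{k,i})$, and it remains to show the right-hand side is at least $\psi(\sqrt{\Etheta\Ew})$. Were $\psi$ convex, this would follow at once from Jensen plus monotonicity and the budget inequality. The obstacle is that $\psi$ need not be convex: $\phi$ is logistic-shaped and is convex to the left of its inflection point, so a composition like $g_1\circ\phi$ can fail to be convex on a left tail. This is precisely what the hypothesis $\sqrt{\Etheta\Ew}>\tfrac{K-1}{K}\log\!\big(K^2\sqrt{\Etheta\Ew}+(2K-1)(K-1)\big)$ is for: it pushes the operating point $\sqrt{\Etheta\Ew}$ far enough to the right that $\psi$ admits a supporting line there (equivalently, coincides with its convex envelope at that point), which is enough to make $\tfrac1N\sum\psi(t_{k,i})\ge\psi(\tfrac1N\sum t_{k,i})\ge\psi(\sqrt{\Etheta\Ew})$ go through. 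Verifying this threshold — estimating $1-\phi(\sqrt{\Etheta\Ew})$ and $\phi'(\sqrt{\Etheta\Ew})$, and controlling the non-convex tail of $\psi$ — is the technical heart of the proof, and I expect the explicit constant $K^2\sqrt{\Etheta\Ew}+(2K-1)(K-1)$ to drop out of exactly these estimates. This shows \eqref{eq:solution} is a global minimizer.

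\emph{Uniqueness under the strict hypotheses.} For the final assertion I would trace the equality cases. If one of $g_1,g_2$ is strictly monotone, the hypothesis makes $\psi$ strictly convex at $\sqrt{\Etheta\Ew}$, which forces $t_{k,i}=\sqrt{\Etheta\Ew}$ for all $(k,i)$, hence all the Cauchy--Schwarz and centering inequalities tight: $\btheta_{k,i}=C\bw_k$ with $C=\sqrt{\Etheta/\Ew}$, $\|\bw_k\|=\sqrt{\Ew}$, both constraints active, and $\bar\bw=0$. Strict convexity of $g_2$ then forces the off-diagonal logits within each example to be equal, and combined with $\sum_{k'}\bw_{k'}=0$ this pins the Gram matrix of $\{\bw_k/\sqrt{\Ew}\}$ to have unit diagonal and all off-diagonal entries $-1/(K-1)$ — i.e. $[\bw_1,\dots,\bw_K]$, and likewise $[\bmm_1^\star,\dots,\bmm_K^\star]$, is a $K$-simplex ETF up to scaling, which is exactly \eqref{eq:solution}.
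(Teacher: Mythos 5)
Your first two steps are sound and essentially parallel the paper's: the budget inequality $\frac1N\sum_{k,i}t_{k,i}\le\sqrt{\Etheta\Ew}$ is the analogue of the paper's \eqref{eq:padd}--\eqref{eq:boundtheta} estimate, and the per-example bound $\cL(\bz_{k,i},\by_k)\ge\psi(t_{k,i})$ (AM--GM on the off-diagonal exponentials plus Jensen for $g_2$ and monotonicity) matches the paper's reduction to the logistic function of the centered margin. The gap is exactly where you locate ``the technical heart'': the proposed mechanism for closing it cannot work for the loss class this theorem is meant to cover. For losses such as $g_1(x)=(1-x)^q$, $g_2(x)=x^q$ (the paper's own example beyond cross-entropy), $\psi$ is bounded as $t\to-\infty$ and as $t\to+\infty$ and is strictly decreasing at $t_0:=\sqrt{\Etheta\Ew}$. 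Hence no global affine minorant of $\psi$ touching it at $t_0$ exists: a negative-slope line exceeds $\psi$ for $t$ very negative, and a zero-slope line would require $\psi(t_0)\le\inf\psi$, which is false. So ``$\psi$ coincides with its convex envelope at $t_0$'' is not something the hypothesis on $\sqrt{\Etheta\Ew}$ can buy, no matter how large that quantity is; it only holds in cases (like cross-entropy) where $\psi$ is already essentially convex. Worse, the reduction to the mean-only budget is genuinely too lossy: with $\frac1N\sum t_{k,i}\le t_0$ as the sole constraint, one may send a single $t_{k,i}\to-\infty$ and the remaining $N-1$ to $+\infty$, driving $\frac1N\sum\psi(t_{k,i})$ down to nearly $\inf\psi<\psi(t_0)$; the reason such configurations are infeasible lives in the per-example norm structure that your reduction has already discarded. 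The uniqueness argument, which leans on ``strict convexity of $\psi$ at $t_0$,'' inherits the same problem.

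The paper's proof handles this differently. After the same monotone/Jensen reduction (to maximizing $\frac1N\sum_{k,i}\bS(\bz_{k,i})(k)$, using convexity of both $g_1$ and $g_2$), it bounds each term by a logistic function $g_0$ of the centered margin and then, instead of seeking a global convexity surrogate, splits the example indices into a set $\cS_2$ where every margin lies in the region where $g_0$ is concave and a ``bad'' set $\cS_1$ where it does not (Lemma~\ref{lemma:general s}). The bad set is bounded crudely ($\tfrac12+(K-1)$ per example), Jensen is applied only on $\cS_2$, and---crucially---the margin budget is re-derived restricted to $\cS_2$, where it scales like $\sqrt{\Etheta\Ew\,\barN(\barN-t)}$ with $t=|\cS_1|$ (cf.\ \eqref{eq:case22}); this retains the coupling between margins and the norm constraints at the level of subsets of examples. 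The assumption $\sqrt{\Etheta\Ew}>\frac{K-1}{K}\log\bigl(K^2\sqrt{\Etheta\Ew}+(2K-1)(K-1)\bigr)$ is then used precisely to show the resulting bound $L(t)$ is strictly decreasing in $t$, forcing $\cS_1=\varnothing$ at optimality; tracing the equality conditions and invoking Lemma~\ref{lemma: to ETF} yields \eqref{eq:solution} and the uniqueness statement. To repair your argument you would need some substitute for this subset-level bookkeeping (or a proof that the feasible margin vectors, not just their mean, exclude the spread-out configurations); as written, the supporting-line step fails for the intended losses.
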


In other words, neural collapse continues to emerge with softmax-based losses under mild regularity conditions. The first part of this theorem does not preclude the possibility that the Layer-Peeled Model admits solutions other than \eqref{eq:solution}. When applied to the cross-entropy loss, it is worth pointing out that this theorem is a weak version of Theorem~\ref{theo: cross-entropy balance}, albeit more general. Regarding the first assumption in Theorem~\ref{proposition: general loss balance}, note that $\Etheta$ and $\Ew$ would be arbitrarily large if the weight decay $\lambda$ in \eqref{eq:dl_opt} is sufficiently small, thereby meeting the assumption concerning $\sqrt{\Etheta\Ew}$ in this theorem.


We remark that Theorem \ref{proposition: general loss balance} does not require the convexity of the loss $\cL$. To circumvent the hurdle of nonconvexity, our proof in SI Appendix presents several novel elements.



In passing, we leave the experimental confirmation of neural collapse with these loss functions for future work.

\section{Layer-Peeled Model for Predicting Minority Collapse}\label{sec:imbalanced}


Deep learning models are often trained on datasets where there is a disproportionate ratio of observations in each class~\cite{wang2016training, huang2016learning, madasamy2017data}. For example, in the Places2 challenge dataset \cite{zhou2016places}, the number of images in its majority scene categories is about eight times that in its minority classes. Another example is the Ontonotes dataset for part-of-speech tagging~\citep{hovy2006ontonotes}, where the number of words in its majority classes can be more than one hundred times that in its minority classes. While empirically the imbalance in class sizes often leads to inferior model performance of deep learning (see, e.g., \cite{johnson2019survey}), there remains a lack of a solid theoretical footing for understanding its effect, perhaps due to the complex details of deep learning training.

In this section, we use the Layer-Peeled Model to seek a fine-grained characterization of how class imbalance impacts neural networks that are trained for a sufficiently long time. In particular, neural collapse no longer emerges in the presence of class imbalance (see numerical evidence in Figure S2 in SI Appendix). Instead, our analysis predicts a phenomenon we term \textit{Minority Collapse}, which fundamentally limits the performance of deep learning especially on the minority classes, both theoretically and empirically. All omitted proofs are relegated to SI Appendix.




\subsection{Technique: Convex Relaxation}
When it comes to imbalanced datasets, the Layer-Peeled Model no longer admits a simple expression for its minimizers as in the balanced case, due to the lack of symmetry between classes. This fact results in, among others, an added burden on numerically computing the solutions of the Layer-Peeled Model.


To overcome this difficulty, we introduce a convex optimization program as a relaxation of the nonconvex Layer-Peeled Model \eqref{eq: NN simplified model}, relying on the well-known result for relaxing a quadratically constrained quadratic program as a semidefinite program (see, e.g., \cite{sturm2003cones}). To begin with, defining $\bh_k$ as the feature mean of the $k$-th class (i.e., $\bh_k := \frac{1}{n_k}\sum_{i=1}^{n_k} \bh_{k,i}$), we introduce a new decision variable $\bX :=  \left[\btheta_{1},\btheta_{2},\dots,\btheta_{K},  \bW^\top  \right]^\top \left[\btheta_{1},\btheta_{2},\dots,\btheta_{K},  \bW^\top  \right]\in \RR^{2K\times 2K}$. By definition, $\bX$ is positive semidefinite and satisfies
$$
 \frac{1}{K}\sum_{k=1}^K \bX(k,k)\notag\\
 = \frac{1}{K} \sum_{k=1}^K\|\bh_k \|^2
 \overset{a}{\leq}  \frac{1}{K}\sum_{k=1}^K \frac{1}{n_k}\sum_{i=1}^{n_k}\left\|\btheta_{k,i} \right\|^2\notag\\
 \leq E_{\Theta} 
$$
and
$$ 
 \frac{1}{K}\sum_{k=K+1}^{2K} \bX(k,k)
 = \frac{1}{K} \sum_{k=1}^K\|\bw_k \|^2\leq \Ew, \notag
 $$
where $\overset{a}\leq$ follows from the Cauchy--Schwarz inequality. Thus, we consider the following semidefinite programming problem:\footnote{Although 
 \eqref{eq:convex sdp problem} involves a semidefinite  constraint, it is not a semidefinite  program in the strict sense because a semidefinite program uses a linear objective function.}
\begin{align}\label{eq:convex sdp problem}
     \min_{\bX\in \RR^{2K\times 2K}}~& \sum_{k=1}^K \frac{n_k}{N}  \cL( \bz_k, \by_k  )\notag\\
\mathrm{s.t.}~&  \bX \succeq0,\\
&\frac{1}{K}\sum_{k=1}^K \bX(k,k) \leq \Etheta,~ \frac{1}{K}\sum_{k=K+1}^{2K} \bX(k,k) \leq \Ew,\notag\\
&\text{ for all }  1 \le k \le K,\notag\\
&\bz_k = \left[\bX(k,K+1), \bX(k,K+2),\dots,\bX(k,2K)    ~\right]^\top.\notag
\end{align}

Lemma \ref{theo:to convex} below relates the solutions of \eqref{eq:convex sdp problem} to that of \eqref{eq: NN simplified model}.

\begin{lemma}\label{theo:to convex}
Assume $p\geq 2K$ and the loss function $\cL$ is convex in its first argument.   Let $\bX^{\star}$ be a minimizer of the convex program (\ref{eq:convex sdp problem}).  Define $\left(\bTheta^{\star}, \bW^{\star}\right)$ as
\begin{equation}
    \begin{aligned}
     &\left[\btheta_{1}^{\star},\btheta_{2}^{\star},\dots,\btheta_{K}^{\star}, ~ (\bW^{\star})^\top \right] =\bP (\bX^{\star})^{1/2} ,\\
&\btheta_{k,i}^{\star} = \btheta_{k}^{\star}, ~\text{ for all } 1 \le i \le n, 1 \le k \le K,\label{eq:general solution} 
    \end{aligned}
\end{equation}
where $(\bX^{\star})^{1/2}$ denotes the positive square root of $\bX^{\star}$ and $\bP\in \RR^{p\times 2K}$ is any partial orthogonal matrix such that $\bP^\top \bP = \bI_{2K}$.  Then $(\bTheta^{\star}, \bW^{\star})$ is a minimizer of \eqref{eq: NN simplified model}.  Moreover, if all  $\bX^{\star}$'s satisfy $\frac{1}{K}\sum_{k=1}^K \bX^{\star}(k,k) = \Etheta$, then all the solutions of   \eqref{eq: NN simplified model} are in the form of \eqref{eq:general solution}.
\end{lemma}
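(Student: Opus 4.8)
The plan is to show that Program~\eqref{eq:convex sdp problem} and Program~\eqref{eq: NN simplified model} have the same optimal value, by exhibiting, for each feasible point of one program, a feasible point of the other with no larger objective, and then to read both assertions off from the two constructions.

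\emph{One inequality.} Given any feasible $(\bTheta,\bW)$ for~\eqref{eq: NN simplified model}, I would set $\btheta_k:=\frac{1}{n_k}\sum_{i=1}^{n_k}\btheta_{k,i}$ and form the Gram matrix $\bX:=\big[\btheta_1,\dots,\btheta_K,\bW^\top\big]^\top\big[\btheta_1,\dots,\btheta_K,\bW^\top\big]\in\RR^{2K\times 2K}$. Then $\bX\succeq 0$; its first and last $K$ diagonal entries satisfy the two trace constraints of~\eqref{eq:convex sdp problem} by exactly the Cauchy--Schwarz bound recorded just above~\eqref{eq:convex sdp problem} together with feasibility of $(\bTheta,\bW)$; and the vector $\bz_k$ extracted from $\bX$ equals $\bW\btheta_k$. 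Hence $\bX$ is feasible for~\eqref{eq:convex sdp problem}, and by Jensen's inequality and the assumed convexity of $\cL$ in its first argument, $\frac{1}{n_k}\sum_i\cL(\bW\btheta_{k,i},\by_k)\ge\cL(\bW\btheta_k,\by_k)$, so the objective of~\eqref{eq:convex sdp problem} at $\bX$ is at most the objective of~\eqref{eq: NN simplified model} at $(\bTheta,\bW)$. Taking infima shows the optimal value of~\eqref{eq:convex sdp problem} is at most that of~\eqref{eq: NN simplified model}.

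\emph{The reverse inequality and the first claim.} Starting from a minimizer $\bX^\star$ of~\eqref{eq:convex sdp problem}, since $p\ge 2K$ I can choose a partial orthogonal $\bP\in\RR^{p\times 2K}$ with $\bP^\top\bP=\bI_{2K}$; then the columns of $\bP(\bX^\star)^{1/2}$, relabelled $[\btheta_1^\star,\dots,\btheta_K^\star,(\bW^\star)^\top]$, have Gram matrix $(\bX^\star)^{1/2}\bP^\top\bP(\bX^\star)^{1/2}=\bX^\star$. This yields $\|\btheta_k^\star\|^2=\bX^\star(k,k)$, $\|\bw_k^\star\|^2=\bX^\star(K+k,K+k)$ and $\bW^\star\btheta_k^\star=\bz_k$, so setting $\btheta_{k,i}^\star=\btheta_k^\star$ produces a point feasible for~\eqref{eq: NN simplified model} whose objective equals the optimal value of~\eqref{eq:convex sdp problem}. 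Combined with the previous paragraph, the two optimal values coincide and $(\bTheta^\star,\bW^\star)$ is a minimizer of~\eqref{eq: NN simplified model}.

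\emph{The ``moreover'' part.} I would take any minimizer $(\bTheta,\bW)$ of~\eqref{eq: NN simplified model} and build $\btheta_k$ and $\bX$ as in the first step; $\bX$ is then SDP-feasible with objective at most the common optimal value, hence a minimizer of~\eqref{eq:convex sdp problem}, so by hypothesis $\frac1K\sum_{k=1}^K\|\btheta_k\|^2=\frac1K\sum_{k=1}^K\bX(k,k)=\Etheta$. Sandwiching this between $\frac1K\sum_k\|\btheta_k\|^2\le\frac1K\sum_k\frac1{n_k}\sum_i\|\btheta_{k,i}\|^2\le\Etheta$ forces, term by term, equality in $\|\btheta_k\|^2\le\frac1{n_k}\sum_i\|\btheta_{k,i}\|^2$, which (the equality case of the convexity of $\|\cdot\|^2$) holds only when $\btheta_{k,i}=\btheta_k$ for every $i$. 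It then remains to factor $M:=[\btheta_1,\dots,\btheta_K,\bW^\top]\in\RR^{p\times 2K}$, which satisfies $M^\top M=\bX$, as $M=\bP\bX^{1/2}$ for a partial orthogonal $\bP$: since $\mathrm{null}(\bX)\subseteq\mathrm{null}(M)$, the matrix $M(\bX^{1/2})^{+}$ is an isometry on $\mathrm{range}(\bX)$ that reproduces $M$ after right multiplication by $\bX^{1/2}$, and it can be completed to orthonormal columns on $\mathrm{null}(\bX)$ precisely because $p\ge 2K$ leaves enough room in $\RR^p$. This places $(\bTheta,\bW)$ in the form~\eqref{eq:general solution}. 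The step I expect to be the main obstacle is exactly this square-root/polar-type factorization, and in particular checking that the partial isometry on $\mathrm{range}(\bX^\star)$ extends to a genuine partial orthogonal matrix when $\bX^\star$ is singular; the remainder is a routine transfer of objectives and constraints between the two programs, using only $\bX\succeq 0$, $p\ge 2K$, convexity of $\cL$, and the Cauchy--Schwarz bound already displayed.
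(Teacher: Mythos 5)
Your proposal is correct and follows essentially the same route as the paper's proof: the Gram-matrix correspondence plus Jensen's inequality (convexity of $\cL$) gives the two-way transfer of optimal values, and the ``moreover'' part forces within-class collapse from the active trace constraint --- you argue the equality case of Jensen directly, while the paper runs the identical computation contrapositively via the variance deficit $\frac{1}{K}\sum_k\frac{1}{n_k}\sum_i\|\bh_{k,i}-\bh_k\|^2>0$. The only addition is that you spell out the polar-type factorization $[\btheta_1,\dots,\btheta_K,\bW^\top]=\bP(\bX^\star)^{1/2}$ with $\bP^\top\bP=\bI_{2K}$ for possibly singular $\bX^\star$, a step the paper's proof uses implicitly, and your extension of the partial isometry off $\mathrm{range}(\bX^\star)$ using $p\ge 2K$ is correct.
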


This lemma in effect says that the relaxation does \textit{not} lead to any loss of information when we study the Layer-Peeled Model through a convex program, thereby offering a computationally efficient tool for gaining insights into the terminal phase of training deep neural networks on imbalanced datasets. An appealing feature is that the size of the program (\ref{eq:convex sdp problem}) is independent of the number of training examples. Besides, this lemma predicts that even in the imbalanced case the last-layer features collapse to their class means under mild conditions. Therefore, Property (\hyperlink{(NC1)}{NC1}) is satisfied (see more discussion about the condition in SI Appendix). 


The assumption of the convexity of $\cL$ in the first argument is satisfied by a large class of loss functions. The condition that the first $K$ diagonal elements of any $\bX^{\star}$ make the associated constraint saturated is also not restrictive. For example, we prove in SI Appendix that this condition is satisfied for the cross-entropy loss. We also remark that  \eqref{eq:convex sdp problem} is not the unique convex relaxation. An alternative is to relax \eqref{eq: NN simplified model} via a nuclear norm-constrained convex program~\cite{bach2008convex, haeffele2019structured} (see more details in SI Appendix).

\subsection{Minority Collapse}
\label{subsec:minority-collapse}

With the technique of convex relaxation in place, now we numerically solve the Layer-Peeled Model on imbalanced datasets, with the goal of identifying possible nontrivial patterns. As a worthwhile starting point, we consider a dataset that has $K_A$ majority classes each containing $\barN_A$ training examples and $K_B$ minority classes each containing $\barN_B$ training examples. That is, assume $n_1 = n_2 = \dots = n_{K_A} = \barN_A$ and $n_{K_A+1} = n_{K_A+2}= \dots = n_{K} = \barN_B$. For convenience, call $R := \barN_A/\barN_B > 1$ the imbalance ratio. Note that the case $R = 1$ reduces to the balanced setting.


\begin{figure}[!htp]
		\centering
        \hspace{0.01in}
		\subfigure[$\Ew=1$, $\Etheta=5$]{
			\centering
			\includegraphics[scale=0.21]{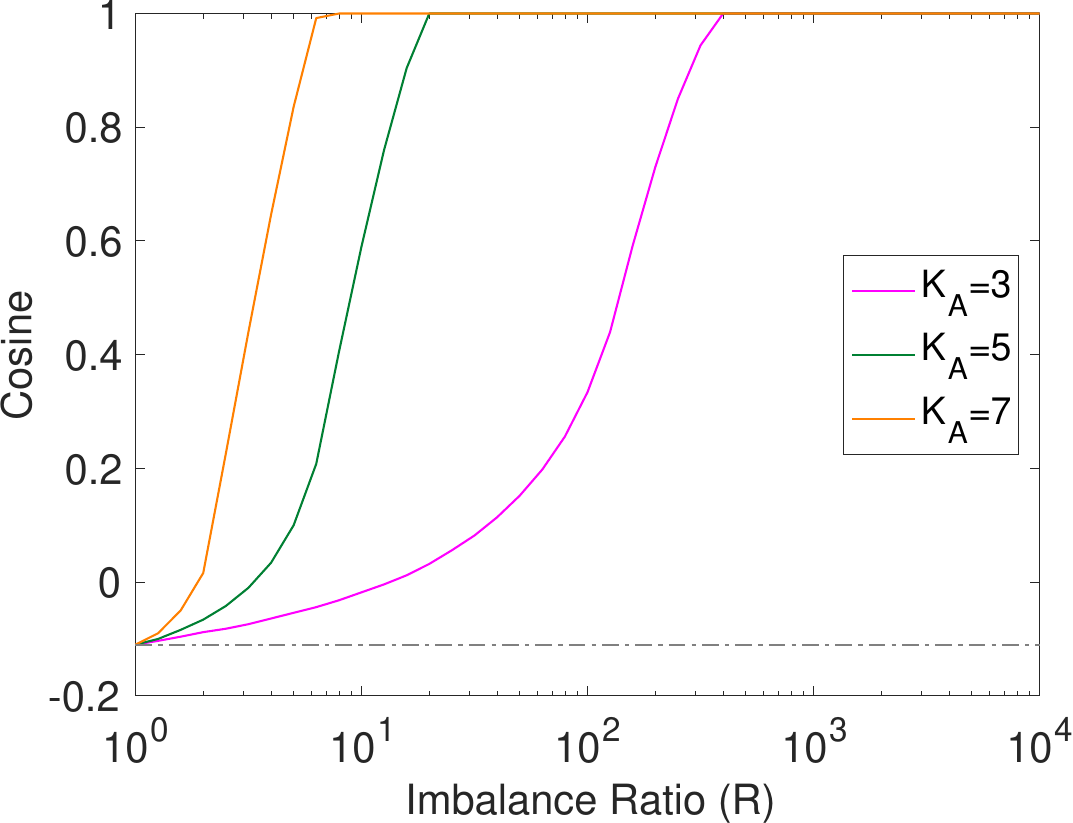}
			\label{fig:simulation-I}}
        \hspace{0.01in}
        \subfigure[$\Ew=1$, $\Etheta=10$]{
			\centering
			\includegraphics[scale=0.21]{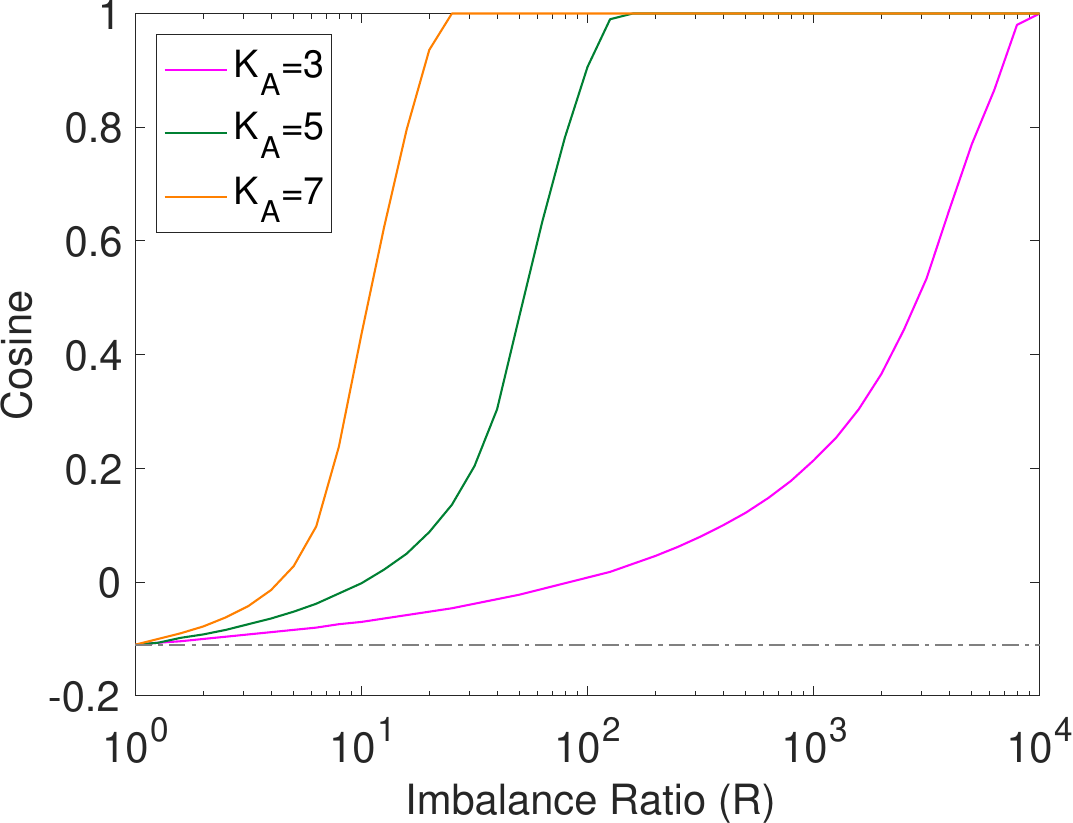}
			\label{fig:simulation-II}}
		\caption{
		 The average cosine of the angles between any pair of the minority classifier solved from the Layer-Peeled Model. The average cosine reaches $1$ once $R$ is above some threshold. The total number of classes $K_A + K_B$ is fixed to $10$. The gray dash-dotted line indicates the value of $-\frac{1}{K-1}$, which is given by \eqref{eq:cos}. The between-majority-class angles can still be large even when Minority Collapse emerges. Notably, our simulation suggests that the minority classifiers exhibit an equiangular frame and so do the majority classifiers.  
		}	
\label{fig:LPM-simulations}
\end{figure}


An important question is to understand how the $K_B$ last-layer minority classifiers behave as the imbalance ratio $R$ increases, as this is directly related to the model performance on the minority classes. To address this question, we show that the average cosine of the angles between any pair of the $K_B$ minority classifiers in Figure~\ref{fig:LPM-simulations} by solving the simple convex program (\ref{eq:convex sdp problem}). This figure reveals a two-phase behavior of the minority classifiers $\bw^\star_{K_A+1}, \bw^\star_{K_A+2}, \ldots, \bw^\star_K$ as $R$ increases:
\begin{itemize}
    \item[(1)]  When $R < R_0$ for some $R_0 > 0$, the average between-minority-class angle becomes smaller as $R$ increases.
    \item[(2)] Once $R \geq R_0$, the average between-minority-class angle become zero and, in addition, the minority classifiers have about the same length. This implies that all the minority classifiers collapse to a single vector.
\end{itemize}
Above, the phase transition point $R_0$ depends on the class sizes $K_A, K_B$ and the thresholds $\Etheta, \Ew$. This value becomes smaller when $E_W, E_H$, or the numer of minority classes $K_B$ is smaller while fixing the other parameters (see more numerical examples in Figure S2 in SI Appendix).

We refer to the phenomenon that appears in the second phase as Minority Collapse. While it can be expected that the minority classifiers become closer to each other as the level of imbalance increases, surprisingly, these classifiers become completely indistinguishable once $R$ hits a \textit{finite} value. Once Minority Collapse takes place, the neural network would predict equal probabilities for all the minority classes regardless of the input. As such, its predictive ability is by no means better than a coin toss when conditioned on the minority classes. This situation would only get worse in the presence of adversarial perturbations. This phenomenon is especially detrimental when the minority classes are more frequent in the application domains than in the training data. Even outside the regime of Minority Collapse, the classification might still be unreliable if the imbalance ratio is large as the softmax predictions for the minority classes can be close to each other.

To put the observations in Figure~\ref{fig:LPM-simulations} on a firm footing, we prove in the theorem below that Minority Collapse indeed emerges in the Layer-Peeled Model as $R$ tends to infinity.
\begin{theorem}\label{theo:imbalance limit} 
Assume $p\geq K$ and $\barN_A/\barN_B \to \infty$, and fix $K_A$ and $K_B$. Let $\left(\bTheta^\star, \bW^\star \right)$ be any global minimizer of the Layer-Peeled Model \eqref{eq: NN simplified model} with the cross-entropy loss. As $R \equiv \barN_A/\barN_B \to \infty$, we have
$$  \lim \bw^\star_{k} -\bw^\star_{\kk} =   \bm{0}_p,~\text{ for all } K_A < k < \kk \le K. $$


\end{theorem}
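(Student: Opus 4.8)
The plan is to leverage the convex relaxation of Lemma~\ref{theo:to convex}, which reduces the study of global minimizers of the nonconvex Layer-Peeled Model with the cross-entropy loss to the semidefinite program~\eqref{eq:convex sdp problem}. Since the cross-entropy loss is convex in its first argument and we may take $p$ as large as needed, it suffices to analyze the limiting behavior of the minimizer $\bX^\star$ of~\eqref{eq:convex sdp problem}, in particular the off-diagonal Gram entries $\bX^\star(k, K+\kk)$ encoding $\btheta_k^\star \bcdot \bw_\kk^\star$ and $\bX^\star(K+k, K+\kk)$ encoding $\bw_k^\star \bcdot \bw_\kk^\star$, as $R \to \infty$. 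The claim $\bw_k^\star - \bw_\kk^\star \to \bm 0_p$ for minority indices $K_A < k, \kk \le K$ is equivalent to showing $\bX^\star(K+k, K+k) - 2\bX^\star(K+k, K+\kk) + \bX^\star(K+\kk, K+\kk) \to 0$, so the whole argument can be carried out at the level of the Gram matrix.

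First I would rewrite the objective of~\eqref{eq:convex sdp problem} in terms of the class weights: with $n_k/N = \barN_A/(K_A \barN_A + K_B \barN_B)$ for majority classes and $n_k/N = \barN_B/(K_A \barN_A + K_B \barN_B)$ for minority classes, dividing through by the majority weight, the objective is (up to a positive constant) $\sum_{k \le K_A} \cL(\bz_k, \by_k) + \tfrac{1}{R} \sum_{k > K_A} \cL(\bz_k, \by_k)$. Thus as $R \to \infty$ the minority losses are downweighted to zero, and the problem degenerates: the majority classes claim essentially all of the ``energy budget'' $\Etheta, \Ew$, while the minority features and classifiers are driven toward configurations that make the (now essentially unconstrained, since negligibly weighted) minority cross-entropy terms small subject to having almost no norm. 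The key structural fact to extract is that in the limit, the minority feature means $\btheta_k^\star$ ($k > K_A$) and minority classifiers $\bw_k^\star$ ($k > K_A$) must all shrink — or more precisely, that the minority classifiers become collinear/identical. I would make this rigorous by a compactness-plus-continuity argument: the feasible set of~\eqref{eq:convex sdp problem} is compact (bounded by the trace constraints, closed by $\bX \succeq 0$), so along any subsequence $R_j \to \infty$ the minimizers $\bX^{\star}_{R_j}$ converge to some $\bX^{\infty}$; I would then characterize $\bX^\infty$ as a minimizer of the limiting program $\min \sum_{k \le K_A} \cL(\bz_k, \by_k)$ over the same feasible set (using that the dropped minority terms are nonnegative and bounded, hence their $\tfrac1R$-weighted contribution vanishes uniformly on the compact feasible set), and finally argue that every minimizer of this limiting program has identical minority classifier rows in its Gram structure. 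For the last step, note that the limiting objective does not involve $\bz_k$ for $k > K_A$ at all, yet the optimal majority configuration is (by the balanced-case analysis, or by a direct KKT computation for the $K_A$-class sub-problem) a simplex-ETF-type arrangement that uses up the full budget $\Etheta, \Ew$ on the majority block; this forces the minority diagonal entries $\bX^\infty(k,k)$ and $\bX^\infty(K+k, K+k)$ for $k > K_A$ to be whatever is left, and one shows the leftover is consistent only with all minority classifiers equal — intuitively because any spread among them would, together with positive semidefiniteness, either violate the tightness of the majority budget or be strictly suboptimal for a perturbed problem. A cleaner route to this last point: use the first-order optimality (KKT) conditions of~\eqref{eq:convex sdp problem} directly and pass to the limit in them, reading off the rank and block structure of $\bX^\infty$.

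The main obstacle I anticipate is the final step: ruling out a minority configuration that is spread out but has vanishingly small norm. Since the minority block entries all tend to zero individually (their norms shrink), the statement $\bw_k^\star - \bw_\kk^\star \to \bm 0_p$ is in one trivial sense automatic if $\|\bw_k^\star\| \to 0$ for $k > K_A$ — but it is \emph{not} a priori clear that the minority classifier norms vanish; the budget constraints are inequalities and nothing forces the majority block to saturate them unless one proves it. So the real work is a two-part dichotomy: (i) show the majority block does saturate the budgets in the limit (a strict-monotonicity argument on the cross-entropy loss: giving the majority classes more norm strictly decreases their loss, so at optimum they take everything); and (ii) conditional on (i), show the residual minority block, however small, must be rank-one in the classifier coordinates — this is where I would either invoke the SDP complementary slackness ($\bX^\infty$ times the dual slack is zero, pinning down $\mathrm{rank}(\bX^\infty)$) or argue by a direct exchange/perturbation: if two minority classifiers differed, reallocating an $\varepsilon$ of majority budget to symmetrize them would — because the minority loss, though downweighted, is strictly convex in the relevant directions and strictly improved by symmetrization at first order while the majority loss changes only at second order — contradict optimality for all large enough $R$. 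Handling the interplay between the $\tfrac1R$ weighting and these perturbation sizes (choosing $\varepsilon = \varepsilon(R)$ appropriately) is the delicate quantitative core; everything else is compactness and continuity.
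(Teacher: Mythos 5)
Your soft reduction (compactness of the feasible set, uniform convergence of the $R$-weighted objective to the majority-only objective, hence every subsequential limit of minimizers minimizes the limiting program) is sound and is a legitimate alternative to the paper's route, which instead stays at finite $R$ and derives a contradiction from a \emph{quantitative} stability bound (Lemma~\ref{lemma:mini unblance limit}): any feasible point whose minority classifiers are $\epsilon$-separated has majority-only loss at least $L_A+\epsilon'$. But the crux of both routes is the same statement --- every minimizer of the majority-only program \eqref{eq: NN unblance limit} has all minority classifiers equal --- and your plan for it contains a genuine error. Your step (i), that ``the majority classes take everything'' so the minority block has vanishing norm, is false for the classifier budget: at the optimum of the limiting program the minority classifiers are generically \emph{nonzero}; they all coincide with a common vector anti-aligned with the majority classifiers, because driving the logits $\btheta_{k,i}\bcdot\bw_{\kk}$ ($\kk>K_A$) very negative strictly lowers the majority cross-entropy. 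This is explicit in the paper's constructions (e.g.\ for $K_A=1$: $\bw_1=\sqrt{K_B\Ew}\,\bu$ and $\bw_k=-\sqrt{\Ew/K_B}\,\bu$ for all minority $k$), and it is also why Minority Collapse is collapse to a single \emph{nonzero} vector, consistent with Table~\ref{table:norm}. Only the minority \emph{features} vanish (they waste the $\Etheta$ budget while contributing nothing to the limiting objective). So your dichotomy ``(i) majority saturates, (ii) the tiny residual block is rank one'' cannot be carried out as stated; what must be proved instead is that any spread of the minority classifiers about their mean $\bw_B$ wastes exactly $\sum_{k>K_A}\|\bw_k-\bw_B\|^2$ of the $\Ew$ budget and that this waste strictly increases the majority loss --- which is precisely the content of the paper's Lemma~\ref{lemma:imbalance lemma fun1} and its three-case analysis (and note the optimal majority configuration is \emph{not} always a centered simplex ETF, contrary to your sketch). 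Your symmetrization idea points in this direction, but the strict-improvement step (how to reinvest the freed budget so the cross-entropy strictly drops) is exactly the nontrivial part and is not supplied; also, once you have passed to the limit program, no $\varepsilon(R)$ bookkeeping is needed, so that part of your plan is a distraction.

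A secondary gap: you invoke Lemma~\ref{theo:to convex}, which assumes $p\geq 2K$, and you say ``we may take $p$ as large as needed,'' but the theorem fixes an arbitrary $p\geq K$ and the equality of the SDP value with the Layer-Peeled value is only established for $p\geq 2K$; without it your minimizers' Gram matrices need not be (near-)minimizers of \eqref{eq:convex sdp problem}. This is fixable --- run the same compactness argument directly on the class means and classifiers $(\bh_1,\dots,\bh_K,\bW)$ in $\RR^p$ (after a Jensen step replacing features by class means, which preserves feasibility and optimality), as the paper implicitly does --- but as written it is a gap.
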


To intuitively see why Minority Collapse occurs, first note that the majority classes become the predominant part of the risk function as the level of imbalance increases. The minimization of the objective, therefore, pays too much emphasis on the majority classifiers, encouraging the between-majority-class angles to grow and meanwhile shrinking the between-minority-class angles to zero. As an aside, an interesting question for future work is to prove that $\bw^\star_{k}$ and $\bw^\star_{\kk}$ are exactly equal for sufficiently large $R$.


\subsection{Experiments}
\label{subsec:experiments}


At the moment, Minority Collapse is merely a prediction of the Layer-Peeled Model. An immediate question thus is: does this phenomenon really occur in real-world neural networks? At first glance, it does not necessarily have to be the case since the Layer-Peeled Model is a dramatic simplification of deep neural networks.


To address this question, we resort to computational experiments.\footnote{Our code is publicly available at \url{https://github.com/HornHehhf/LPM}.} Explicitly, we consider training two network architectures, VGG and ResNet \citep{he2016deep}, on the FashionMNIST \citep{xiao2017fashion} and CIFAR10 datasets, and in particular, replace the dropout layers in VGG with batch normalization \citep{ioffe2015batch}. As both datasets have 10 classes, we use three combinations of $(K_A, K_B) = (3, 7), (5, 5), (7, 3)$ to split the data into majority classes and minority classes. In the case of FashionMNIST (CIFAR10), we let the $K_A$ majority classes each contain all the $\barN_A = 6000$ ($\barN_A = 5000$) training examples from the corresponding class of FashionMNIST (CIFAR10), and the $K_B$ minority classes each have $\barN_B = 6000/R$ ($\barN_B = 5000/R$) examples randomly sampled from the corresponding class. The rest experiment setup is basically the same as \cite{papyan2020prevalence}. In detail, we
use the cross-entropy loss and stochastic gradient descent with momentum $0.9$ and weight decay $\lambda = 5 \times 10^{-4}$. The networks are trained for $350$ epochs with a batch size of $128$. The initial learning is annealed by a factor of $10$ at $1/3$ and $2/3$ of the $350$ epochs. The only difference from \cite{papyan2020prevalence} is that we simply set the learning rate to $0.1$ instead of sweeping over $25$ learning rates between $0.0001$ and $0.25$. This is because the test performance of our trained models is already comparable with their best reported test accuracy. Detailed training and test performance is displayed in Tables S1 and S2 in SI Appendix.



The results of the experiments above are displayed in Figure~\ref{fig:collapse}. This figure clearly indicates that the angles between the minority classifiers collapse to zero as soon as $R$ is large enough. Moreover, the numerical examination in Table~\ref{table:norm} shows that the norm of the classifier is constant across the minority classes. Taken together, these two pieces clearly give evidence for the emergence of Minority Collapse in these neural networks, thereby further demonstrating the effectiveness of our Layer-Peeled Model. Besides, Figure~\ref{fig:collapse} also shows that the issue of Minority Collapse is compounded when there are more majority classes, which is consistent with Figure~\ref{fig:LPM-simulations}.

\begin{figure}[!htp]
		\centering
        \hspace{0.01in}
		\subfigure[VGG11 on FashionMNIST]{
			\centering
			\includegraphics[scale=0.21]{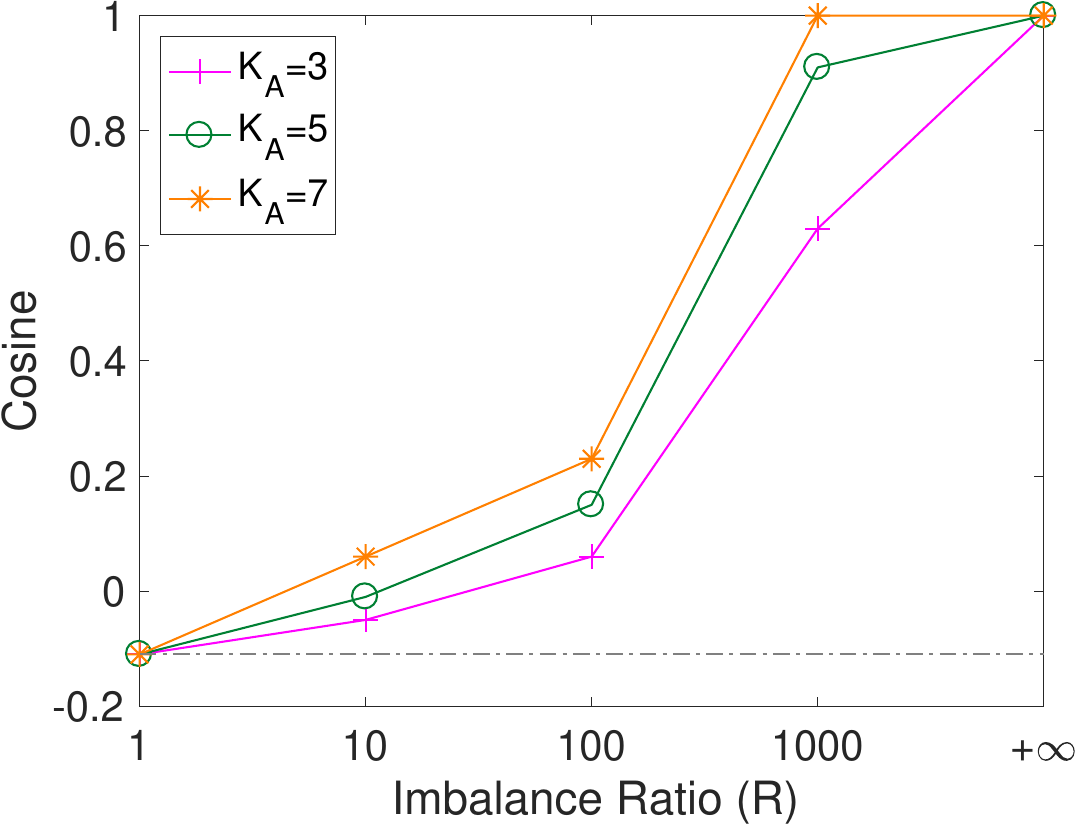}
			\label{fig:vgg-fashionmnist}} 
        \hspace{0.01in}
        	\subfigure[VGG13 on CIFAR10]{
			\centering
			\includegraphics[scale=0.21]{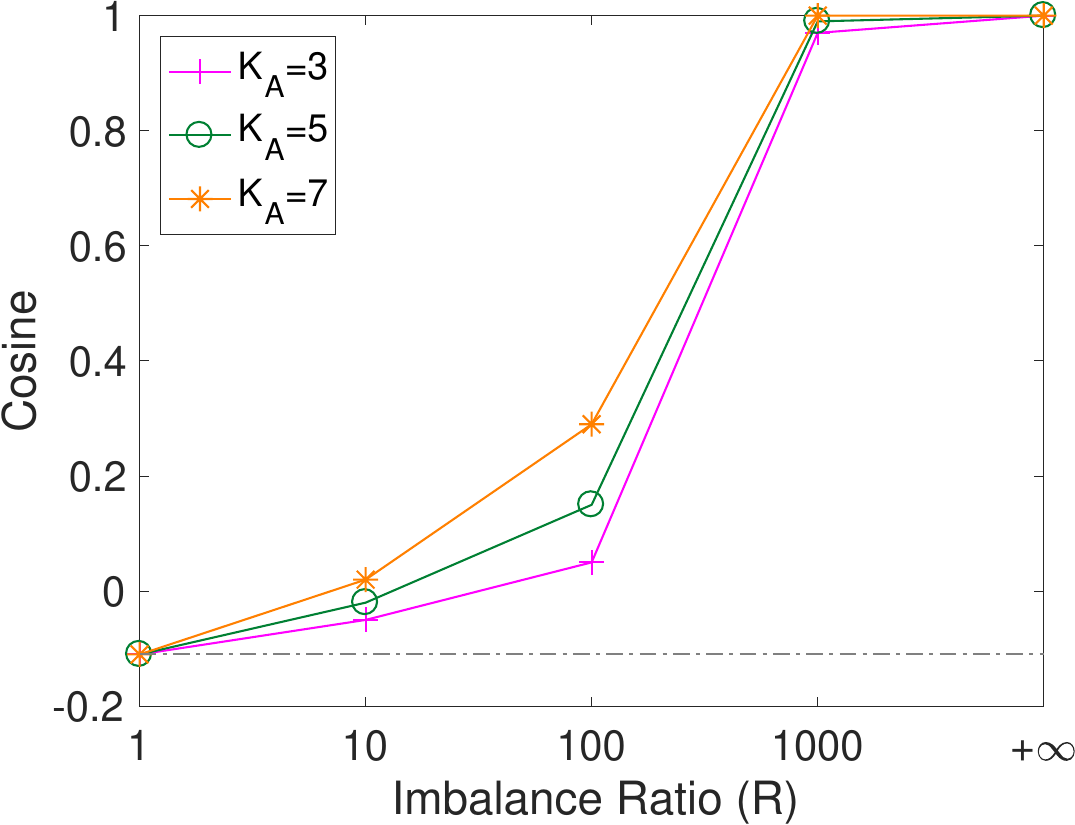}
			\label{fig:vgg-cifar10}}
        \hspace{0.01in}
     	\subfigure[ResNet18 on FashionMNIST]{
			\centering
			\includegraphics[scale=0.21]{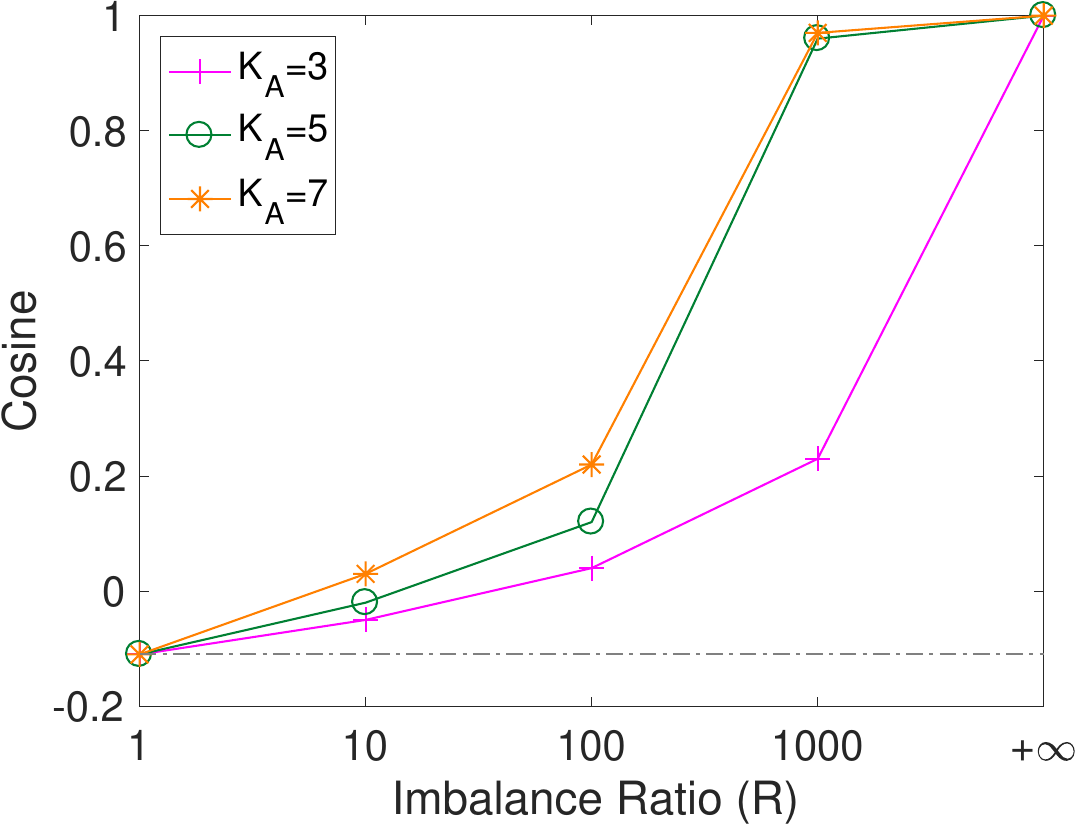}
			\label{fig:resnet-fashionmnist}}
		\subfigure[ResNet18 on CIFAR10]{
			\centering
			\includegraphics[scale=0.21]{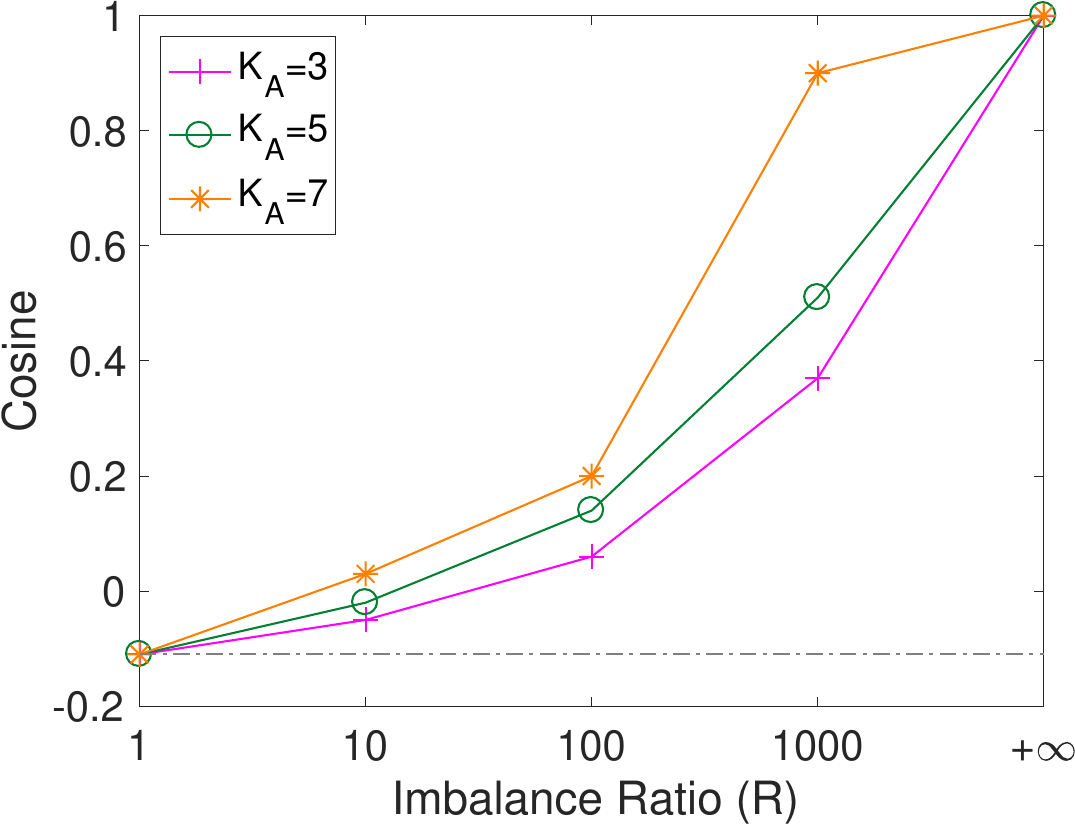}
			\label{fig:resnet-cifar10}}
		\caption{Occurrence of Minority Collapse in deep neural networks. Each curve denotes the average between-minority-class cosine. We fix $K_A + K_B = 10$. In particular, Figure~\ref{fig:collapse}(b) shares the same setting with Figure~\ref{fig:simulation-weight-decay} in Section~\ref{sec:intro}, where the LPM-based predictions are given by $(E_W, E_{\Theta})$ such that the two constraints in the Layer-Peeled Model become active for the weights of the trained networks. For ResNet 18, Minority Collapse also occurs as long as $R$ is sufficiently large. Specifically, the average cosine would hit 1 for $K_A = 7$ when $R = 5000$ on CIFAR10, and when $R = 3000$ on FashionMNIST.
}		
\label{fig:collapse}
\end{figure}

\begin{table*}[!htp]
\centering
\scalebox{0.735}{
\begin{tabular}{c||c|c|c||c|c|c||c|c|c||c|c|c}
\hline
Dataset &\multicolumn{6}{c||}{FashionMNIST} & \multicolumn{6}{c}{CIFAR10} \\ \hline
Network architecture & \multicolumn{3}{c||}{VGG11} & \multicolumn{3}{c||}{ResNet18} & \multicolumn{3}{c||}{VGG13} & \multicolumn{3}{c}{ResNet18} \\ \hline
No.~of majority classes & $K_A=3$ & $K_A=5$ & $K_A=7$ & $K_A=3$ & $K_A=5$ & $K_A=7$  & $K_A=3$ & $K_A=5$ & $K_A=7$ & $K_A=3$ & $K_A=5$ & $K_A=7$ \\ \hline
Norm variation & $2.7 \times 10^{-5}$ & $4.4 \times 10^{-8}$ & $6.0 \times 10^{-8}$ & $1.4 \times 10^{-5}$ & $5.0 \time 10^{-8}$ & $6.3 \times 10^{-8}$ & $1.4 \times 10^{-4}$ & $9.0 \times 10^{-7}$ & $5.2 \times 10^{-8}$ & $5.4 \times 10^{-5}$ & $3.5 \times 10^{-7}$ & $5.4 \times 10^{-8}$ \\ \hline 
\end{tabular}
}
\caption{Variability of the lengths of the minority classifiers when $R = \infty$.  Each number in the row of ``norm variation'' is $\mathrm{Std}(\|\bw_B^\star\|) / \mathrm{Avg}(\|\bw_B^\star\|)$, where $\mathrm{Std}(\|\bw_B^\star\|)$ denotes the standard deviation of the lengths of the $K_B$ classifiers and the denominator denotes the average. The results indicate that the classifiers of the minority classes have almost the same length.}
\label{table:norm}
\end{table*}


\begin{figure}[!htp]
		\centering
        \hspace{0.01in}
		\subfigure[VGG11 on FashionMNIST]{
			\centering
			\includegraphics[scale=0.21]{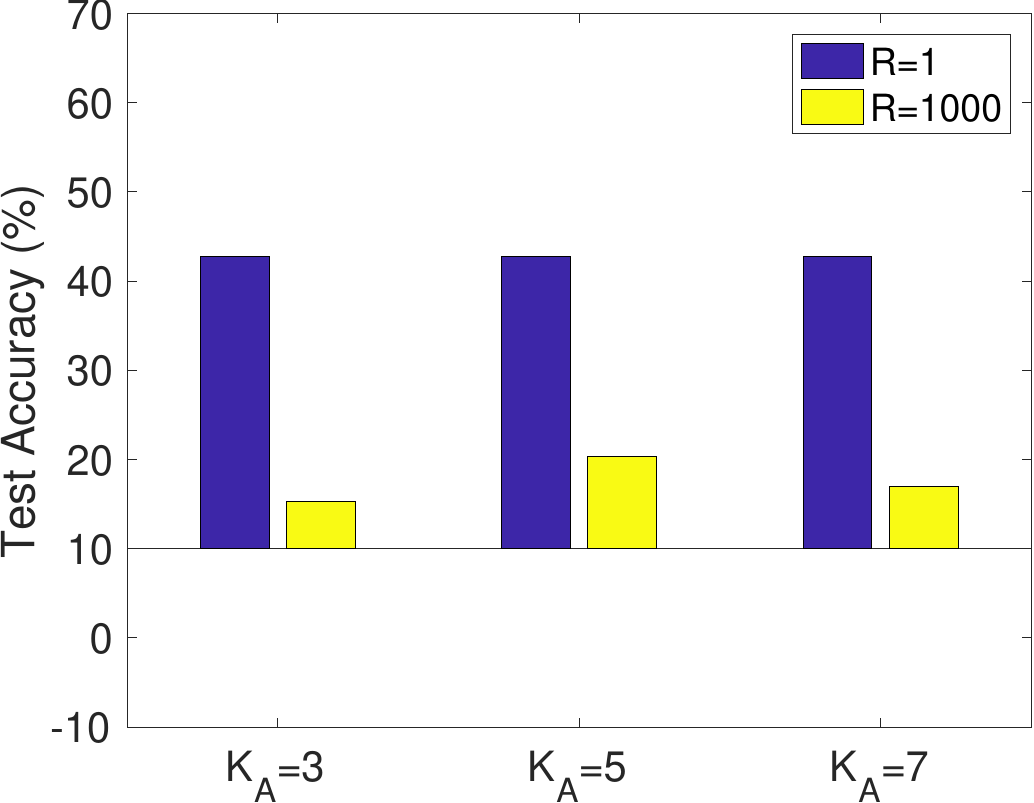}
			\label{fig:vgg-fashionmnist-comparison}} 
        \hspace{0.01in}
        	\subfigure[VGG13 on CIFAR10]{
			\centering
			\includegraphics[scale=0.21]{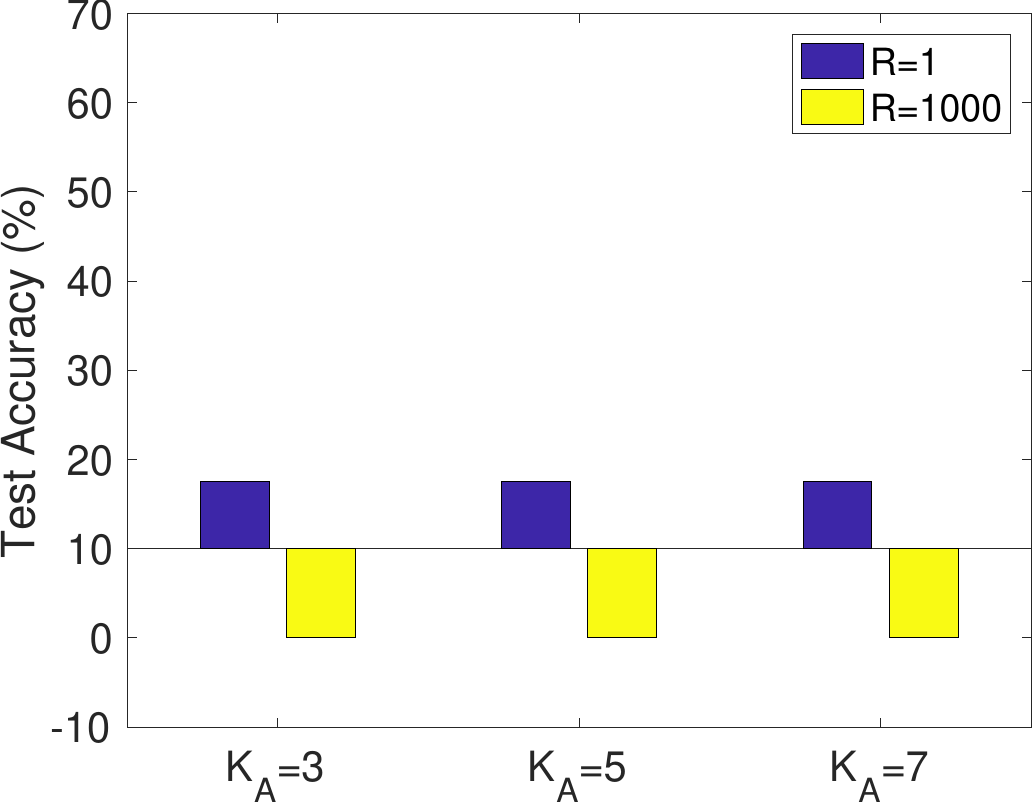}
			\label{fig:vgg-cifar10-comparison}} 
        \hspace{0.01in}
     	\subfigure[ResNet18 on FashionMNIST]{
			\centering
			\includegraphics[scale=0.21]{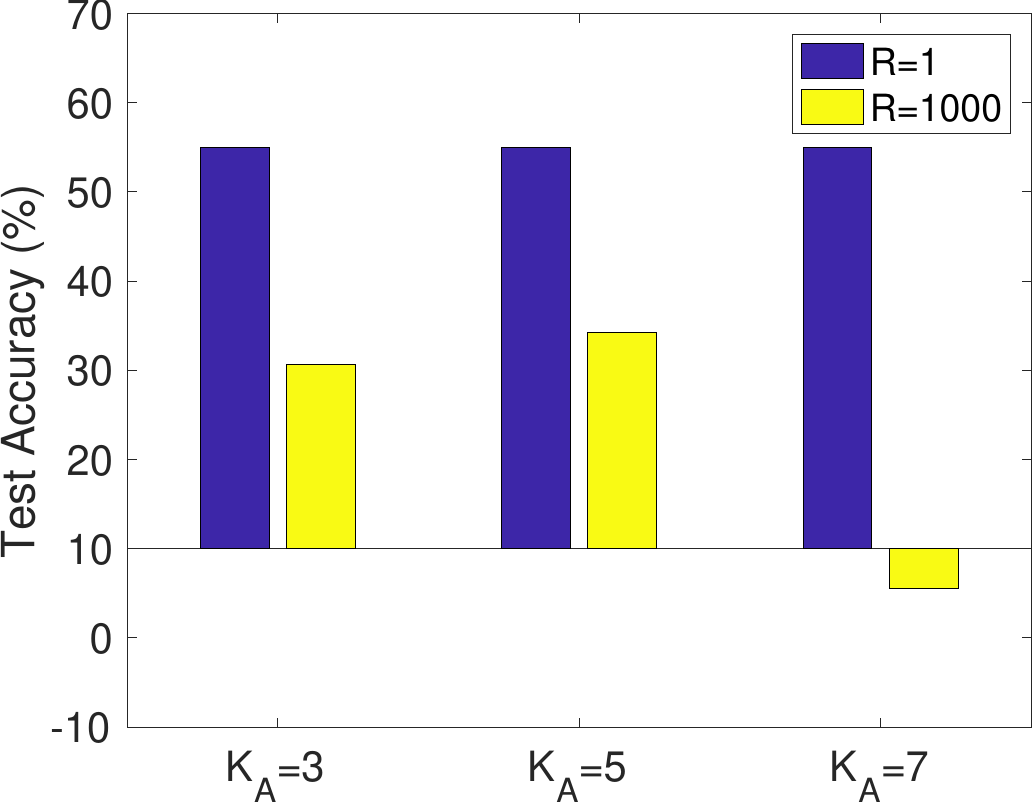}
			\label{fig:resnet-fashionmnist-comparison}} 
		\subfigure[ResNet18 on CIFAR10]{
			\centering
			\includegraphics[scale=0.21]{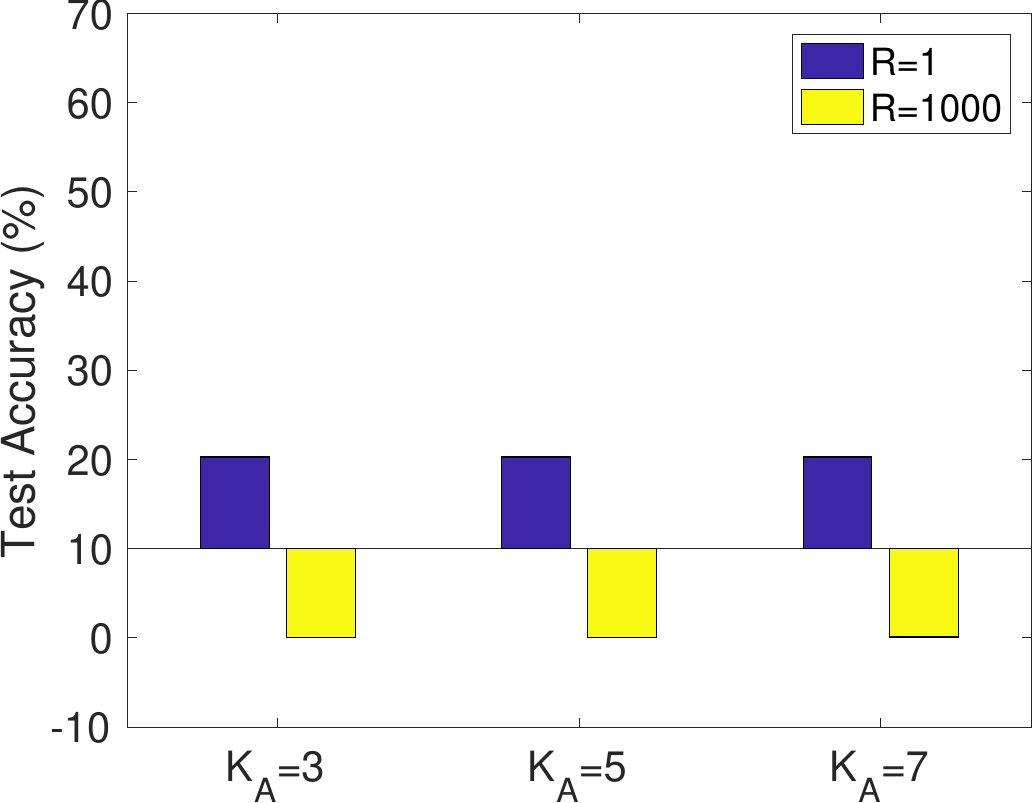}
			\label{fig:resnet-cifar10-comparison}}
		\caption{Comparison of the test accuracy on the minority classes between $R=1$ and $R=1000$. We fix $K_A + K_B = 10$ and use $n_B = 6$ ($n_B = 5$) training examples from each minority class and $n_A = 6R$ ($n_A = 5R$) training examples from each majority class in FashionMNIST (CIFAR10). Note that when $R=1000$, the test accuracy on the minority classes can be lower than $10\%$ because the trained neural networks misclassify many examples in the minority classes as some majority classes. }		
\label{fig:comparison}
\end{figure}

Next, in order to get a handle on how Minority Collapse impacts the test accuracy, we plot the results of another numerical study in Figure~\ref{fig:comparison}. The setting is the same as Figure~\ref{fig:collapse}, except that now we randomly sample $6$ or $5$ examples per class for the minority classes depending on whether the dataset is FashionMNIST or CIFAR10. The results show that the performance of the trained model deteriorates in the test data when the imbalance ratio $R = 1000$, when Minority Collapse has occurred or is about to occur. This is by no means intuitive a priori as the test performance is only restricted to the minority classes and a large value of $R$ only leads to more training data in the majority classes without affecting the minority classes at all.


It is worthwhile to mention that the emergence of Minority Collapse would prevent the model from achieving zero training error. This is because its prediction is uniform over the minority classes and, therefore, the ``argmax'' rule does not give the correct label for a training example from a minority class. As such, the occurrence of Minority Collapse is a departure from the terminal phase of deep learning training. While this fact seems to contradict conventional wisdom on the approximation power of deep learning, it is important to note that the constraints in the Layer-Peeled Model or, equivalently, weight decay in neural networks limits the expressive power of deep learning models. Besides, it is equally important to recognize that the training error, which mostly occurs in the minority classes, is actually very small when Minority Collapse emerges since the minority examples only account for a small portion of the entire training set. In this spirit, the aforementioned departure is not as significant as it appears at first glance since the training error is generally, if not always, not exactly zero
(see, e.g., \cite{papyan2020prevalence}). From an optimization point of view, a careful examination indicates that Minority Collapse can be attributed to the two constraints in the Layer-Peeled Model or the $\ell_2$ regularization in \eqref{eq:dl_opt}. For example, Figure~\ref{fig:simulation-weight-decay} shows that Minority Collapse occurs earlier with a larger value of $\lambda$. However, this issue does not disappear by simply setting a small penalty coefficient $\lambda$ as the imbalance ratio can be arbitrarily large.

\section{How to Mitigate Minority Collapse?}\label{sec:oversampling}


In this section, we further exploit the use of the Layer-Peeled Model in an attempt to lessen the detrimental effect of Minority Collapse. Instead of aiming to develop a full set of methodologies to overcome this issue, which is beyond the scope of the paper, our aim is to evaluate some simple techniques used for imbalanced datasets.

Among many approaches to handling class imbalance in deep learning (see the review \cite{johnson2019survey}), perhaps the most popular one is to oversample training examples from the minority classes~\cite{buda2018systematic, shu2019meta, cui2019class,NEURIPS2019_621461af}. In its simplest form, this sampling scheme retains all majority training examples while duplicating each training example from the minority classes for $w_r$ times, where the oversampling rate $w_r$ is a positive integer. Oversampling in effect transforms the original problem to the minimization of a new optimization problem by replacing the risk term in  \eqref{eq:dl_opt} with
\begin{equation}\label{eq: loss:adjust weight}
\begin{aligned}
&\frac{1}{\barN_A K_A+w_r\barN_BK_B}\Bigg[ \sum_{k=1}^{K_A}\sum_{i=1}^{\barN_A} \cL( \bff(\bx_{k,i}; \wf), \by_k  )\\
&\quad\quad\quad\quad\quad\quad\quad+  w_r\!\!\!\sum_{k=K_A+1}^{K}\!\sum_{i=1}^{\barN_B} \cL( \bff(\bx_{k,i}; \wf), \by_k  )\Bigg]
\end{aligned}
\end{equation}
while keeping the penalty term $\frac{\lambda}{2} \|\wf\|^2$. Note that oversampling is closely related to weight adjusting (see more discussion in SI Appendix).

A close look at \eqref{eq: loss:adjust weight} suggests that the neural network obtained by minimizing this new program might behave as if it were trained on a (larger) dataset with $n_A$ and $w_r n_B$ examples in each majority class and minority class, respectively. To formalize this intuition, as earlier, we start by considering the Layer-Peeled Model in the case of oversampling:
\begin{align}\label{eq: imbalance weight}
  \min_{\bTheta, \bW} ~&  \frac{1}{N'}\!\!\left[ \sum_{k=1}^{K_A}\sum_{i=1}^{\barN_A} \cL( \bW\btheta_{k,i}, \by_k  ) +  w_r\!\!\!\!\sum_{k=K_A+1}^{K}\!\!\sum_{i=1}^{\barN_B} \cL( \bW\btheta_{k,i}, \by_k  )\right]\notag \\
\mathrm{s.t.}~& \frac{1}{K}\sum_{k=1}^K \left\|\bw_k \right\|^2 \leq E_{W},\\
&\!\!\!\!\!\!\!\frac{1}{K}\sum_{k=1}^{K_A} \frac{1}{n_A}\sum_{i=1}^{n_A}\left\|\btheta_{k,i} \right\|^2 + \frac{1}{K}\sum_{k=K_A+1}^K \frac{1}{n_B}\sum_{i=1}^{n_B}\left\|\btheta_{k,i} \right\|^2\leq E_{\Theta}\notag,
\end{align}
where $N' :=\barN_A K_A+w_r\barN_BK_B $.

\begin{figure}[!htp]
		\centering
        \hspace{0.01in}
		\subfigure[VGG11 on FashionMNIST]{
			\centering
			\includegraphics[scale=0.21]{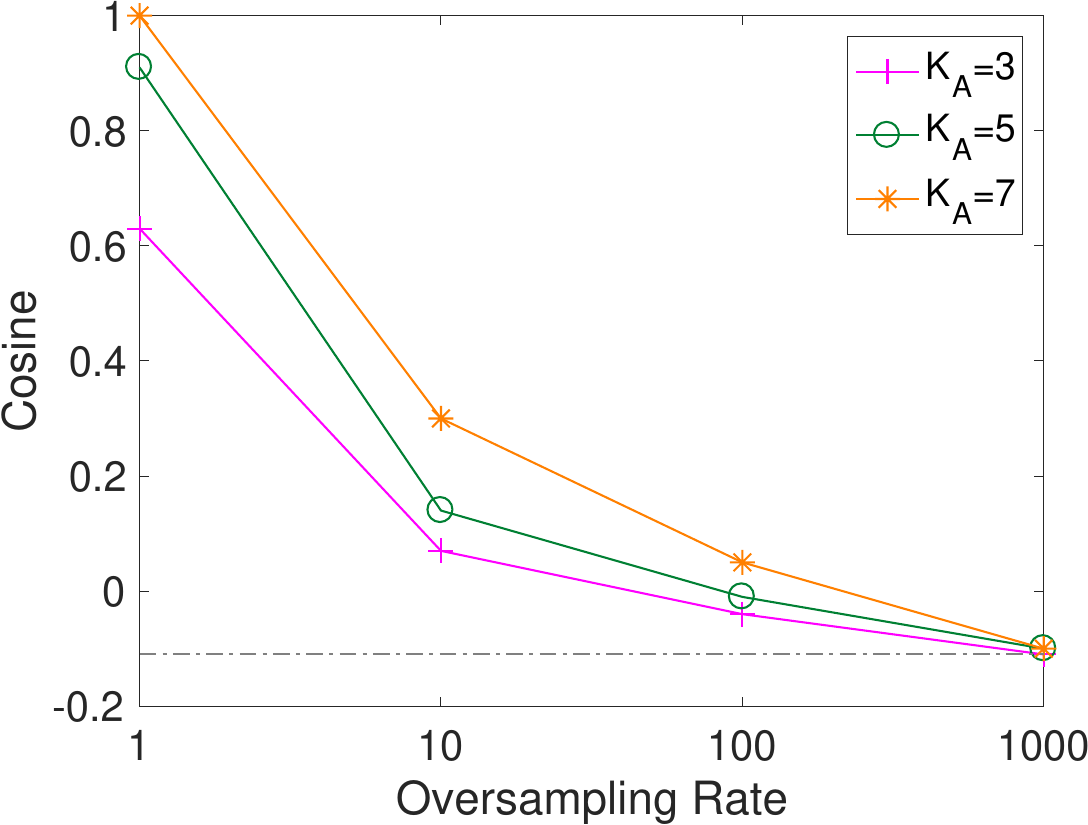}
			\label{fig:vgg-fashionmnist-weights}}
        \hspace{0.01in} 
        	\subfigure[VGG13 on CIFAR10]{
			\centering
			\includegraphics[scale=0.21]{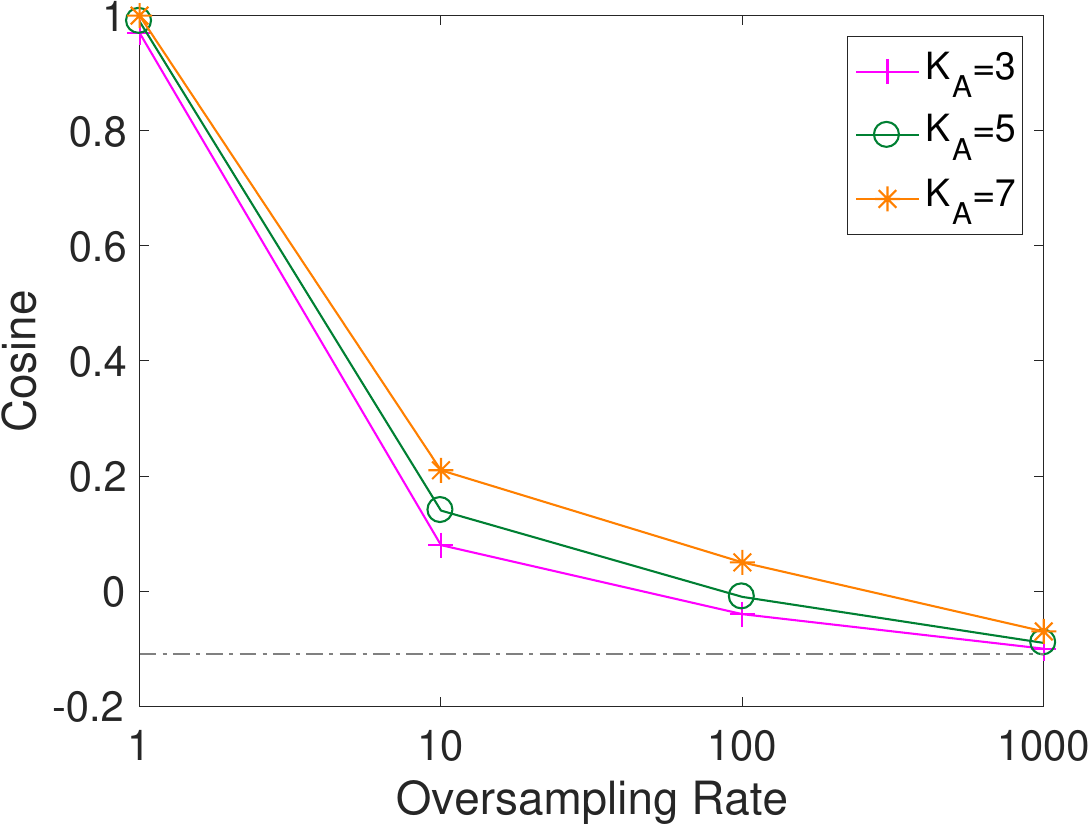}
			\label{fig:vgg-cifar10-weights}}
        \hspace{0.01in}
     	\subfigure[ResNet18 on FashionMNIST]{
			\centering
			\includegraphics[scale=0.21]{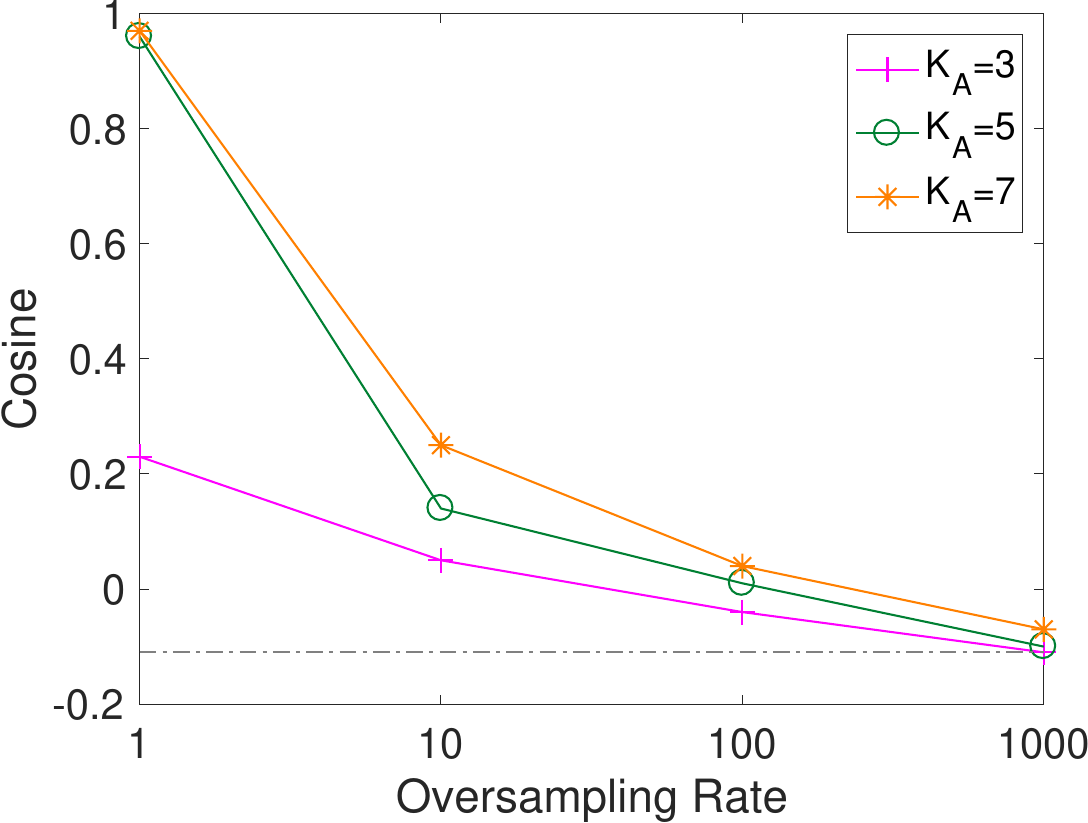}
			\label{fig:resnet-fashionmnist-weights}} 
		\subfigure[ResNet18 on CIFAR10]{
			\centering
			\includegraphics[scale=0.21]{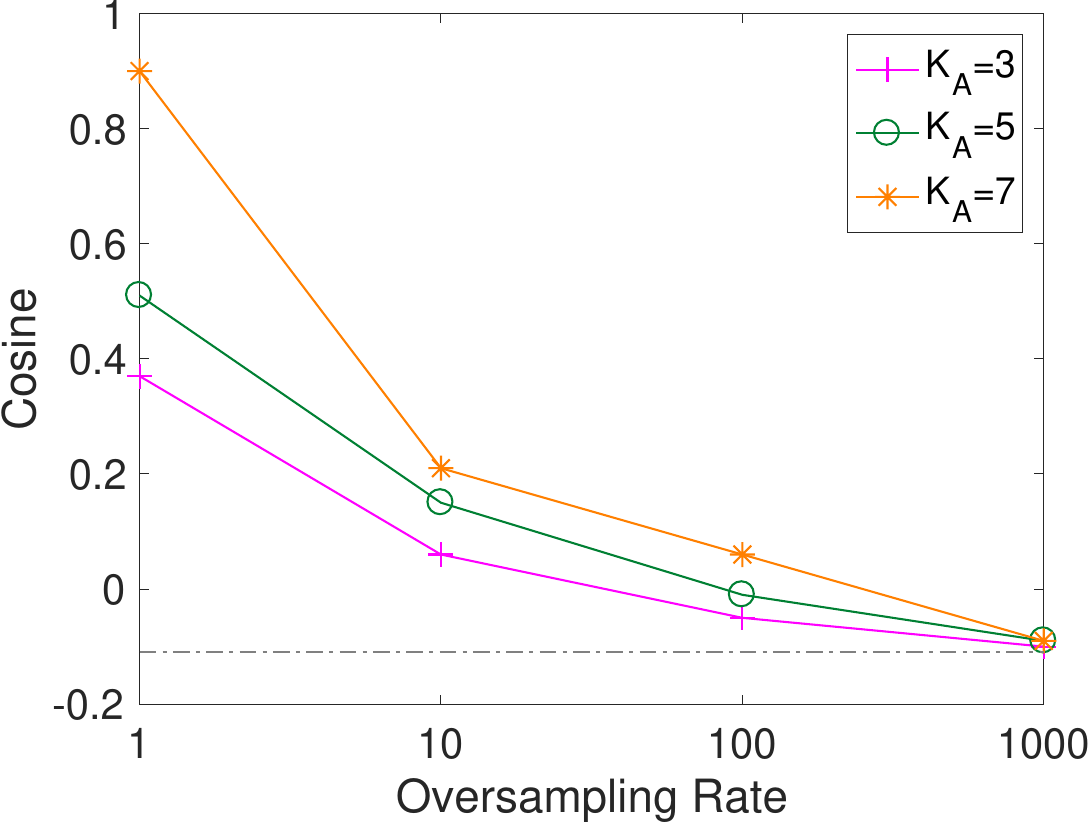}
			\label{fig:resnet-cifar10-weights}}
		\caption{Effect of oversampling when the imbalance ratio is $R=1000$. Each plot shows the average cosine of the between-minority-class angles. The results indicate that increasing the oversampling rate would enlarge the between-minority-class angles. }		
\label{fig:weights}
\end{figure}

The following result confirms our intuition that oversampling indeed boosts the size of the minority classes for the Layer-Peeled Model.
\begin{proposition}\label{proposition: imbalance weight}
Assume $p\geq 2K$ and the loss function $\cL$ is convex in the first argument. Let $\bX^{\star}$ be any minimizer of the convex program (\ref{eq:convex sdp problem}) with $n_1 = n_2 = \dots = n_{K_A} = \barN_A$ and  $n_{K_A+1} = n_{K_A+2}= \dots = n_{K} = w_r\barN_B$.
 Define $\left(\bTheta^{\star}, \bW^{\star}\right)$ as
\begin{equation}\label{eq: imblance solution}
    \begin{aligned}
     &\left[\btheta_{1}^{\star},\btheta_{2}^{\star},\dots,\btheta_{K}^{\star},  (\bW^{\star})^\top\right] =\bP (\bX^{\star})^{1/2} ,\\
&~~\btheta_{k,i}^{\star} = \btheta_{k}^{\star},~\text{ for all } 1 \le i \le n_A, 1 \le k \le K_A,\\
&~~\btheta_{k,i}^{\star} = \btheta_{k}^{\star}, ~\text{ for all } 1 \le i \le n_B, K_A < k \le K,
    \end{aligned}
\end{equation}
where $\bP\in \RR^{p\times 2K}$ is any partial orthogonal matrix such that $\bP^\top \bP = \bI_{2K}$.  Then $(\bTheta^{\star}, \bW^{\star})$ is a global minimizer of the oversampling-adjusted Layer-Peeled Model~\eqref{eq: imbalance weight}. Moreover, if all $\bX^{\star}$'s satisfy $\frac{1}{K}\sum_{k=1}^K \bX^{\star}(k,k) = \Etheta$, then all the solutions of    \eqref{eq: imbalance weight} are in the form of \eqref{eq: imblance solution}.
\end{proposition}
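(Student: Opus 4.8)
The plan is to mirror, essentially verbatim, the two‑sided relaxation argument behind Lemma~\ref{theo:to convex}, now relating the oversampling‑adjusted Layer‑Peeled Model \eqref{eq: imbalance weight} to the convex program \eqref{eq:convex sdp problem} instantiated with the enlarged minority sizes $n_1 = \cdots = n_{K_A} = \barN_A$ and $n_{K_A+1} = \cdots = n_K = w_r\barN_B$, so that $N = K_A\barN_A + K_B w_r\barN_B$. Conceptually, \eqref{eq: imbalance weight} is just the plain model \eqref{eq: NN simplified model} for this enlarged dataset: replicating each minority feature $w_r$ times carries a feasible point of \eqref{eq: imbalance weight} to a feasible point of the enlarged plain model with the same objective (the $1/(w_r\barN_B)$ normalization in the constraint collapses to the $1/\barN_B$ normalization of \eqref{eq: imbalance weight}), and, conversely, Jensen's inequality lets one symmetrize any feasible point of the enlarged plain model within each class at no cost to the objective and then descend to \eqref{eq: imbalance weight}; hence the two problems share their optimal value and their optimal class‑mean configurations.

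First I would establish that the optimal value of \eqref{eq:convex sdp problem} is at most that of \eqref{eq: imbalance weight}. Given feasible $(\bTheta, \bW)$ for \eqref{eq: imbalance weight}, set $\bh_k := \frac{1}{n_k}\sum_i \btheta_{k,i}$ and let $\bX$ be the $2K\times 2K$ Gram matrix of $(\bh_1,\dots,\bh_K,\bw_1,\dots,\bw_K)$. Then $\bX\succeq 0$; the inequality $\frac{1}{n_k}\sum_i\|\btheta_{k,i}\|^2 \ge \|\bh_k\|^2$ (Cauchy--Schwarz) applied class by class gives $\frac1K\sum_{k=1}^K\bX(k,k) \le \Etheta$, while $\frac1K\sum_{k=K+1}^{2K}\bX(k,k) = \frac1K\sum_k\|\bw_k\|^2 \le \Ew$ is immediate; and convexity of $\cL$ in its first argument together with Jensen's inequality yields, after tracking the weights $\barN_A$ and $w_r\barN_B$ against $N$, that the objective of \eqref{eq:convex sdp problem} at $\bX$ does not exceed that of \eqref{eq: imbalance weight} at $(\bTheta,\bW)$. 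For the reverse inequality, take a minimizer $\bX^\star$ of \eqref{eq:convex sdp problem} and build $(\bTheta^\star,\bW^\star)$ as in \eqref{eq: imblance solution} using a partial orthogonal $\bP$ with $\bP^\top\bP = \bI_{2K}$ (which exists since $p\ge 2K$); because $\bP$ acts isometrically on its row space, $\|\btheta^\star_k\|^2 = \bX^\star(k,k)$, $\|\bw^\star_k\|^2 = \bX^\star(K+k,K+k)$ and $\bW^\star\btheta^\star_k = \bz_k$, so $(\bTheta^\star,\bW^\star)$ is feasible for \eqref{eq: imbalance weight} and, its within‑class features being constant, its objective equals the optimal value of \eqref{eq:convex sdp problem}. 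Combining the two bounds shows the values coincide and that \eqref{eq: imblance solution} is a global minimizer of \eqref{eq: imbalance weight}.

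For the ``moreover'' part I would take an arbitrary minimizer $(\bTheta^\star,\bW^\star)$ of \eqref{eq: imbalance weight}, form the Gram matrix $\bX$ of its class means $\bh^\star_k$ and classifiers $\bw^\star_k$, and observe (by the first bound) that $\bX$ is feasible for \eqref{eq:convex sdp problem} with objective no larger than the common optimal value, hence $\bX$ is itself a minimizer of \eqref{eq:convex sdp problem}. If every minimizer of \eqref{eq:convex sdp problem} saturates $\frac1K\sum_{k=1}^K\bX^\star(k,k) = \Etheta$, then $\frac1K\sum_k\|\bh^\star_k\|^2 = \Etheta$; together with feasibility $\frac1K\sum_{k\le K_A}\frac1{n_A}\sum_i\|\btheta^\star_{k,i}\|^2 + \frac1K\sum_{k>K_A}\frac1{n_B}\sum_i\|\btheta^\star_{k,i}\|^2 \le \Etheta$ and the class‑wise Cauchy--Schwarz bound, equality is forced in every Cauchy--Schwarz step, so $\btheta^\star_{k,i} = \bh^\star_k$ for all $i$. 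Thus the solution has the form \eqref{eq: imblance solution} with $\bX^\star$ the (minimizing) Gram matrix of $(\bh^\star_1,\dots,\bh^\star_K,\bw^\star_1,\dots,\bw^\star_K)$ and $\bP$ a partial orthogonal matrix realizing this Gram matrix.

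I expect the argument to be largely routine once Lemma~\ref{theo:to convex} is available; the only points that need care are (i) the bookkeeping of the oversampling weight $w_r$ and the normalizer $N = K_A\barN_A + K_B w_r\barN_B$, so that the bracketed objective of \eqref{eq: imbalance weight} with constant within‑class features indeed equals $\sum_k \frac{n_k}{N}\cL(\bz_k,\by_k)$ under the substitution $n_{K_A+1} = \cdots = n_K = w_r\barN_B$, and (ii) the standard but slightly fiddly linear‑algebra step of realizing a prescribed $2K\times 2K$ positive semidefinite matrix as the Gram matrix of vectors in $\RR^p$ via a partial orthogonal $\bP$, and identifying all such realizations with $\bP(\bX^\star)^{1/2}$ up to an orthogonal change of basis --- both of which can be imported from the proof of Lemma~\ref{theo:to convex} without change.
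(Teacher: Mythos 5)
Your proposal is correct and matches the paper's approach: the paper omits the proof of Proposition~\ref{proposition: imbalance weight}, stating only that it follows by the same argument as Lemma~\ref{theo:to convex}, and your plan carries out exactly that argument (Jensen/Cauchy--Schwarz in one direction, the $\bP(\bX^{\star})^{1/2}$ construction in the other, and the saturation-forces-collapse contrapositive for uniqueness), with the correct bookkeeping of the oversampling weights $n_k = \barN_A$ and $n_k = w_r\barN_B$ against $N = K_A\barN_A + K_B w_r \barN_B$.
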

Together with Lemma \ref{theo:to convex}, Proposition \ref{proposition: imbalance weight} shows that the number of training examples in each minority class is now in effect $w_r\barN_B$ instead of $\barN_B$ in the Layer-Peeled Model. In the special case $w_r = \barN_A/\barN_B \equiv R$, the results show that all the angles are equal between any given pair of the last-layer classifiers, no matter if they fall in the majority or minority classes.




We turn to Figure~\ref{fig:weights} for an illustration of the effects of oversampling on real-world deep learning models, using the same experimental setup as in Figure~\ref{fig:comparison}. From Figure~\ref{fig:weights}, we see that the angles between pairs of the minority classifiers become larger as the oversampling rate $w_r$ increases. Consequently, the issue of Minority Collapse becomes less detrimental in terms of training accuracy as $w_r$ increases. This again corroborates the predictive ability of the Layer-Peeled Model.

\begin{table}[!htp]
\centering
\scalebox{0.735}{
\begin{tabular}{c||c|c|c||c|c|c}
\hline
Network architecture & \multicolumn{3}{|c||}{VGG11} & \multicolumn{3}{|c}{ResNet18} \\ \hline \hline
No.~of majority classes & $K_A=3$ & $K_A=5$ & $K_A=7$ & $K_A=3$ & $K_A=5$ & $K_A=7$  \\ \hline \hline
Original (minority) & 15.29 & 20.30 & 17.00 & 30.66 & 34.26 & 5.53 \\ \hline
Oversampling (minority) & {\bf 41.13} & {\bf 57.22} &  {\bf 30.50} & {\bf 37.86} & {\bf 53.46} & {\bf 8.13} \\ \hline 
Improvement (minority) & 25.84& 36.92 & 13.50 & 7.20 & 19.20 & 2.60 \\ \hline \hline
Original (overall) & 40.10 & 57.61 & 69.09 & 50.88 & 64.89 & 66.13 \\ \hline
Oversampling (overall) & {\bf 58.25} & {\bf 76.17} & {\bf  73.37} & {\bf 55.91} & {\bf 74.56} & {\bf 67.10} \\ \hline
Improvement (overall) & 18.15& 18.56& 4.28 & 5.03 & 9.67 & 0.97 \\ \hline
\end{tabular}
}
\caption{
Test accuracy (\%) on FashionMNIST when $R = 1000$. For example, ``Original (minority)'' means that the test accuracy is evaluated only on the minority classes and oversampling is not used. When oversampling is used, we report the best test accuracy among four oversampling rates: $1$, $10$, $100$, and $1000$. The best test accuracy is never achieved at $w_r = 1000$, indicating that oversampling with a large $w_r$ would impair the test performance. 
}
\label{table:weighted-algorithm}
\end{table} 

Next, we refer to Table~\ref{table:weighted-algorithm} for effect on the test performance. The results clearly demonstrate the improvement in test accuracy using oversampling, with certain choices of the oversampling rate. The improvement is noticeable on both the minority classes and all classes.

Behind the results of Table~\ref{table:weighted-algorithm}, however, it reveals an issue when addressing Minority Collapse by oversampling. Specifically, this technique might lead to degradation of test performance using a very large oversampling rate $w_r$, which though can mitigate Minority Collapse. How can we efficiently select an oversampling rate for optimal test performance? More broadly, Minority Collapse does not seem likely to be fully resolved by sampling-based approaches alone, and the doors are widely open for future investigation.

\section{Discussion}
\label{sec:discuss}

In this paper, we have developed the Layer-Peeled Model as a simple yet effective modeling strategy toward understanding well-trained deep neural networks. The derivation of this model follows a top-down strategy by isolating the last layer from the remaining layers. Owing to the analytical and numerical tractability of the Layer-Peeled Model, we provide some explanation of a recently observed phenomenon called neural collapse in deep neural networks trained on balanced datasets~\cite{papyan2020prevalence}. Moving to imbalanced datasets, an analysis of this model suggests that the last-layer classifiers corresponding to the minority classes would collapse to a single vector once the imbalance level is above a certain threshold. This new phenomenon, which we refer to as Minority Collapse, occurs consistently in our computational experiments.

The efficacy of the Layer-Peeled Model in analyzing well-trained deep learning models implies that the ansatz \eqref{eq:ansatz}---a crucial step in the derivation of this model---is at least a useful approximation. Moreover, this ansatz can be further justified by the following result in an indirect manner, which, together with Theorem~\ref{theo: cross-entropy balance}, shows that the $\ell_2$ norm suggested by the ansatz happens to be the only choice among all the $\ell_q$ norms that is consistent with empirical observations. Its proof is given in SI Appendix.

\begin{proposition}\label{theo:counter example}
Assume $K\geq3$ and $p\geq K$.\footnote{See discussion in the case $K = 2$ in SI Appendix.} For any  $q\in (0,2)\cup (2, \infty)$, consider the optimization problem
\begin{equation}\label{eq:counterproblem}
    \begin{aligned}
\min_{\bW, \bH} \quad& \frac{1}{N} \sum_{k=1}^K\sum_{i=1}^{\barN} \cL( \bW\btheta_{k,i}, \by_k  )\\
\mathrm{s.t.}\quad& \frac{1}{K}\sum_{k=1}^K\left\|\bw_k \right\|^2 \leq \Ew,\\
&  \frac{1}{K}\sum_{k=1}^K\frac{1}{\barN}\sum_{i=1}^{\barN}\left\|\btheta_{k,i} \right\|^q_q \leq \Etheta,   
    \end{aligned}
\end{equation}
where $\cL$ is the cross-entropy loss. Then, any global minimizer of this program does not satisfy \eqref{eq:solution} for any positive numbers $C$ and $C'$. That is, neural collapse does not emerge in this model.

\end{proposition}

While the Layer-Peeled Model has demonstrated its noticeable effectiveness, it requires future investigation for consolidation and extension. First, an analysis of the gap between the Layer-Peeled Model and well-trained deep learning models would be a welcome advance. For example, how does the gap depend on the neural network architectures? How to take into account the sparsity of the last-layer features when using the ReLU activation function? From a different angle, a possible extension is to retain multiple layers following the top-down viewpoint. Explicitly, letting $1 \le m < L$ be the number of the top layers we wish to retain in the model, we can represent the prediction of the neural network as $\bff(\bx, \wf) = \bff(\bh(\bx; \bW_{1:(L-m)}), \bW_{(L-m+1):L})$ by letting $\bW_{1:(L-m)}$ and $\bW_{(L-m+1):L}$ be the first $L-m$ layers and the last $m$ layers, respectively. Consider the $m$-Layer-Peeled Model: 
\begin{equation}\nonumber
\begin{aligned}
\min_{\bW,\bTheta} \quad& \frac{1}{N} \sum_{k=1}^K \sum_{i=1}^{n_k} \cL( \bff(\bh_{k,i}, \bW_{(L-m+1):L}), \by_k )\\
\mathrm{s.t.}\quad &     \frac{1}{K} \|\bW_{(L-m+1):L}\|^2 \leq E_{W},\\
&\frac{1}{K}\sum_{k=1}^K \frac{1}{n_k}\sum_{i=1}^{n_k}\left\|\btheta_{k,i} \right\|^2 \leq E_{\Theta}.
\end{aligned}
\end{equation}
The two constraints might be modified to take into account the network architectures. An immediate question is whether this model with $m = 2$ is capable of capturing new patterns of deep learning training. 


From a practical standpoint, the Layer-Peeled Model together with its convex relaxation \eqref{eq:convex sdp problem} offers an analytical and computationally efficient technique to identify and mitigate bias induced by class imbalance. An interesting question is to extend Minority Collapse from the case of two-valued class sizes to general imbalanced datasets. Next, as suggested by our findings in Section~\ref{sec:oversampling}, how should we choose loss functions in order to mitigate Minority Collapse~\cite{NEURIPS2019_621461af}? Last, a possible use case of the Layer-Peeled Model is to design more efficient sampling schemes to take into account fairness considerations~\citep{buolamwini2018gender,zou2018ai,mehrabi2019survey}. 


Broadly speaking, insights can be gained not only from the Layer-Peeled Model but also from its modeling strategy. The details of empirical deep learning models, though formidable, can often be simplified by rendering a certain part of the network \textit{modular}. When the interest is about the top few layers, for example, this paper clearly demonstrates the benefits of taking a top-down strategy for modeling neural networks especially in consolidating our understanding of previous results and in discovering new patterns. Owing to its mathematical convenience, the Layer-Peeled Model shall open the door for future research extending these benefits.

\showmatmethods{} 

\acknow{We are grateful to X.Y.~Han for helpful discussions about some results of \cite{papyan2020prevalence} and feedback on an early version of the manuscript. We thank Gang Wen and Qinqing Zheng for helpful comments. We thank the two anonymous referees for their constructive comments that helped improve the presentation of this work. This work was supported in part by NIH through RF1AG063481, NSF through
CAREER DMS-1847415 and CCF-1934876, an Alfred Sloan Research Fellowship, and the Wharton Dean's Research Fund.}

\showacknow{} 

\bibliography{reference}

\onecolumn
\clearpage
\appendix
For simplicity,   we define  $[m_1:m_2] := \{m_1, m_1+1, \dots,m_2 \}$ for $m_1,m_2\in\mathbb{N}$ with $m_1\leq m_2$ and $[m_2] := [1:m_2]$ for $m_2\geq1$.  

\subsection{Balanced Case}
\label{app:balanced}

\subsubsection{Proofs of Theorem \ref{theo: cross-entropy balance} and Proposition \ref{theo:counter example} }\label{sec:balance proof 1}

 Because there are multiplications of variables in the objective function,  \eqref{eq: NN simplified model} is nonconvex. Thus the KKT condition is not sufficient for  optimality. To prove Theorem \ref{theo: cross-entropy balance}, we  directly determine the global minimum  of  \eqref{eq: NN simplified model}. During this procedure,  one key  step  is to show that minimizing   \eqref{eq: NN simplified model} is equivalent to minimize a  symmetric quadratic function: $$\sum_{i=1}^{\barN}\left[ \left(\sum_{k=1}^K \btheta_{k,i}\right)^{\top}\left(\sum_{k=1}^K \bw_k\right)  -K\sum_{k=1}^{K}\btheta_{k,i}^\top\bw_k \right]$$ under  suitable conditions. The detail is shown below. 
\begin{proof}[Proof of Theorem \ref{theo: cross-entropy balance}]
By the concavity of $\log(\cdot)$,  for any $\bz\in \RR^K$, $k\in[K]$, constants $C_a,C_b>0$, letting $C_c = \frac{C_b}{(C_a+C_b)(K-1)}$,  we have
\begin{align}\label{eq:logsim}
 - \log\left(  \frac{\bz(k)}{\sum_{\kk=1}^{K}\bz(\kk) }   \right) =& - \log (\bz(k)) + \log \left( \sum_{\kk=1}^{K}\bz(\kk) \right)\notag\\
 =&   - \log(\bz(k))+ \log\left( \frac{C_a}{C_a+C_b} \left(\frac{(C_a+C_b)~\bz(k)}{C_a}\right) + C_c\sum_{\kk=1, \kk\neq k}^K  \frac{\bz(\kk) }{C_c}           \right).
\end{align}
Recognizing the equality
\[
\frac{C_a}{C_a+C_b} + \underbrace{C_c + \cdots + C_c}_{K-1} = \frac{C_a}{C_a+C_b} + (K-1) \frac{C_b}{(C_a+C_b)(K-1)} = 1
\]
and the concavity of $\log(\cdot)$, we see that the Jensen inequality gives
\begin{equation}\label{eq:log_jensen}
\log\left( \frac{C_a}{C_a+C_b} \left(\frac{(C_a+C_b)~\bz(k)}{C_a}\right) + C_c\sum_{\kk=1, \kk\neq k}^K  \frac{\bz(\kk) }{C_c} \right) \ge \frac{C_a}{C_a+C_b} \log\left( \frac{(C_a+C_b)~\bz(k)}{C_a}\right) + C_c\sum_{\kk=1, \kk\neq k}^K \log\left(  \frac{\bz(\kk) }{C_c} \right).
\end{equation}
Plugging this inequality into \eqref{eq:logsim}, we get
\begin{align*}
 - \log\left(  \frac{\bz(k)}{\sum_{\kk=1}^{K}\bz(\kk) }   \right) \ge&  - \log(\bz(k)) + \frac{C_a}{C_a+C_b} \log\left( \frac{(C_a+C_b)~\bz(k)}{C_a}\right) + C_c\sum_{\kk=1, \kk\neq k}^K \log\left(  \frac{\bz(\kk) }{C_c} \right)\\
=& - \frac{C_b}{C_a+C_b}\left[ \log(\bz(k)) - \frac{1}{K-1} \sum_{\kk=1, \kk\neq k}^K \log(\bz(\kk))  \right] +C_d,
\end{align*}
where the constant $C_d :=  \frac{C_a}{C_a+C_b}\log(\frac{C_a+C_b}{C_a})+ \frac{C_b}{C_a+C_b}\log(1/C_c)$. Note that in \eqref{eq:logsim},  $C_a$ and $C_b$ can be any positive numbers.  To prove Theorem \ref{theo: cross-entropy balance}, we set  $C_a:= \exp\left(\sqrt{\Etheta\Ew} \right)$ and $C_b:=  \exp\left(-\sqrt{\Etheta\Ew}/(K-1) \right)$, which shall lead to the tightest lower bound for the objective of \eqref{eq: NN simplified model}.  Applying \eqref{eq:logsim} to the objective, we have
\begin{align}\label{eq: lower cross}
& \frac{1}{N} \sum_{k=1}^K\sum_{i=1}^{\barN} \cL( \bW\btheta_{k,i}, \by_k  )\\
 \geq&\frac{C_b}{(C_a+C_b)N(K-1)}\sum_{i=1}^{\barN}\left[ \left(\sum_{k=1}^K \btheta_{k,i}\right)^{\top}\left(\sum_{k=1}^K \bw_k\right)  -K\sum_{k=1}^{K}\btheta_{k,i}^\top\bw_k \right]+ C_d. \notag
\end{align}
 Defining $\bartheta_i := \frac{1}{K}\sum_{k=1}^K \btheta_{k,i}$ for $i\in[\barN]$, it follows from the simple inequality $2ab \le a^2 + b^2$ that
 \begin{align}\label{eq:padd}
   &\sum_{i=1}^{\barN}\left[ \left(\sum_{k=1}^K \btheta_{k,i}\right)^{\top}\left(\sum_{k=1}^K \bw_k\right)  -K\sum_{k=1}^{K}\btheta_{k,i}^\top\bw_k \right]\notag\\
 =&K\sum_{i=1}^{\barN} \sum_{k=1}^K(\bartheta_i  - \btheta_{k,i})^\top \bw_k \notag\\
 \geq&  - \frac{K}{2}\sum_{k=1}^{K} \sum_{i=1}^{\barN}  \|\bartheta_i  - \btheta_{k,i} \|^2 /C_e-  \frac{C_eN}{2}\sum_{k=1}^{K}\| \bw_k\|^2,   
 \end{align}
where we pick $C_e := \sqrt{\Etheta/\Ew}$. The two terms in the right hand side of \eqref{eq:padd} can be bounded  via the constraints of   \eqref{eq: NN simplified model}. Specifically,  we have
\begin{equation}\label{eq:bounw}
    \frac{C_e N}{2}\sum_{k=1}^K\| \bw_k\|^2 \leq \frac{KN\sqrt{\Etheta\Ew}}{2},
\end{equation}
and 
\begin{align}\label{eq:boundtheta}
  \frac{K}{2}\sum_{k=1}^K\sum_{i=1}^{\barN}   \|\bartheta_i  - \btheta_{k,i} \|^2 /C_e 
  &\overset{a}=   \frac{K^2}{2C_e}\sum_{i=1}^{\barN}\left(    \frac{1}{K} \sum_{k=1}^K \|\btheta_{k,i}\|^2 - \|\bartheta_i\|^2   \right) \notag\\
  &\leq  \frac{K}{2C_e}  \sum_{k=1}^{K}\sum_{i=1}^{\barN} \|\btheta_{k,i}\|^2\leq \frac{KN\sqrt{\Etheta\Ew}}{2},
\end{align}
where $\overset{a}=$ uses the fact that $\E \|\ba-\E [\ba]  \|^2 =  \E \|\ba  \|^2  - \|\E[\ba]\|^2$. Thus plugging \eqref{eq:padd},  \eqref{eq:bounw}, and \eqref{eq:boundtheta}  into  \eqref{eq: lower cross}, we have
\begin{equation}\label{eq:lowerboundff}
    \frac{1}{N} \sum_{k=1}^K\sum_{i=1}^{\barN} \cL( \bW\btheta_{k,i}, \by_k  ) \geq -\frac{C_b}{C_a+C_b}\frac{K\sqrt{\Etheta\Ew}}{K-1} +C_d:=L_0.
\end{equation}
Now we check the conditions that reduce \eqref{eq:lowerboundff} to an equality.

By the strict concavity of $\log(\cdot)$, \eqref{eq:log_jensen} reduces to an equality only if $$\frac{(C_a+C_b)~\bz(k)}{C_a} = \frac{\bz(\kk) }{C_c}$$
for $\kk\neq k$. 
Therefore, \eqref{eq: lower cross} reduces to an equality only if 
$$ \frac{(C_a+C_b) \btheta_{k,i}^\top \bw_k}{C_a} =  \frac{ \btheta_{k,i}^\top \bw_{\kk} }{C_c}. $$
Recognizing $C_c = \frac{C_b}{(C_a+C_b)(K-1)}$ and  taking the logarithm of both sides of the above equation, we obtain 
\begin{equation}
\btheta_{k,i}\bw_k = \btheta_{k,i}\bw_{\kk} + \log\left(\frac{C_a(K-1)}{C_b}\right),  \notag
\end{equation}
for all $(k,i,\kk)\in \{(k,i,\kk): k\in[K], \kk\in[K], \kk\neq k, i\in[\barN]    \}
$.
  \eqref{eq:padd} becomes equality if and only if
\begin{equation}
  \bartheta_i  - \btheta_{k,i}  = - C_e \bw_k, \quad k\in[K], ~ i\in[\barN]. \notag
\end{equation} 
 \eqref{eq:bounw} and \eqref{eq:boundtheta} become equalities if and only if:
\begin{equation}
  \frac{1}{K}\sum_{k=1}^K\frac{1}{\barN}\sum_{i=1}^{\barN}\left\|\btheta_{k,i} \right\|^2= \Etheta,\quad \frac{1}{K}\sum_{k=1}^K\left\|\bw_k \right\|^2= \Ew,\quad
  \bartheta_{i} =\mathbf{0}_p, \notag ~i\in[\barN].
\end{equation}
Applying Lemma \ref{lemma: to ETF} shown in the end of the section, we have $\left(\bTheta, \bW\right)$ satisfies \eqref{eq:solution}. 

Reversely, it is easy to verify that  \eqref{eq:lowerboundff} reduces to equality  when $(\bTheta, \bW)$  admits \eqref{eq:solution}. So $L_0$ is the global minimum  of   \eqref{eq: NN simplified model} and  \eqref{eq:solution} is the unique form for the minimizers. We complete the proof of Theorem \ref{theo: cross-entropy balance}.
\end{proof}

\begin{lemma}\label{lemma: to ETF}
Suppose $\left(\bTheta, \bW\right)$  satisfies 
\begin{equation}\label{eq:paddcon}
  \bartheta_i  - \btheta_{k,i}  = -\sqrt{\frac{\Etheta}{\Ew}} \bw_k, \quad k\in[K], \quad i\in[\barN],
\end{equation} 
and
\begin{equation}\label{eq:conadd}
  \frac{1}{K}\sum_{k=1}^K\frac{1}{\barN}\sum_{i=1}^{\barN}\left\|\btheta_{k,i} \right\|^2= \Etheta,\quad \frac{1}{K}\sum_{k=1}^K\left\|\bw_k \right\|^2= \Ew,\quad
  \bartheta_{i} =\mathbf{0}_p, ~i\in[\barN],
\end{equation}
where $  \bartheta_i := \frac{1}{K}\sum_{k=1}^K \btheta_{k,i} $ with $i\in[n]$.  Moreover,  there exists a constant $C$ such that  for all $(k,i,\kk)\in \{(k,i,\kk): k\in[K], \kk\in[K], \kk\neq k, i\in[\barN]    \}$, we have
\begin{equation}\label{lemma:eqcondition}
\btheta_{k,i}\cdot\bw_k = \btheta_{k,i}\cdot\bw_{\kk} + C.
\end{equation}
Then $\left(\bTheta, \bW\right)$ satisfies \eqref{eq:solution}.
\end{lemma}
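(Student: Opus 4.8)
The plan is to extract information from the three hypotheses one at a time: \eqref{eq:paddcon} combined with $\bartheta_i=\mathbf{0}_p$ pins the features down in terms of the classifiers; \eqref{lemma:eqcondition} then forces the classifiers to be equinorm and equiangular; and finally the active norm constraint together with $\sum_k\bw_k=\mathbf{0}_p$ fixes the common norm and common inner product, at which point the Gram matrix is manifestly that of a simplex ETF. Concretely, substituting $\bartheta_i=\mathbf{0}_p$ (guaranteed by \eqref{eq:conadd}) into \eqref{eq:paddcon} gives at once $\btheta_{k,i}=\sqrt{\Etheta/\Ew}\,\bw_k$ for every $k\in[K]$ and $i\in[\barN]$, so in particular $\btheta_{k,i}$ does not depend on $i$, which is already the first equality in \eqref{eq:solution}. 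Summing this over $k$ and using $\sum_{k=1}^K\btheta_{k,i}=K\bartheta_i=\mathbf{0}_p$ also yields $\sum_{k=1}^K\bw_k=\mathbf{0}_p$, a fact used at the end.

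Next I would feed $\btheta_{k,i}=\sqrt{\Etheta/\Ew}\,\bw_k$ into the angle condition \eqref{lemma:eqcondition}, which becomes $\sqrt{\Etheta/\Ew}\,\big(\|\bw_k\|^2-\bw_k\cdot\bw_{\kk}\big)=C$ for all $k\neq\kk$. For fixed $k$ this already forces $\bw_k\cdot\bw_{\kk}$ to be the same for every $\kk\neq k$; since the inner product is symmetric in $(k,\kk)$, equating the expressions obtained from the pairs $(k,\kk)$ and $(\kk,k)$ forces $\|\bw_k\|^2=\|\bw_{\kk}\|^2$ for all $k,\kk$, and then the active constraint $\frac1K\sum_{k}\|\bw_k\|^2=\Ew$ in \eqref{eq:conadd} gives $\|\bw_k\|^2=\Ew$ for all $k$, while $\bw_k\cdot\bw_{\kk}=\rho$ for a single constant $\rho$ and all $k\neq\kk$. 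Expanding $0=\big\|\sum_{k}\bw_k\big\|^2=K\Ew+K(K-1)\rho$ then pins down $\rho=-\Ew/(K-1)$.

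At this point the Gram matrix of $[\bw_1,\dots,\bw_K]$ equals $\Ew\cdot\frac{K}{K-1}\big(\bI_K-\frac1K\cI_K\cI_K^\top\big)$, which is exactly $\Ew$ times the Gram matrix of the columns of a $K$-simplex ETF (a direct computation from Definition~\ref{def: ETF}). Since $p\ge K-1$, two $p\times K$ matrices with the same Gram matrix differ by a left orthogonal factor, so $[\bw_1,\dots,\bw_K]/\sqrt{\Ew}=:[\bmm_1^\star,\dots,\bmm_K^\star]$ is itself a $K$-simplex ETF; consequently $\bw_k=\sqrt{\Ew}\,\bmm_k^\star$ and $\btheta_{k,i}=\sqrt{\Etheta/\Ew}\,\bw_k=\sqrt{\Etheta}\,\bmm_k^\star$, which is precisely the claimed form \eqref{eq:solution}.

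I do not expect a real obstacle: once the proportionality $\btheta_{k,i}=\sqrt{\Etheta/\Ew}\,\bw_k$ is in hand, everything is elementary linear algebra. The two points that deserve a moment of care are the symmetry argument that upgrades ``$\bw_k\cdot\bw_{\kk}$ is constant for each fixed $k$'' to ``all $\|\bw_k\|$ are equal,'' and the degenerate case $p=K-1$, where Definition~\ref{def: ETF} has to be read with the modification described in its footnote rather than via a genuine partial isometry in $\RR^{p\times K}$.
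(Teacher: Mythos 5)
Your proposal is correct and follows essentially the same route as the paper's proof of this lemma: substitute $\bartheta_i=\mathbf{0}_p$ into \eqref{eq:paddcon} to get $\btheta_{k,i}=\sqrt{\Etheta/\Ew}\,\bw_k$, use the equal-margin condition \eqref{lemma:eqcondition} together with symmetry of the inner product to force equinorm classifiers with a common off-diagonal inner product, and then use $\sum_k\bw_k=\mathbf{0}_p$ and the active constraints to identify the Gram matrix $\bW\bW^\top=\Ew\frac{K}{K-1}\bigl(\bI_K-\frac1K\cI_K\cI_K^\top\bigr)$, hence the simplex-ETF form \eqref{eq:solution}. The only cosmetic difference is that you compute the common inner product via $\bigl\|\sum_k\bw_k\bigr\|^2=0$ while the paper uses $\sum_k\btheta_k\cdot\bw_{\kk}=0$, which is the same computation.
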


\begin{proof}
Combining \eqref{eq:paddcon} with the last equality in  \eqref{eq:conadd}, we have 
\begin{equation}
   \bW =   \sqrt{\frac{\Ew}{\Etheta}} ~ \bigg[\btheta_{1},\ldots,\btheta_{K}\bigg]^\top,\quad\quad
\btheta_{k,i} = \btheta_{k}, ~ k\in[K],~ i\in[\barN]. \notag
\end{equation}
Thus it remains to show  
\begin{eqnarray}\label{eq:wetf}
   \bW = \sqrt{\Ew}~ \left(\bM\right)^\top,
\end{eqnarray}
where $\bM$ is a $K$-simplex ETF.  

Plugging $\btheta_k = \btheta_{k,i} = \sqrt{\frac{\Ew}{\Etheta}} \bw_k$ into \eqref{lemma:eqcondition}, we have, for all $(k,\kk)\in \{(k,\kk):k\in[K], \kk\in[K], \kk\neq k \}$,
  $$ \sqrt{\frac{\Etheta}{\Ew}}\|\bw_k\|^2 = \btheta_{k}\cdot\bw_k = \btheta_{k}\cdot\bw_{\kk} + C =   \sqrt{\frac{\Etheta}{\Ew}}\bw_k\cdot\bw_{\kk} + C,$$
    and 
    $$\sqrt{\frac{\Etheta}{\Ew}}\|\bw_{\kk}\|^2 =  \btheta_{\kk}\cdot\bw_{\kk}= \btheta_{\kk}\cdot\bw_{k} + C  = \sqrt{\frac{\Etheta}{\Ew}}\bw_{\kk}\cdot\bw_{k} + C. $$
   Therefore, from  $\frac{1}{K}\sum_{k=1}^K\left\|\bw_k \right\|^2= \Ew$, we have  $\| \bw_k\| = \sqrt{\Ew}$  and  $\btheta_{k}\bw_{\kk} = C':=  \sqrt{\Etheta\Ew}- C$.
  
  Furthermore, recalling that    $ \bartheta_{i} =\mathbf{0}_p$ for $i\in[\barN]$, we have $\sum_{k=1}^K\btheta_{k} = \mathbf{0}_p$, which further yields  $\sum_{k=1}^K\btheta_{k} \cdot \bw_\kk = 0$ for $\kk\in[K]$. Then it follows from $\btheta_{k}\bw_{\kk} = C'$ and $\btheta_{k}\bw_{k} = \sqrt{\Etheta\Ew}$ that $\btheta_{k}\bw_{\kk} = - \sqrt{\Etheta\Ew}/(K-1)$.  Thus   we obtain 
$$ \bW \bW^\top  =  \sqrt{\frac{\Ew}{\Etheta}} \bW [\btheta_{1},\ldots,\btheta_{K}] = \Ew\left[\frac{K}{K-1}  \left( \bI_K - \frac{1}{K}\cI_K\cI_K^\top \right)\right],  $$
which implies \eqref{eq:wetf}.  We complete the proof.
\end{proof}

\begin{proof}[Proof of Proposition \ref{theo:counter example}]
We introduce the set  $\cS_R$ as  
\begin{equation}\label{eq:couter solution}
 \cS_R := \left\{ \left(\bTheta, \bW\right): \begin{matrix}
  [\btheta_{1},\ldots,\btheta_{K}]  = B_1 b\bP\left[  (a+1)\bI_K -  \cI_K \cI_K^\top\right],\\
   \bW =B_2 B_3  b\left[  (a+1)\bI_K -  \cI_K \cI_K^\top\right]^\top\bP^\top,\\
   \btheta_{k,i} = \btheta_{k}, \quad k\in[K],~ i\in[\barN]  \notag, \\
 b\geq 0,~a\geq 0,~  b^q[a^q +(K-1)] =1,\\
  |B_1|\leq \sqrt{\Etheta}, ~|B_2| \leq \sqrt{\Ew}, ~ B_3\geq 0,~ B_3^2b^2[a^2+(K-1)] =1,\\
  \bP\in \RR^{p\times K}, ~ \bP^\top \bP = \bI_K.
  \end{matrix}
 \right\}
\end{equation}
We can examine that $\cS_R$ admits the constraints of   \eqref{eq: NN simplified model}. So  any  $\left(\bTheta, \bW\right)\in \cS_R$ is a feasible solution. Moreover, one can observe that  this feasible solution has a special symmetry structure: for each $k\in[K]$,   the features in  class $k$ collapse to their mean $\btheta_k$, i.e., (\hyperlink{(NC1)}{NC1}), and  $\bw_k$ is parallel to  $\btheta_k$, i.e., (\hyperlink{(NC3)}{NC3}).  However,  weights do not form  the vertices of ETF unless $a=K-1$.   Therefore, it  suffices to show that the minimizer of  $\frac{1}{N} \sum_{k=1}^K\sum_{i=1}^{\barN} \cL( \bW\btheta_{k,i}, \by_k  )$  in the set $\cS_R$ do not satisfy $a=K-1$.

In fact,  for any $\left(\bTheta, \bW\right)\in \cS_R$,  the  objective function value can be written as a function of $B_1$, $B_2$, $B_3$, $a$, and $b$. We have
\begin{align}
&\frac{1}{N} \sum_{k=1}^K\sum_{i=1}^{\barN} \cL( \bW\btheta_{k,i}, \by_k  )\notag\\
=& -\log\left(\frac{ \exp (B_1B_2B_3 b^2 [a^2+(K-1)]  ) }{\exp(B_1B_2B_3 b^2 [a^2+K-1])+ (K-1)\exp(B_1B_2B_3 b^2[K-2-2a])}\right)\notag\\
 =& -\log\left(\frac{1}{1+ (K-1)\exp( - B_1B_2B_3 b^2(a+1)^2  )}\right). \notag
\end{align}
Then it follows to  maximize    $B_1B_2B_3 b^2(a+1)^2$ or equivalently  $\left[B_1B_2B_3b^2(a+1)^2\right]^2$.  By $B_3^2b^2[a^2+(K-1)] =1$ and $b^q[a^q +(K-1)]=1$, we have
\begin{align}
 \left[B_1B_2B_3 b^2 (a+1)^2\right]^2&\overset{a}{\leq}  \Etheta\Ew \left[B_3^2b^2 (a+1)^2\right]  \left[ b^2 (a+1)^2\right]\notag\\
 &=\Etheta\Ew \left[ \frac{(a+1)^2}{a^2+(K-1)} \right] \left[ \frac{(a+1)^q}{a^q +K-1}\right]^{2/q}, \label{eq:a+1} 
\end{align}
where $\overset{a}{\leq}$ picks $B_1 = \sqrt{\Etheta}$ and $B_2 = \sqrt{\Ew}$.
Let us consider function $g:[0,+\infty) \to \RR: g(x) = \left[ \frac{(x+1)^2}{x^2+(K-1)} \right] \left[ \frac{(x+1)^q}{x^q +K-1}\right]^{2/q}$.   Note that by the first-order optimality, once if $g'(K-1)\neq 0$,  then \eqref{eq:a+1} cannot achieve the maximum at $a=K-1$, which is our desired result. Indeed, we have 
$$g'(K-1) = \frac{2K^4}{\left[(K-1)^2+(K-1)\right]\left[(K-1)^{q} +K-1   \right]^{2/q+1}} \left[(K-1)-(K-1)^{q-1}\right].    $$
Therefore, $a= K-1 \ge 2$ is not the maximizer of \eqref{eq:a+1}, unless $q=2$.
We complete the proof.
\end{proof}

Following the proof of Proposition \ref{theo:counter example}, for completeness we discuss the structure of the global minimizers of Program (\ref{eq:counterproblem}) in the case $K = 2$. In short, we show that when $q\in (1,2)\cup (2, \infty)$, although the global minimizers of \eqref{eq:counterproblem} remain in the form of \eqref{eq:solution}, they are no longer rotationally invariant due to certain constraints on the solutions. This is in contrast to a $K$-simplex ETF, which is rotationally invariant (see Definition~\ref{def: ETF}).




For simplicity of notation, we assume that there is one training example in each class (the case of multiple training examples can be directly extended). Program (\ref{eq:counterproblem}) in the case $K = 2$ takes the following form:
\begin{equation}\label{eq:keq2}
    \begin{aligned}
\min_{\bW, \bH} \quad& -\log\left( \frac{\exp(\bw_1^\top \btheta_1)}{ \exp(\bw_1^\top \btheta_1) +  \exp(\bw_2^\top \btheta_1) }\right) -\log\left( \frac{\exp(\bw_2^\top \btheta_2)}{ \exp(\bw_1^\top \btheta_2) +  \exp(\bw_2^\top \btheta_2) }\right)\\
\mathrm{s.t.}\quad&\quad\quad\quad \left\|\bw_1 \right\|^2 +  \left\|\bw_2 \right\|^2 \leq 2\Ew,\\
& \quad\quad\quad \left\|\btheta_1 \right\|_q^q +   \left\|\btheta_2 \right\|_q^q \leq 2\Etheta.
    \end{aligned}
\end{equation}
We show that the optimal solution to (\ref{eq:keq2}) satisfies some specific ETF structures. In brief, when $q>2$, both the features and weights are parallel to a certain vector, and when $1<q<2$, the solution is sparse in the sense that only one entry is nonzero for both the features and the weights. 
\begin{lemma}
\label{lemma:binary}
For $q>2$, any global minimizer of \eqref{eq:keq2} satisfies
\begin{align}\label{eq:keq2solutionbig}
    \btheta_1^\star = - \btheta_2^\star =C_1 \bw_1^\star = -C_1\bw_2^\star =C_2 (\pm\mathbf{1}_p),
\end{align}
where the constants $C_1 = \left(\frac{\Etheta}{p}\right)^{1/q}\left(\frac{\Ew}{p}\right)^{-1/2}, C_2 = \left(\frac{\Etheta}{p}\right)^{1/q} $, and $\pm\mathbf{1}_p$ denotes a $p$-dimensional vector such that each entry is either $1$ or $-1$ (there are in total $2^p$ such vectors). For $1<q<2$,  any global minimizer of \eqref{eq:keq2} satisfies
\begin{align}\label{eq:keq2solutionsmall}
    \btheta_1^\star = - \btheta_2^\star =C_3 \bw_1^\star = -C_3\bw_2^\star, \quad  \|\btheta_1^\star \|_0 = 1, \quad  \|\btheta_1^\star \| = C_4,
\end{align}
where the constants $C_3 = \Etheta^{1/q}\Ew^{-1/2}$ and $C_4 = \Etheta^{1/q}$.
\end{lemma}

\begin{proof}
For any constants $C_a, C_b>0$, letting $C_c = \frac{C_b}{C_a+C_b}$, using the same arguments as \eqref{eq:logsim} and \eqref{eq: lower cross}, we have
\begin{align}\label{eq:keq2lowebound1}
    &-\log\left( \frac{\exp(\bw_1^\top \btheta_1)}{ \exp(\bw_1^\top \btheta_1) +  \exp(\bw_2^\top \btheta_1) }\right) -\log\left( \frac{\exp(\bw_2^\top \btheta_2)}{ \exp(\bw_1^\top \btheta_2) +  \exp(\bw_2^\top \btheta_2) }\right)\notag\\
    &\geq\frac{C_b}{C_a+C_b}\left[(\btheta_1+\btheta_2)^\top (\bw_1+\bw_2) - 2(\btheta_1^\top\bw_1+ \bh_2^\top \bw_2) \right]+C_d.
\end{align}
Then it follows that
\begin{align}\label{eq:k2sumsplit}
    (\btheta_1+\btheta_2)^\top (\bw_1+\bw_2) - 2(\btheta_1^\top\bw_1+ \bh_2^\top \bw_2)
    =-(\btheta_1 - \btheta_2)^\top(\bw_1 -\bw_2)
    \geq - \left\|\btheta_1 - \btheta_2 \right\|  \left\| \bw_1 -\bw_2\right\|.
\end{align}
We have
\begin{align}\label{eq:boundwk2}
   \left\| \bw_1 -\bw_2\right\|^2  =\left\| \bw_1\right\|^2+\left\| \bw_2\right\|^2 - 2\bw_1^\top \bw_2\leq  2\left\| \bw_1\right\|^2+2\left\| \bw_2\right\|^2 \leq 4\Ew
\end{align}
and 
\begin{align}\label{eq:boundhk2}
  \left\| \btheta_1 -\btheta_2\right\|^2 = \sum_{i=1}^p \left|\btheta_1(i) - \btheta_2(i)  \right|^2 \leq \sum_{i=1}^p \left(|\btheta_1(i)| + |\btheta_2(i)|  \right)^2  \overset{a}\leq 2^{2-2/q}\left[ \sum_{i=1}^p\left( |\btheta_1(i)|^q + |\btheta_2(i)|^q  \right)^{\frac{2}{q}} \right], 
\end{align}
where $\btheta_1(i)$ and $\btheta_2(i)$  denotes the $i$-th entry of $\btheta_1$ and $\btheta_2$, respectively. In $\overset{a}\leq$, we use Jensen's inequality that $\left( \frac{|\btheta_1(i)|+| \btheta_2(i)|}{2}\right)^q\leq \frac{|\btheta_1(i)|^q+|\btheta_2(i)|^q}{2}$ 
since $|x|^q$ is strictly convex.
\begin{itemize}
    \item When $1<q<2$, we pick $C_a =\exp\left( \Etheta^{1/q}\Ew^{1/2}\right)$ and $C_b=1/C_a$.  We  have
    \begin{align}\label{eq:qfrom1to2}
    \sum_{i=1}^p\left( |\btheta_1(i)|^q + |\btheta_2(i)|^q  \right)^{\frac{2}{q}}  \leq     \left( \sum_{i=1}^p |\btheta_1(i)|^q + |\btheta_2(i)|^q   \right)^{2/q} \leq 2^{2/q}\Etheta^{2/q}, 
    \end{align}
\end{itemize}
where the first inequality  uses that $\sum_{i=1}^p|x_i|^o  \leq \left(\sum_{i=1}^{p}|x_i|\right)^o  $ for $o>1$ and the equality holds if and only if the non-zero elements of $\{x_i\}_{i=1}^p$ is at most $1$.
Then by plugging \eqref{eq:boundwk2}, \eqref{eq:boundhk2}, and  \eqref{eq:qfrom1to2} into \eqref{eq:keq2lowebound1}, using \eqref{eq:k2sumsplit}, we have \begin{equation}\label{eq:qfrom1to2ff}
  -\log\left( \frac{\exp(\bw_1^\top \btheta_1)}{ \exp(\bw_1^\top \btheta_1) +  \exp(\bw_2^\top \btheta_1) }\right) -\log\left( \frac{\exp(\bw_1^\top \btheta_2)}{ \exp(\bw_1^\top \btheta_2) +  \exp(\bw_2^\top \btheta_2) }\right) \geq -\frac{C_b}{C_a+C_b}\sqrt{2^{4-4/q}\Ew \Etheta^{2/q} }  +C_d. 
\end{equation}
Now we check the conditions that reduce \eqref{eq:qfrom1to2ff} to an equality. \eqref{eq:k2sumsplit} reduces to an equality if and only if there exists a constant $C_5>0$ such that $ \btheta_1 -\btheta_2 = C_5(\bw_1 -\bw_2)$. \eqref{eq:boundwk2} reduces to an equality if and only if 
$\bw_1 =-\bw_2$ and $\|\bw_1\|^2 = \Ew$. \eqref{eq:boundhk2} reduces to an equality if and only if $\btheta_1 = -\btheta_2$.  Finally, \eqref{eq:qfrom1to2} reduces to an equality if and only if there is only one non-zero entry $i$ such that we exactly have $|\btheta_1(i)|^q + |\btheta_2(i)|^q =2\Etheta $. We can obtain \eqref{eq:keq2solutionbig}.

\item When $q>2$, we pick $C_a =\exp\left(p \left(\frac{\Etheta}{p}\right)^{1/q}\left(\frac{\Ew}{p}\right)^{1/2}\right)$ and $C_b=1/C_a$.  We have
  \begin{align}\label{eq:qlarge2}
    \sum_{i=1}^p\left( |\btheta_1(i)|^q + |\btheta_2(i)|^q  \right)^{\frac{2}{q}}  \leq  p^{1 - 2/q}   \left( \sum_{i=1}^p |\btheta_1(i)|^q + |\btheta_2(i)|^q   \right)^{2/q} \leq p^{1 - 2/q}  2^{2/q} \Etheta^{2/q}, 
    \end{align}
where the first inequality uses Jensen's inequality that $\left(\frac{1}{p}\sum_{i=1}^p |x_i|\right)^a \leq \frac{1}{p}\sum_{i=1}^p|x_i|^{a}$ for $a>1$ since $|x|^{a}$ is strictly convex with respect to $x$, and   let $a = q/2$, and  $x_i = \left(|\btheta_1(i)|^q + |\btheta_2(i)|^q  |\right)^{2/q}\geq0$.
Then by plugging \eqref{eq:boundwk2}, \eqref{eq:boundhk2}, and  \eqref{eq:qlarge2} into \eqref{eq:keq2lowebound1}, using \eqref{eq:k2sumsplit}, we have \begin{equation}\label{eq:qlarge2ff}
  -\log\left( \frac{\exp(\bw_1^\top \btheta_1)}{ \exp(\bw_1^\top \btheta_1) +  \exp(\bw_2^\top \btheta_1) }\right) -\log\left( \frac{\exp(\bw_1^\top \btheta_2)}{ \exp(\bw_1^\top \btheta_2) +  \exp(\bw_2^\top \btheta_2) }\right) \geq -\frac{C_b}{C_a+C_b}\sqrt{2^{4-4/q}p^{1 - 2/q}  \Ew \Etheta^{2/q} }  +C_d. 
\end{equation}
Now we check the conditions that reduce \eqref{eq:qfrom1to2ff} to an equality. In fact,  by the strict convexity, \eqref{eq:qlarge2} reduces to an equality if and only if $|\btheta_1(i)|^q + |\btheta_2(i)|^q =  |\btheta_1(j)|^q + |\btheta_2(j)|^q  $ for all $i\neq j$ and $  \sum_{i=1}^p |\btheta_1(i)|^q + |\btheta_2(i)|^q  = 2\Etheta$. Then by combining the conditions to reduce  \eqref{eq:k2sumsplit}, \eqref{eq:boundwk2},  and  \eqref{eq:boundhk2} to equalities, we can obtain \eqref{eq:keq2solutionbig}.

\end{proof}

\begin{figure}[!htp]
		\centering
		\includegraphics[scale=0.5]{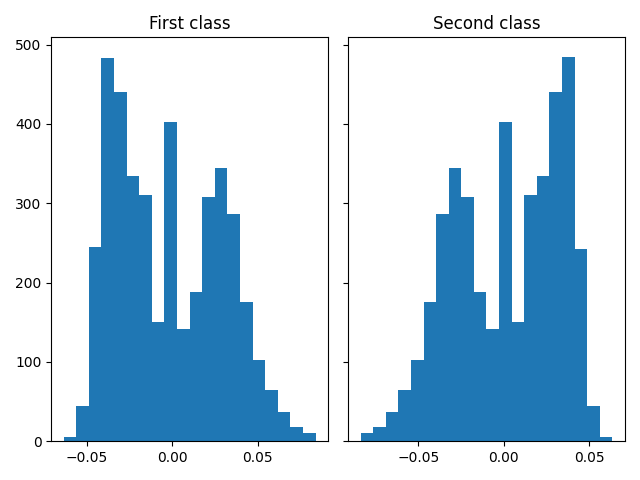}
		\caption{Histograms of last-layer weights of VGG11 trained on the first two classes in FashionMNIST. Each histogram shows the empirical distribution of all the entries of $\bw_1$ or $\bw_2$. If the prediction of Lemma~\ref{lemma:binary} applied to real neural networks for binary classification, then we would observe a mixture of one or two point masses in the histograms, which however is not the case. There are $6000$ examples in each class, and we use the same experimental settings as in Section \ref{subsec:experiments}. The training and test accuracies are $100\%$ and $99.75\%$, respectively. 
}	
\label{fig:binary-weight-distribution}
\end{figure}

Figure~\ref{fig:binary-weight-distribution} displays simulation results concerning the last-layer weights for binary classification using deep neural networks. The results show that last-layer weights exhibit neither the all-ones nor the sparse pattern as in Lemma~\ref{lemma:binary}, thereby implying that the $\ell_2$ norm is the best choice among all $\ell_q$ norms for modeling deep neural networks using the Layer-Peeled Model.


In the case where $q\leq 1$, we conjecture that the $\ell_q$ norm regularizer would also render the solution to \eqref{eq:keq2} sparse. We leave this for future work.

\subsubsection{Proofs of Theorems \ref{proposition: contrastive loss balance} and \ref{proposition: general loss balance}}
The proofs of  Theorems  \ref{proposition: contrastive loss balance} and \ref{proposition: general loss balance}  follow from the similar argument of Theorem \ref{theo: cross-entropy balance}.

\begin{proof}[Proof of Theorem \ref{proposition: contrastive loss balance}]
For $k\in[K]$, $i\in[\barN]$, and $\kk\in[K]$,  define
$$ E_{k, i, \kk}  :=  \frac{1}{\barN}\sum_{j=1}^{\barN}\exp(\btheta_{k,i} \cdot \btheta_{\kk,  j}/\tau). $$
For constants $C_a: =\exp\left(\sqrt{\Etheta\Ew} \right)$  and $ C_b:= (K-1)\exp\left(-\sqrt{\Etheta\Ew} \right)$, let $C_c := \frac{C_b}{(C_a+C_b)(K-1)}$. Using a similar argument as \eqref{eq:logsim}, we have for $j\in[\barN]$, 
\begin{align}\label{eq:contral1}
& -\log\left( \frac{\exp(\btheta_{k,i}\cdot \btheta_{k,j}/\tau )}{ \sum_{\kk=1}^K E_{k,i, \kk}}    \right)\\
=& - \btheta_{k,i}\cdot \btheta_{k,j}/\tau + \log\left( \frac{C_a}{C_a+C_b} \left(\frac{(C_a+C_b)~E_{k,i,k}}{C_a}\right) + C_c\sum_{\kk=1,~\kk\neq k}^{K}  \frac{E_{k,i, \kk} }{C_c}           \right)\notag\\
 \overset{a}{\geq}& - \btheta_{k,i}\cdot \btheta_{k,j}/\tau + \frac{C_a}{C_a+C_b} \log\left( \frac{(C_a+C_b)~E_{k,i,k}}{C_a}\right) + C_c\sum_{\kk=1,~\kk\neq k}^{K} \log\left(  \frac{E_{k,i, \kk} }{C_c} \right)   \notag\\
 \overset{b}=& - \btheta_{k,i}\cdot \btheta_{k,j}/\tau + \frac{C_a}{C_a+C_b} \log\left( E_{k,i,k}\right) + C_c\sum_{\kk=1,~\kk\neq k}^{K} \log\left(E_{k,i, \kk} \right) +C_d\notag\\
\overset{c}\geq& - \btheta_{k,i}\cdot \btheta_{k,j} /\tau+ \frac{C_a}{(C_a+C_b)\barN} \sum_{\ell=1}^{\barN} \btheta_{k,i} \cdot \btheta_{k, \ell}/\tau + \frac{C_c}{\barN}\sum_{\kk=1,~\kk\neq k}^{K} \sum_{\ell=1}^{\barN}\btheta_{k,i} \cdot \btheta_{\kk, \ell} /\tau +C_d.\notag
\end{align}
where $\overset{a}\geq$ and $\overset{c}\geq$  apply the concavity of $\log(\cdot)$ and in   $\overset{b}=$ we define $C_d :=  \frac{C_a}{C_a+C_b}\log(\frac{C_a+C_b}{C_a})+ \frac{C_b}{C_a+C_b}\log(1/C_c)$. Then plugging \eqref{eq:contral1} into the objective function, we have
\begin{align}\label{eq: contral2}
& \frac{1}{N}\sum_{k=1}^K \sum_{i=1}^{\barN} \frac{1}{\barN}\sum_{j=1}^{\barN} -\log\left( \frac{\exp(\btheta_{k,i}\cdot \btheta_{k,j}/\tau )}{\sum_{\kk=1}^K\sum_{\ell=1}^{\barN} \exp (\btheta_{k,i}\cdot \btheta_{\kk, \ell} )}    \right)\\
=&\frac{1}{N}\sum_{k=1}^K \sum_{i=1}^{\barN} \frac{1}{\barN}\sum_{j=1}^{\barN} -\log\left( \frac{\exp(\btheta_{k,i}\cdot \btheta_{k,j}/\tau )}{ \sum_{\kk=1}^K E_{k,i, \kk}}    \right) + \log(n)\notag\\
\overset{\eqref{eq:contral1}}\geq&
\frac{C_b K}{(C_a+C_b)N (K-1)\tau }\sum_{k=1}^K\sum_{i=1}^{\barN} \left(    - \frac{1}{\barN}\sum_{j=1}^{\barN}\left(\btheta_{k,i}\cdot \btheta_{k,j} - \frac{1}{K}\sum_{\kk=1}^K \btheta_{k,i}\cdot \btheta_{\kk, j} \right) \right)+C_d+ \log(n). \notag
\end{align}
Now defining $\bartheta_i := \frac{1}{K}\sum_{k=1}^K \btheta_{k,i}$ for $i\in[\barN]$, a similar argument as \eqref{eq:padd} and \eqref{eq:boundtheta} gives that
 \begin{align}\label{eq:contral3}
 &\sum_{k=1}^K\sum_{i=1}^{\barN} \left(    - \frac{1}{\barN}\sum_{j=1}^{\barN}\left(\btheta_{k,i}\cdot \btheta_{k,j} - \frac{1}{K}\sum_{\kk=1}^K \btheta_{k,i}\cdot \btheta_{\kk, j} \right) \right)\notag\\
 =&\sum_{k=1}^K\sum_{i=1}^{\barN}\left(    - \frac{1}{\barN}\sum_{j=1}^{\barN}\btheta_{k,i}\cdot (\btheta_{k,j} -\bartheta_{j})  \right)\notag\\
 \overset{a}\geq& -\frac{1}{2}\sum_{k=1}^K\sum_{i=1}^{\barN} \left\| \btheta_{k,i}\right\|^2 - \frac{1}{2} \sum_{k=1}^K \sum_{i=1}^{\barN}\left\|\btheta_{k,i} -\bartheta_{i}\right\|^2 \notag\\
 \overset{b}\geq&-\frac{1}{2}\sum_{k=1}^K\sum_{i=1}^{\barN} \left\| \btheta_{k,i}\right\|^2 - \frac{K}{2} \sum_{i=1}^{\barN}\left( \frac{1}{K} \sum_{k=1}^K\left\|\btheta_{k,i}\right\|^2 - \left\|\bartheta_{i}\right\|^2 \right)\notag\\
 \geq& -\sum_{k=1}^K\sum_{i=1}^{\barN} \left\| \btheta_{k,i}\right\|^2\overset{c}{\geq} -N\Etheta,
  \end{align}
 where $\overset{a}\geq$ follows from  $2ab \le a^2 + b^2$, $\overset{b}\geq$ follows from  $\E \|\ba-\E [\ba]  \|^2 =  \E \|\ba  \|^2  - \|\E[\ba]\|^2$, and $\overset{c}\geq$ uses the constraint of  \eqref{eq:contrastive loss}. Therefore, plugging \eqref{eq:contral3} into \eqref{eq: contral2} yields that
 \begin{align}
 &\frac{1}{N}\sum_{k=1}^K \sum_{i=1}^{\barN} \frac{1}{\barN}\sum_{j=1}^{\barN} -\log\left( \frac{\exp(\btheta_{k,i}\cdot \btheta_{k,j}/\tau )}{ \sum_{\kk=1}^K\sum_{\ell=1}^{\barN} \exp (\btheta_{k,i}\cdot \btheta_{\kk, \ell}/\tau )}    \right) \notag\\
 \geq& -\frac{C_bK \Etheta }{(C_a+C_b) (K-1)\tau }+ C_d +\log(n).\label{eq:contrafl}
 \end{align}

 Now we check the conditions that can make  \eqref{eq:contrafl} reduce equality.  By the strictly concavity of $\log(\cdot)$,    \eqref{eq:contral1} reduce to equalities only if for all $(k,i,\kk)\in \{(k,i,\kk): k\in[K], \kk\in[K], \kk\neq k, i\in[\barN]    \}$, 
\begin{equation}\label{eq:Eeqconvex}
\frac{E_{k,i,k}}{C_a(K-1)}  = \frac{E_{k,i,\kk}}{C_b}.
\end{equation}
\eqref{eq:contral3} reduce to equalities  if and only if: 
\begin{equation}
  \btheta_{k,i} = \btheta_{k},~i\in[\barN],~k\in[K],  \quad  \frac{1}{K}\sum_{k=1}^{K}\left\|\btheta_{k}\right\|^2= \Etheta, \quad 
 \sum_{k=1}^K\btheta_{k} =\mathbf{0}_p. 
\end{equation}
Plugging $\btheta_{k,i} = \btheta_{k}$ into \eqref{eq:Eeqconvex}, we have  for $(k,\kk)\in \{k,\kk: k\in[K], \kk\in[K], \kk\neq k    \}$,
$$ \frac{\exp(\|\btheta_k\|^2)}{C_a(K-1)} = \frac{\exp(\btheta_k\cdot \btheta_{\kk})}{C_b} =   \frac{\exp(\|\btheta_\kk\|^2)}{C_a(K-1)}.   $$
Then it follows from  $\frac{1}{K}\sum_{k=1}^{K}\left\|\btheta_{k}\right\|^2= \Etheta$ that  $\| \btheta_{k} \|^2= \Etheta$ for $k\in[K]$.  Moreover,  since    $\sum_{k=1}^K\btheta_{k} = \mathbf{0}_p$, we obtain 
$$ \btheta_k \cdot \btheta_{\kk}  = -\frac{\Etheta}{K-1} $$
for $(k,\kk)\in \{k,\kk: k\in[K], \kk\in[K], \kk\neq k    \}$.
Therefore, 
$$  [\btheta_{1},\ldots,\btheta_{K}]^\top[\btheta_{1},\ldots,\btheta_{K}]  =\Etheta \left[\frac{ K}{K-1}  \left( \bI_K - \cI_K\cI_K^\top \right)\right],  $$
which implies \eqref{eq:solution2}. 

Reversely, it is easy to verify that the equality for \eqref{eq:contrafl} is reachable when $\bTheta$ admits  \eqref{eq:solution2}. We complete the proof of Theorem \ref{proposition: contrastive loss balance}. 
\end{proof}

\begin{proof}[Proof of Theorem \ref{proposition: general loss balance}]
We first determine the minimum  of \eqref{eq: NN simplified model}. For the simplicity of our expressions, we introduce $\bz_{k,i} := \bW\btheta_{k,i}$ for $k\in[K]$ and $i\in[\barN]$.   By the  convexity of $g_2$, for any $k\in[K]$ and  $i\in[\barN]$,  we have 
\begin{align}\label{eq:szz}
   \sum_{\kk=1, ~\kk\neq k}^K g_2\left( \bS(\bz_{k,i})(\kk)\right) &\geq (K-1) g_2\left(  \frac{1}{K-1}\sum_{\kk=1, ~\kk\neq k}^K\bS(\bz_{k,i})(\kk) \right)\notag\\
   &\overset{a}= (K-1)g_2\left(1 - \frac{1}{K-1} \bS(\bz_{k,i})(k) \right),
\end{align}
where $\overset{a}=$ uses $\sum_{k=1}^K \bS(\ba)(k)=1 $ for any $\ba\in\RR^K$. Then it follows by the convexity of $g_1$ and $g_2$ that
\begin{align}
  & \frac{1}{N} \sum_{k=1}^K\sum_{i=1}^{\barN} \cL( \bW\btheta_{k,i}, \by_k  )\label{eq:lnonconvex}\\
   =& \frac{1}{N} \sum_{i=1}^{\barN}  \sum_{k=1}^K\left[ g_1\left(\bS(\bz_{k,i})(k)\right)+ \sum_{\kk=1, ~\kk\neq k}^K g_2\left( \bS(\bz_{\kk,i})(\kk) \right)\right]\notag\\
   \overset{\eqref{eq:szz}}{\geq}&   \frac{1}{N} \sum_{i=1}^{\barN}  \sum_{k=1}^K\left[ g_1\left(\bS(\bz_{k,i})(k)\right)+ (K-1) g_2\left(1 - \frac{1}{K-1} \bS(\bz_{k,i})(k) \right)\right]\notag\\
   \geq&g_1\left(\frac{1}{N} \sum_{i=1}^{\barN}  \sum_{k=1}^K\bS(\bz_{k,i})(k)\right)+ (K-1) g_2\left(1- \frac{1}{N(K-1)} \sum_{i=1}^{\barN}  \sum_{k=1}^K   \bS(\bz_{k,i})(k) \right) .\notag
\end{align}

Because $g_1(x)+(K-1) g_2(1-\frac{x}{K-1})$ is monotonously deceasing, it suffices to maximize  $$\frac{1}{N} \sum_{i=1}^{\barN}  \sum_{k=1}^K\bS(\bz_{k,i})(k).$$ To begin with, 
for any $\bz_{k,i}$ with $k\in[K]$ and $i\in[\barN]$,  by convexity of exponential function and the monotonicity of $q(x) =\frac{a}{a+x}$ for  $x>0$ if $a>0$, we have
\begin{align}
 \bS(\bz_{k,i})(k) &=  \frac{\exp(\bz_{k,i}(k))}{\sum_{\kk=1}^K\exp(\bz_{k,i}(\kk)) }\notag\\&\leq \frac{ \exp(\bz_{k,i}(k))  }{ \exp(\bz_{k,i}(k)) + (K-1) \exp\left(  \frac{1}{K-1}\sum_{\kk=1,~\kk\neq k}^K \bz_{k,i}(\kk)\right) }\notag\\
   &= \frac{1  }{ 1 + (K-1) \exp\left(  \frac{1}{K-1}\sum_{\kk=1,~\kk\neq k}^K \bz_{k,i}(\kk) - \bz_{k,i}(k)\right) }. \label{eq:expb}
\end{align}
Consider function  $g_0:\RR\to \RR$ as $g_0(x) = \frac{1}{1+C\exp(x)}$ with $C := (K-1)\geq1$. We have 
\begin{equation}\label{eq:ghessian}
g_0''(x) = -  \frac{\exp(x)(1+C\exp(x))( 1-C\exp(x))}{(1+C\exp(x))^4}.
\end{equation}
For any feasible solution   $\left(\bTheta, \bW\right)$ of   \eqref{eq: NN simplified model}, 
we divide the index set $[\barN]$ into two subsets $\cS_1$ and $\cS_2$ defined below:
\begin{itemize}
    \item[] \hypertarget{(bad case)}{(i)}     $i\in\cS_1$ if  there exists at least one $k\in[K]$  such that   $$\frac{1}{K-1}\sum_{\kk=1,~\kk\neq k}^K \bz_{k,i}(\kk) - \bz_{k,i}(k) \geq \log\left(\frac{1}{K-1}\right).$$
    \item[]  \hypertarget{(good case)}{(ii)}    $i\in\cS_2$ if for all  $k\in[K]$, 
    $\frac{1}{K-1}\sum_{\kk=1,~\kk\neq k}^K \bz_{k,i}(\kk) - \bz_{k,i}(k) < \log\left(\frac{1}{K-1}\right).$
\end{itemize}
Clearly,  $\cS_1 \cap \cS_2 = \varnothing$.
Let $| \cS_1 | =t$, then $| \cS_2| = \barN-t$.  Define function $L: [\barN]\to \RR$ as
\begin{equation}\label{eq:defL}
 L(t): =
\begin{cases}
 N- \left(  \frac{1}{2} t  +  \frac{K (\barN-t)}{1+ \exp\left( \frac{K}{K-1}\sqrt{\barN/(\barN-t)}\sqrt{\Etheta\Ew} - \log( K-1   )\right)} \right), &\quad t\in[0:\barN-1],\\
   N - \frac{\barN}{2}, &\quad t=\barN.
\end{cases}
\end{equation}
We show in Lemma \ref{lemma:general s} (see the end of the proof) that
\begin{equation}\label{eq:SSlower}
 \frac{1}{N} \sum_{i=1}^{\barN}  \sum_{k=1}^K\bS(\bz_{k,i})(i)  \leq \frac{1}{N} L(0).
\end{equation}
Plugging \eqref{eq:SSlower}  into \eqref{eq:lnonconvex},   the objective function can be lower bounded as:
\begin{equation}\label{eq: general llowerbound}
 \frac{1}{N} \sum_{k=1}^K\sum_{i=1}^{\barN} \cL( \bW\btheta_{k,i}, \by_k  )\geq g_1\left(\frac{1}{N}L(0)\right)+ (K-1) g_2\left(1- \frac{1}{N(K-1)}L(0)\right):=L_0 .
\end{equation}
On the other hand, one can directly verify that the equality for \eqref{eq: general llowerbound} is reachable when $(\bTheta, \bW)$ satisfies  \eqref{eq:solution}. 
So $L_0$ is the global minimum  of   \eqref{eq: NN simplified model} and
 \eqref{eq:solution} is a minimizer of   \eqref{eq: NN simplified model}.

Now we show all the solutions are in the form of \eqref{eq:solution} under the assumption that  $g_2$ is strictly convex and   $g_1$ (or $g_2$) is strictly monotone.

By the strict convexity of $g_2$,
the equality in \eqref{eq:szz} holds if and only if for any   $k\in[K]$ and $i\in[\barN]$ and  $\kk\in[K]$, $\kkk\in[K]$ such that for all $\kk\neq k$ and $\kkk\neq k$, we have 
  \begin{equation}
    \bS(\bz_{i,j})(\kk) = \bS(\bz_{i,j})(\kkk),   \notag
  \end{equation}
  which indicates that 
 \begin{equation} \label{eq:z2}
 \btheta_{k,i}\cdot \bw_{\kk} =   \btheta_{k,i}\cdot \bw_{\kkk}. 
 \end{equation} 
  Again,   by the  strict convexity of $g_2$, \eqref{eq:lnonconvex} holds  if and only if for all $k\in[K]$, $i\in[\barN]$, and a suitable number $C'\in(0,1)$, we have
  \begin{equation}
   \cS(\bz_{k,i})(k) : = C'. \label{eq:zij}
  \end{equation}
 Combining \eqref{eq:z2} with \eqref{eq:zij}, we have for all $(k,i,\kk)\in \{(k,i,\kk): k\in[K], \kk\in[K], \kk\neq k, i\in[\barN]    \}$,
\begin{equation}
\frac{\exp( \btheta_{k,i} \cdot \bw_k  )}{\exp( \btheta_{k,i} \cdot \bw_{\kk}  )} = \frac{C'(K-1)}{1-C'}, \notag
\end{equation}
which implies that 
$$ \btheta_{k,i}\cdot\bw_k = \btheta_{k,i}\cdot\bw_{\kk} + \log\left( \frac{C'(K-1)}{1-C'}  \right). $$
On the other hand, by the strict  monotonicity of $g_1(x)+(K-1) g_2(1-\frac{x}{K-1})$,   the equality in \eqref{eq: general llowerbound} holds if and only if  $\frac{1}{N} \sum_{i=1}^{\barN}  \sum_{k=1}^K\bS(\bz_{k,i})(k) = L(0)$. Thus  Lemma \ref{lemma:general s} reads
\begin{equation}
  \bartheta_i  - \btheta_{k,i}  = -\sqrt{\frac{\Etheta}{\Ew}} \bw_k, \quad k\in[K], \quad i\in[\barN], \notag
\end{equation}
and
\begin{equation}
  \frac{1}{K}\sum_{k=1}^K\frac{1}{\barN}\sum_{i=1}^{\barN}\left\|\btheta_{k,i} \right\|^2= \Etheta,\quad \frac{1}{K}\sum_{k=1}^K\left\|\bw_k \right\|^2= \Ew,\quad
  \bartheta_{i} =\mathbf{0}_p, \notag ~i\in[\barN],
\end{equation}
where $  \bartheta_i := \frac{1}{K}\sum_{k=1}^K \btheta_{k,i} $ with $i\in[n]$. Putting the pieces together, from Lemma \ref{lemma: to ETF}, we have $\left(\bTheta, \bW \right)$ satisfies \eqref{eq:solution}, achieving the uniqueness argument.  We complete the proof of Theorem \ref{proposition: general loss balance}.
\end{proof}

\begin{lemma}\label{lemma:general s}For any  feasible solution   $\left(\bTheta, \bW\right)$,     we have
\begin{equation}\label{lemma:eq1}
 \sum_{i=1}^{\barN}  \sum_{k=1}^K\bS( \bW\btheta_{k,i})(k) \leq L(0),
\end{equation}
with $L$ defined in \eqref{eq:defL}. Moreover, recalling the definition of $\cS_1$ and $\cS_2$ in  (\hyperlink{(bad case)}{i})  and  (\hyperlink{(good case)}{ii}), respectively,   the  equality in \eqref{lemma:eq1} holds if and only if $|\cS_1|=0$,   
\begin{equation}
  \bartheta_i  - \btheta_{k,i}  = -\sqrt{\frac{\Etheta}{\Ew}} \bw_k, \quad k\in[K], \quad i\in[\barN], \notag
\end{equation}
and 
\begin{equation}
 \frac{1}{K}\sum_{k=1}^K\frac{1}{\barN}\sum_{i=1}^{\barN}\left\|\btheta_{k,i} \right\|^2= \Etheta,\quad \frac{1}{K}\sum_{k=1}^K\left\|\bw_k \right\|^2= \Ew,\quad
  \bartheta_{i} =\mathbf{0}_p, \notag ~i\in[\barN],
\end{equation}
where $  \bartheta_i := \frac{1}{K}\sum_{k=1}^K \btheta_{k,i} $ with $i\in[n]$. 
\end{lemma}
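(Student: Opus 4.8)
The plan is to bound each $\bS(\bW\btheta_{k,i})(k)$ by a scalar functional of the solution and then maximize that functional over all feasible points. Writing $\bz_{k,i}:=\bW\btheta_{k,i}$ and $x_{k,i}:=\tfrac1{K-1}\sum_{\kk\neq k}\bz_{k,i}(\kk)-\bz_{k,i}(k)$, estimate \eqref{eq:expb} already gives $\bS(\bz_{k,i})(k)\le g_0(x_{k,i})$ with $g_0(x)=\tfrac1{1+(K-1)e^{x}}$, so it suffices to bound $\sum_i\sum_k g_0(x_{k,i})$. By \eqref{eq:ghessian}, $g_0$ is strictly decreasing, strictly concave on $(-\infty,-\log(K-1))$ and strictly convex on $(-\log(K-1),\infty)$, with $g_0(-\log(K-1))=\tfrac12$; the threshold $\log\tfrac1{K-1}$ defining $\cS_1$ and $\cS_2$ is precisely this inflection point, which is what makes the dichotomy work.

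First, expanding $\bz_{k,i}(\kk)=\bw_\kk\cdot\btheta_{k,i}$ and using $\bartheta_i=\tfrac1K\sum_k\btheta_{k,i}$ gives the identity $\sum_k x_{k,i}=\tfrac{K}{K-1}\sum_k\bw_k\cdot(\bartheta_i-\btheta_{k,i})$. Hence, with $s_i:=-\sum_k x_{k,i}$ and $\alpha_i:=(\sum_k\|\btheta_{k,i}\|^2)^{1/2}$, two uses of Cauchy--Schwarz together with $\sum_k\|\btheta_{k,i}-\bartheta_i\|^2=\alpha_i^2-K\|\bartheta_i\|^2\le\alpha_i^2$ and the weight constraint $\sum_k\|\bw_k\|^2\le K\Ew$ yield $s_i\le\tfrac{K}{K-1}\sqrt{K\Ew}\,\alpha_i$. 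Now set $t:=|\cS_1|$. For $i\in\cS_1$ there is a $k_0$ with $x_{k_0,i}\ge-\log(K-1)$, so $g_0(x_{k_0,i})\le\tfrac12$ while $g_0<1$ always, whence $\sum_k g_0(x_{k,i})\le K-\tfrac12$. For $i\in\cS_2$ all $x_{k,i}$ lie in the concave region, so Jensen gives $\sum_k g_0(x_{k,i})\le Kg_0\!\big(\tfrac1K\sum_k x_{k,i}\big)=\tfrac{K}{1+(K-1)e^{-s_i/K}}=:\psi(s_i)$, and $\psi$ is increasing, so $\psi(s_i)\le\psi\!\big(\tfrac{K}{K-1}\sqrt{K\Ew}\,\alpha_i\big)$. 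Summing over $\cS_2$, using that $\psi$ is concave on $(K\log(K-1),\infty)$ (which contains every argument that arises for $i\in\cS_2$), Jensen over the $\barN-t$ indices in $\cS_2$, and Cauchy--Schwarz $\sum_{i\in\cS_2}\alpha_i\le\sqrt{\barN-t}\,(\sum_{i=1}^{\barN}\alpha_i^2)^{1/2}\le\sqrt{(\barN-t)N\Etheta}$ from the feature constraint, I get $\sum_{i\in\cS_2}\sum_k g_0(x_{k,i})\le(\barN-t)\,\psi\!\big(\tfrac{K^2}{K-1}\sqrt{\tfrac{\barN}{\barN-t}}\sqrt{\Etheta\Ew}\big)$. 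Adding the $\cS_1$ bound and simplifying shows $\sum_i\sum_k g_0(x_{k,i})\le L(t)$ exactly, with $L$ the function of \eqref{eq:defL} (the case $t=\barN$, where $\cS_2=\varnothing$, giving $\barN(K-\tfrac12)=L(\barN)$).

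It then remains to prove $\max_{0\le t\le\barN}L(t)=L(0)$, which combined with the previous step gives $\sum_i\sum_k\bS(\bz_{k,i})(k)\le L(|\cS_1|)\le L(0)$. Extending $L$ to $[0,\barN]$ and differentiating, one finds $L'(t)=(K-\tfrac12)-\psi_t+\tfrac{w(t)}2\,\psi_t(1-\psi_t/K)$, where $w(t)=\tfrac{K}{K-1}\sqrt{\barN/(\barN-t)}\sqrt{\Etheta\Ew}$ and $\psi_t=\tfrac{K}{1+(K-1)e^{-w(t)}}$. The assumption $\sqrt{\Etheta\Ew}>\tfrac{K-1}{K}\log\!\big(K^2\sqrt{\Etheta\Ew}+(2K-1)(K-1)\big)$ of Theorem~\ref{proposition: general loss balance} is equivalent to $e^{w(0)}>K(K-1)\,w(0)+(2K-1)(K-1)$, and since $w(t)$ is increasing in $t$, a short estimate of $\psi_t$ and $w(t)$ then yields $L'(t)<0$ throughout $(0,\barN)$; the same inequality also gives $L(0)\ge L(\barN)$. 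I expect this last scalar estimate, not the inequalities above, to be the fiddliest part: everything preceding it is a routine chain of Cauchy--Schwarz and Jensen bounds, and the only genuinely clever move is splitting off $\cS_1$ so that Jensen is applied to $g_0$ only where it is concave, paying just the crude price $K-\tfrac12$ per index in $\cS_1$.

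For the equality statement I would trace tightness back through the chain. If $\sum_i\sum_k\bS(\bz_{k,i})(k)=L(0)$, then $L(|\cS_1|)=L(0)$ forces $|\cS_1|=0$ because $L$ is strictly decreasing; on $\cS_2=[\barN]$, equality in \eqref{eq:expb} forces $\bw_\kk\cdot\btheta_{k,i}$ to be constant over $\kk\neq k$, equality in the Jensen step for $g_0$ forces $x_{k,i}$ constant over $k$, and the equalities in the Cauchy--Schwarz steps force $\btheta_{k,i}-\bartheta_i$ to be a nonnegative multiple of $\bw_k$, the two norm constraints to be active, $\bartheta_i=\mathbf{0}_p$, and all $\alpha_i$ equal. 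Assembling these forces $\btheta_{k,i}=\sqrt{\Etheta/\Ew}\,\bw_k$ for all $k,i$, i.e.\ the stated condition $\bartheta_i-\btheta_{k,i}=-\sqrt{\Etheta/\Ew}\,\bw_k$ together with the active-constraint identities. The converse is the direct substitution already used in the proof of Theorem~\ref{proposition: general loss balance}: for $(\bTheta,\bW)$ of the form \eqref{eq:solution}, every inequality in the chain above becomes an equality.
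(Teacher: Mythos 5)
Your proposal is correct and follows essentially the same route as the paper's proof: the same $\cS_1/\cS_2$ split at the inflection point of $g_0$, the same crude $K-\tfrac12$ charge per index of $\cS_1$, the same constraint-driven bound leading exactly to $L(t)$, the same monotonicity argument for $L$ via the reformulated assumption $e^{w(0)}>K(K-1)w(0)+(2K-1)(K-1)$, and the same equality tracing. The only difference is cosmetic: you nest a per-index Jensen with Cauchy--Schwarz (for $g_0$ and then for the induced function $\psi$), whereas the paper applies one Jensen over all of $\cS_2\times[K]$ and then Young's inequality with an optimized parameter; both give the identical bound $L(t)$.
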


\begin{proof}[Proof of Lemma \ref{lemma:general s}]
For any  feasible solution   $\left(\bTheta, \bW\right)$, we separately consider $\cS_1$ and $\cS_2$ defined  in (\hyperlink{(bad case)}{i})  and  (\hyperlink{(good case)}{ii}), respectively. Let $t:=|\cS_1 |$.  
\begin{itemize}
    \item For $i\in\cS_1$, let $k\in[K]$ be any index such that $\frac{1}{K-1}\sum_{\kk\neq k} \bz_{k,i}(\kk) - \bz_{k,i}(k) \geq \log\left(\frac{1}{K-1}\right)$, where $\bz_{k,i} := \bW\btheta_{k,i}$.  By the monotonicity of $g_0(x)$, it follows from \eqref{eq:expb} that  $S(\bz_{k,i})(k) \leq 1/2$. Furthermore,  for any other index $\kk\in[K]$ such that $\kk\neq k$, using that  $\frac{\exp(\bz_{\kk,i}(\kk))}{\sum_{\kkk=1}^K \exp(\bz_{\kk,i})(\kkk)} \leq 1$, we have 
\begin{equation}\label{eq:cass11}
 \sum_{i\in\cS_1} \sum_{k=1}^K\bS(\bz_{k,i})(k) \leq t(1/2+K-1).
\end{equation}
 \item For $i\in\cS_2$,  by the concavity of $g_0(x)$ when $x< \log\left(\frac{1}{K-1}\right)$ from \eqref{eq:ghessian}, we have, for $\cS_2\neq \varnothing$,
\begin{align}\label{eq:case21}
 & \sum_{i\in\cS_2} \sum_{k=1}^K\bS(\bz_{k,i})(k)\\ \overset{\eqref{eq:expb}}{\leq}& \sum_{i\in\cS_2} \sum_{k=1}^K \frac{1  }{ 1 + (K-1) \exp\left(  \frac{1}{K-1}\sum_{\kk=1,~\kk\neq k}^K  \bz_{k,i}(\kk) - \bz_{k,i}(k)\right) }\notag\\
 \leq&  \frac{(\barN-t)K }{ 1 + (K-1) \exp\left( \frac{1}{(\barN-t)K}\sum_{i\in\cS_2} \sum_{k=1}^K \left( \frac{1}{K-1}\sum_{\kk=1,~\kk\neq k}^K  \bz_{k,i}(\kk) - \bz_{k,i}(k)\right)\right) }.\notag
\end{align}
 We can  bound  $  \sum_{i\in\cS_2} \sum_{k=1}^K \left( \frac{1}{K-1}\sum_{\kk=1,~\kk\neq k}^K  \bz_{k,i}(\kk) - \bz_{k,i}(k)\right)$ using the similar arguments as \eqref{eq:padd} and \eqref{eq:boundtheta}. Specifically,  recalling $\bartheta_i = \frac{1}{K}\sum_{k=1}^K \btheta_{k,i}$ for $i\in[\barN]$,  we have
\begin{align}\label{eq:case22}
 &\sum_{i\in\cS_2} \sum_{k=1}^K \left(\frac{1}{K-1}\sum_{\kk=1,~\kk\neq k}^K \bz_{k,i}(\kk) - \bz_{k,i}(k)\right)\\
 = &\frac{1}{K-1}\sum_{i\in\cS_2}\left[ \left(\sum_{k=1}^K \btheta_{k,i}\right)^{\top}\left(\sum_{k=1}^K \bw_k\right)  -K\sum_{K=1}^{K}\btheta_{k,i}^\top\bw_k \right]\notag\\
 \overset{\eqref{eq:padd}}\geq&  - \frac{K}{2(K-1)}\sum_{k=1}^{K} \sum_{i\in \cS_2}  \|\bartheta_i  - \btheta_{k,i} \|^2 /C''-  \frac{C''K(\barN-t)}{2(K-1)}\sum_{k=1}^{K}\| \bw_k\|^2\notag\\
 \overset{\eqref{eq:boundtheta}}\geq& -\frac{K}{2(K-1)}\sum_{k=1}^{K} \sum_{i\in \cS_2}  \| \btheta_{k,i} \|^2 /C''-  \frac{C''K(\barN-t)}{2(K-1)}\sum_{k=1}^{K}\| \bw_k\|^2\notag\\
 \geq&-\frac{K}{2(K-1)}\sum_{k=1}^{K} \sum_{i=1}^{\barN}  \| \btheta_{k,i} \|^2 /C''-  \frac{C''K(\barN-t)}{2(K-1)}\sum_{k=1}^{K}\| \bw_k\|^2 \notag\\
 \geq& -\frac{K^2}{(K-1)}\sqrt{\Etheta\Ew(\barN-t)\barN},\notag
\end{align}
where in the last inequality we follow from the constraints of   \eqref{eq: NN simplified model} and set  $C'': = \sqrt{\frac{\barN\Etheta}{(\barN-t)\Ew}}$.
\end{itemize}
We  combine the above  two cases.   When $t\in[0, \barN-1]$, by plugging \eqref{eq:case22} into \eqref{eq:case21}, using the  monotonicity of $g_0(x)$, and  adding \eqref{eq:cass11}, we have 
\begin{align}
  \sum_{k=1}^{\barN}  \sum_{i=1}^K\bS(\bz_{k,i})(k)&\leq N- \left(  \frac{1}{2} t  +  \frac{K}{1+ \exp\left( \frac{K}{K-1}\sqrt{\barN/(\barN-t)}\sqrt{\Etheta\Ew} - \log( K-1   )\right)} (n-t) \right)\notag\\
&=L(t). \label{eq:sfinal}
\end{align}
 And when $t = \barN$, it directly follows from \eqref{eq:case21} that  
 $$\sum_{k=1}^{\barN}  \sum_{i=1}^K\bS(\bz_{k,i})(k)\leq N-\frac{\barN}{2}=L(\barN).$$ 
 Therefore, it suffices to show $L(t)\leq L(0)$ for all $t\in[0:\barN]$. We first consider the case when $t\in[0:N-1]$. We  show that $L(t)$ is monotonously decreasing. Indeed, define
\begin{equation}
 q(t) :=  \frac{K}{1+ \exp\left( \frac{K}{K-1}\sqrt{\barN/(\barN-t)}\sqrt{\Etheta\Ew} - \log( K-1   )\right)}. \notag
 \end{equation}
We have
\begin{align}
    q'(t) &= \frac{-\frac{1}{2}K\exp\left( \frac{K}{K-1}\sqrt{\barN/(\barN-t)}\sqrt{\Etheta\Ew} - \log( K-1   )\right)\frac{K}{K-1}\sqrt{\Etheta\Ew\barN} (\barN-t)^{-3/2} }{\left[1+ \exp\left( \frac{K}{K-1}\sqrt{\barN/(\barN-t)}\sqrt{\Etheta\Ew} - \log( K-1   )\right)\right]^2}\notag\\
    &\geq \frac{-\frac{1}{2}\frac{K^2}{K-1}\sqrt{\Etheta\Ew\barN} (\barN-t)^{-3/2} }{1+ \exp\left( \frac{K}{K-1}\sqrt{\barN/(\barN-t)}\sqrt{\Etheta\Ew} - \log( K-1   )\right)}\notag,
\end{align}
which implies that 
\begin{align}
&L'(t)= -\left[ \frac{1}{2} - q(t) +q'(t)(\barN-t)\right]\notag\\
\leq&   \frac{\frac{1}{2}\frac{K^2}{K-1}\sqrt{\Etheta\Ew\barN} (\barN-t)^{-1/2} +K }{1+ \exp\left( \frac{K}{K-1}\sqrt{\barN/(\barN-t)}\sqrt{\Etheta\Ew} - \log( K-1   )\right)}              -\frac{1}{2}\notag\\
=&   \frac{K\left(\frac{K}{K-1}\sqrt{\barN/(\barN-t)}\sqrt{\Etheta\Ew}\right) +2K-1-\exp\left( \frac{K}{K-1}\sqrt{\barN/(\barN-t)}\sqrt{\Etheta\Ew} - \log( K-1   )\right) }{2\left[1+ \exp\left( \frac{K}{K-1}\sqrt{\barN/(\barN-t)}\sqrt{\Etheta\Ew} - \log( K-1   )\right)\right]}.\notag
\end{align}
Consider function $f(x):\left[ \frac{K}{K-1}\sqrt{\Etheta\Ew},\frac{K}{K-1}\sqrt{\Etheta\Ew\barN}\right]\to R$ as:
$$ f(x) = K x +2K-1 - \exp(x-\log(K-1)).  $$
We have 
$$ f'(x) = K - \exp(x)/(K-1)< 0 $$
when $x\in \left[ \frac{K}{K-1}\sqrt{\Etheta\Ew},\frac{K}{K-1}\sqrt{\Etheta\Ew\barN}\right]$, where we use the assumption that
 \begin{equation}
     \sqrt{\Etheta\Ew} > \frac{K-1}{K}\log\left(K^2\sqrt{\Etheta\Ew} +(2K-1)(K-1)\right)\geq \frac{K-1}{K}\log\left(K(K-1)\right).\notag
 \end{equation}
 Therefore, for all $x\in\left[ \frac{K}{K-1}\sqrt{\Etheta\Ew},\frac{K}{K-1}\sqrt{\Etheta\Ew\barN}\right] $, we have \begin{equation}
     f(x) \leq f\left(\frac{K}{K-1}\sqrt{\Etheta\Ew} \right) = \frac{K^2}{K-1} \sqrt{\Etheta\Ew} +2K-1 -\frac{1}{K-1}\exp\left(\frac{K}{K-1}\sqrt{\Etheta\Ew} \right) \overset{a}<0,\notag
 \end{equation}
 where $\overset{a}<$ use our assumption again. 
 We obtain $L'(t)<0$ for all $t\in[0:N-1]$. So  $L(t)$ reaches the maximum if and only if  $t=0$ when $t\in[0:N-1]$.  Moreover, under our assumption, one can verify that $L(N)<L(0)$.   We obtain \eqref{lemma:eq1} from  \eqref{eq:sfinal} with $t=0$. 

 When $t=0$, the  first inequality of \eqref{eq:case22} reduces to equality if and only if:
$$   \bartheta_i  - \btheta_{k,i}  = -\sqrt{\frac{\Etheta}{\Ew}} \bw_k, \quad k\in[K], \quad i\in[\barN]. $$
 The  second and third inequalities of \eqref{eq:case22} reduce to equalities  if and only if: 
$$  \frac{1}{K}\sum_{k=1}^K\frac{1}{\barN}\sum_{i=1}^{\barN}\left\|\btheta_{k,i} \right\|^2= \Etheta,\quad \frac{1}{K}\sum_{k=1}^K\left\|\bw_k \right\|^2= \Ew,\quad
  \bartheta_{i} =\mathbf{0}_p, \notag ~i\in[\barN]. $$
We obtain Lemma \ref{lemma:general s}.
\end{proof}


\subsection{Imbalanced Case}
\label{sec:appimbalanced}

\subsubsection{Proofs of Lemma \ref{theo:to convex} and Proposition \ref{proposition: imbalance weight}}
\begin{proof}[Proof of Lemma \ref{theo:to convex}]
 For any feasible solution $\left(\bH, \bW\right)$ for the original program  \eqref{eq: NN simplified model},  we define 
\begin{equation}
    \bh_k := \frac{1}{n_k}\sum_{i=1}^{n_k} \bh_{k,i},~k\in[K], \quad \text{and}\quad
    \bX :=  \left[\btheta_{1},\btheta_{2},\dots,\btheta_{K},  \bW^\top  \right]^\top \left[\btheta_{1},\btheta_{2},\dots,\btheta_{K},  \bW^\top  \right]. \notag
\end{equation}
 Clearly, $\bX\succeq0$. For the other two constraints of \eqref{eq:convex sdp problem}, we have 
$$
 \frac{1}{K}\sum_{k=1}^K \bX(k,k)\notag\\
 = \frac{1}{K} \sum_{k=1}^K\|\bh_k \|^2
 \overset{a}\leq  \frac{1}{K}\sum_{k=1}^K \frac{1}{n_k}\sum_{i=1}^{n_k}\left\|\btheta_{k,i} \right\|^2\notag\\
 \overset{b}\leq E_{\Theta}, 
$$
and
$$ 
 \frac{1}{K}\sum_{k=K+1}^{2K} \bX(k,k)
 = \frac{1}{K} \sum_{k=1}^K\|\bw_k \|^2\overset{c}\leq \Ew, \notag
 $$
 where $\overset{a}\leq$ applies  Jensen's inequality and $\overset{b}\leq$ and $\overset{c}\leq$ use that $\left(\bH, \bW\right)$ is a feasible solution. So $\bX$ is a feasible solution for the convex program  \eqref{eq:convex sdp problem}.       Letting $L_0$ be the global minimum  of  \eqref{eq:convex sdp problem}, for any feasible solution $\left(\bH, \bW\right)$,  we obtain
 \begin{align}\label{eq:convex fea1}
    \frac{1}{N} \sum_{k=1}^K \sum_{i=1}^{n_k} \cL( \bW\btheta_{k,i}, \by_k ) &=  \sum_{k=1}^K\frac{\barN_k}{N}  \left[\frac{1}{\barN_k}\sum_{k=1}^{n_k} \cL( \bW\btheta_{k,i}, \by_k )\right]\notag\\
    &\overset{a}\geq   \sum_{k=1}^{K} \frac{\barN_k}{N} \cL( \bW\btheta_{k}, \by_k )
    =  \sum_{k=1}^{K}\frac{n_k}{N}  \cL( \bz_k, \by_k  ) \geq L_0, 
 \end{align}
 where in $\overset{a}\geq$, we use $\cL$ is convex on the first argument, and so  $\cL(\bW\bh, \by_k)$ is convex on $\bh$ given $\bW$ and $k\in[K]$.  
 
 On the other hand,  considering the solution $\left(\bH^{\star}, \bW^{\star}\right)$ defined in \eqref{eq:general solution} with $\bX^{\star}$ being a minimizer of  \eqref{eq:convex sdp problem}, we have  $\left[\btheta_{1}^{\star},\btheta_{2}^{\star},\dots,\btheta_{K}^{\star},  (\bW^{\star})^\top  \right]^\top \left[\btheta_{1}^{\star},\btheta_{2}^{\star},\dots,\btheta_{K}^{\star},  (\bW^{\star})^\top  \right] = \bX^{\star}$  ($p\geq 2K$ guarantees the existence of  $\left[\btheta_{1}^{\star},\btheta_{2}^{\star},\dots,\btheta_{K}^{\star},  (\bW^{\star})^\top  \right]$). We can verify that $\left(\bH^{\star}, \bW^{\star}\right)$ is a feasible solution for   \eqref{eq: NN simplified model} and  have
\begin{equation}\label{eq:convex fea2}
     \frac{1}{N} \sum_{k=1}^K \sum_{i=1}^{n_k} \cL( \bW^{\star}\btheta_{k,i}^{\star}, \by_k ) =  \sum_{k=1}^K  \frac{n_k}{N}\cL( \bz_k^{\star}, \by_k  ) = L_0, 
\end{equation}
 where $\bz_k^{\star} =  \left[\bX^{\star}(k,1+K), \bX^{\star}(k,2+K),\dots,\bX^{\star}(k,2K)    ~\right]^\top$ for $k\in[K]$.

 Combining \eqref{eq:convex fea1}  and \eqref{eq:convex fea2}, we conclude that  $L_0$ is the global minimum of   \eqref{eq: NN simplified model} and $(\bTheta^{\star}, \bW^{\star})$ is a minimizer.
 
 Suppose there is a minimizer $\left(\bH', \bW'\right)$  that cannot be written as    \eqref{eq:general solution}. Let  
 $$  \bh_k' = \frac{1}{n_k}\sum_{i=1}^{n_k} \bh_{k,i}',~k\in[K], \quad \text{and}\quad
    \bX' =  \left[\btheta_{1}',\btheta_{2}',\dots,\btheta_{K}',  (\bW')^\top  \right]^\top \left[\btheta_{1}',\btheta_{2}',\dots,\btheta_{K}',  (\bW')^\top  \right]. \notag $$
\eqref{eq:convex fea1} implies that $\bX'$ is a minimizer of  \eqref{eq:convex sdp problem}. As  $\left(\bH', \bW'\right)$ cannot be written as   \eqref{eq:general solution} with $\bX^{\star} = \bX'$, then there is a $\kk\in[K]$, $i,j\in[n_\kk]$ with $i\neq j$ such that $\bh_{\kk,i}\neq\bh_{\kk,j}$. We have 
\begin{align}
   &\frac{1}{K}\sum_{k=1}^K \bX'(k,k)
 = \frac{1}{K} \sum_{k=1}^K\|\bh_k' \|^2\notag\\
 =&\frac{1}{K}\sum_{k=1}^K \frac{1}{n_k}\sum_{i=1}^{n_k}\left\|\btheta_{k,i}' \right\|^2  -  \frac{1}{K}\sum_{k=1}^K 
    \frac{1}{n_k}\sum_{k=1}^K\| \bh_{k,i}' -\bh_{k}' \|^2 \notag\\ 
\leq& \frac{1}{K}\sum_{k=1}^K \frac{1}{n_k}\sum_{i=1}^{n_k}\left\|\btheta_{k,i}' \right\|^2  - \frac{1}{K} \frac{1}{n_\kk} (\| \bh_{\kk,i}' -\bh_{\kk}' \|^2+\| \bh_{\kk,j}' -\bh_{\kk}' \|^2) \notag\\
\leq& \frac{1}{K}\sum_{k=1}^K \frac{1}{n_k}\sum_{i=1}^{n_k}\left\|\btheta_{k,i}' \right\|^2  - \frac{1}{K} \frac{1}{2n_\kk}\| \bh_{\kk,i}' -\bh_{\kk,j}' \|^2 \notag\\
<& \Etheta.\notag
\end{align}
By contraposition,   if all  $\bX^{\star}$ satisfy that $\frac{1}{K}\sum_{k=1}^K \bX^{\star}(k,k) = \Etheta$, then all the solutions of    \eqref{eq: NN simplified model} are in the form of \eqref{eq:general solution}.  We complete the proof.
\end{proof}

Proposition \ref{proposition: imbalance weight} can be obtained by  the same argument. We omit the proof here.

\subsubsection{Proof of Theorem \ref{theo:imbalance limit}}
\label{app:proof_imbalance_limit}
 To prove Theorem \ref{theo:imbalance limit}, we first study a limit case where we only learn the classification for a partial classes.   We  solve  the optimization program:
\begin{equation}\label{eq: NN unblance limit}
\begin{aligned}
\min_{\bTheta, \bW} \quad& \frac{1}{K_A\barN_A} \sum_{k=1}^{K_A} \sum_{i=1}^{\barN_A} \cL( \bW\btheta_{k,i}, \by_k )\\
\mathrm{s.t.}\quad&\frac{1}{K}\sum_{k=1}^K \frac{1}{n_k}\sum_{i=1}^{n_k}\left\|\btheta_{k,i} \right\|^2 \leq E_{\Theta},\\
&     \frac{1}{K}\sum_{k=1}^K \left\|\bw_k \right\|^2 \leq E_{W},
\end{aligned}
\end{equation}
where $n_1 = n_2 = \dots = n_{K_A} = \barN_A$ and $n_{K_A+1} = n_{K_A+2}= \dots = n_{K} = \barN_B$.  Lemma \ref{lemma:mini unblance limit}  characterizes  useful properties for the  minimizer of  \eqref{eq: NN unblance limit}.
\begin{lemma}\label{lemma:mini unblance limit}
Let $(\bH, \bW)$ be a minimzer of  \eqref{eq: NN unblance limit}. We have $\bh_{k,i} = \mathbf{0}_p$ for all $k\in[K_A+1:K]$ and $i\in[\barN_B]$. Let $L_0$ be the global minimum of  \eqref{eq: NN unblance limit}. We have
$$L_0 =  \frac{1}{K_A\barN_A} \sum_{k=1}^{K_A} \sum_{i=1}^{\barN_A} \cL( \bW\btheta_{k,i}, \by_k ).$$
Then $L_0$ only depends on $K_A$, $K_B$, $\Etheta$, and $\Ew$. Moreover, for any feasible solution  $\left(\bH', \bW'\right)$, if there exist $k,\kk \in[K_A+1:K]$ such that  $\left\|\bw_k - \bw_\kk\right\| = \epsilon>0 $, we have 
$$ \frac{1}{K_A\barN_A} \sum_{k=1}^{K_A} \sum_{i=1}^{\barN_A} \cL\left( \bW'\btheta_{k,i}', \by_k \right) \geq L_0+\epsilon', $$
where $\epsilon'>0$ depends on $\epsilon$, $K_A$, $K_B$, $\Etheta$, and $\Ew$.
\end{lemma}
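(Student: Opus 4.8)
The plan is to reduce all three assertions to one structural fact: \emph{at every global minimizer of \eqref{eq: NN unblance limit} both norm constraints are active.} Granting this, the first assertion is immediate, since the minority features $\btheta_{k,i}$ with $k\in[K_A+1:K]$ do not enter the objective of \eqref{eq: NN unblance limit} at all: if some minimizer had a nonzero minority feature, replacing every minority feature by $\mathbf{0}_p$ would keep feasibility and leave the objective unchanged, yielding a minimizer with strictly smaller value of $\frac1K\sum_k\frac1{n_k}\sum_i\|\btheta_{k,i}\|^2$, contradicting activeness of the feature constraint. Once the minority features are $\mathbf{0}_p$, I would symmetrize further: by Jensen's inequality and convexity of the cross-entropy loss in its first argument, replacing the features of any majority class by their class mean neither increases the objective nor enlarges the feature budget, so some minimizer has $\btheta_{k,i}=\btheta_k$ for all $k\le K_A$; the program then reduces to an optimization over $\bW\in\RR^{K\times p}$ and $K_A$ mean vectors whose objective does not involve $\barN_A$ or $\barN_B$ and whose value is unchanged when $p$ is increased beyond $K$. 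This gives the second assertion, that $L_0$ depends only on $K_A,K_B,\Etheta,\Ew$.

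For the third assertion the new ingredient is an averaging step on the \emph{minority classifiers}. Given feasible $(\bH',\bW')$ with $\|\bw'_k-\bw'_{\kk}\|=\epsilon$ for some $k,\kk\in[K_A+1:K]$, replace every minority classifier $\bw'_j$, $j>K_A$, by their common mean $\bar\bw:=\frac1{K_B}\sum_{j=K_A+1}^{K}\bw'_j$ and keep all other variables fixed. Feasibility is preserved, and in fact the normalized classifier budget $\frac1K\sum_j\|\bw_j\|^2$ drops by $\frac1K\sum_{j>K_A}\|\bw'_j-\bar\bw\|^2\ge\frac1{2K}\|\bw'_k-\bw'_{\kk}\|^2=\frac{\epsilon^2}{2K}$; moreover the objective does not increase, since for each majority example $(k,i)$ the numerator $\exp(\bw'_k\cdot\btheta'_{k,i})$ is unchanged while $\sum_{j>K_A}\exp(\bw'_j\cdot\btheta'_{k,i})\ge K_B\exp(\bar\bw\cdot\btheta'_{k,i})$ by convexity of $\exp$. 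Now let $g(\delta)$ denote the minimum of \eqref{eq: NN unblance limit} over feasible points whose normalized classifier budget is at most $\Ew-\delta$; this minimum is attained (the feasible set is compact and the objective continuous), $g$ is nondecreasing, and $g(0)=L_0$, so $g(\delta)\ge L_0$ for all $\delta\ge0$. The structural fact upgrades this to $g(\delta)>L_0$ for every $\delta>0$: if $g(\delta)=L_0$, a minimizer of the restricted problem would be a global minimizer of \eqref{eq: NN unblance limit} whose classifier constraint is slack, a contradiction. Since the averaged configuration is feasible with normalized classifier budget at most $\Ew-\frac{\epsilon^2}{2K}$, the objective at $(\bH',\bW')$ is at least that of the averaged configuration, hence at least $g\!\left(\frac{\epsilon^2}{2K}\right)$; the assertion then holds with $\epsilon':=g\!\left(\frac{\epsilon^2}{2K}\right)-L_0>0$, a quantity depending only on $\epsilon,K_A,K_B,\Etheta,\Ew$.

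The crux, and the step I expect to be hardest, is the structural fact itself. The mechanism is that, were (say) the classifier constraint slack at a minimizer, then $\bW\mapsto c\bW$ with $c>1$ would remain feasible and multiply every majority logit vector $\bz_{k,i}=\bW\btheta_{k,i}$ by $c$; for the cross-entropy loss $\frac{d}{dc}\cL(c\bz,\by_k)=\sum_{j=1}^{K}\bS(c\bz)(j)\,\bz(j)-\bz(k)$, which is $\le0$ for every correctly classified majority example and strictly negative for at least one of them unless all majority logit vectors are constant, so scaling up would strictly decrease the objective and contradict optimality; the identical argument applied to the features, which enter the objective only through the majority examples, handles the feature constraint. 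Thus the real work is to show that at a minimizer every majority example is classified correctly, i.e.\ $\bw_k\cdot\btheta_{k,i}\ge\bw_j\cdot\btheta_{k,i}$ for all $j$, and that the relevant logit vectors are nonconstant. I would establish this either by a local-perturbation argument that rules out a persistent tie or misclassification at a minimizer, or by passing to the semidefinite relaxation of \eqref{eq: NN unblance limit} built exactly as in \eqref{eq:convex sdp problem} and Lemma \ref{theo:to convex} (legitimate because the cross-entropy loss is convex in its first argument), where activeness of the two trace constraints at every minimizer can be read off from strict monotonicity of the convex objective along the positive-semidefinite scaling direction. Everything else is the bookkeeping described above.
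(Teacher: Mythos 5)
Your reductions are sound as far as they go---zeroing the minority features once the feature constraint is known to be active, averaging the minority classifiers to free at least $\epsilon^2/(2K)$ of classifier budget, and converting that slack into a gap via the compact, nondecreasing function $g(\delta)$---but the whole argument hangs on the ``structural fact'' that both norm constraints are active at \emph{every} global minimizer of \eqref{eq: NN unblance limit}, and that fact is exactly what you have not proved. Your scaling mechanism does not close it: writing $\bz=\bW\btheta_{k,i}$, the derivative of $c\mapsto\cL(c\bz,\by_k)$ at $c=1$ is $\sum_j \bS(\bz)(j)\,\bz(j)-\bz(k)$, which is only $\le 0$ when the example is (weakly) correctly classified, and is strictly negative only when the logits are nonconstant; moreover $c\mapsto\cL(c\bz,\by_k)$ is convex, so optimality of a minimizer with a slack constraint merely forces the one-dimensional derivative at $c=1$ to vanish, which is no contradiction. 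Hence you must show that every minimizer of \eqref{eq: NN unblance limit} classifies the majority examples correctly with a margin---and that is essentially the entire difficulty of the lemma, not a side remark. Your second route fares no better: the relaxation \eqref{eq:convex sdp problem} and Property \ref{property:reach const} concern the full-objective program \eqref{eq: NN simplified model}; for \eqref{eq: NN unblance limit} the objective omits the minority classes, so the gradient of the relaxed objective vanishes identically in the minority-feature block, and the ``gradient never vanishes, hence the multiplier is nonzero, hence complementary slackness forces activeness'' argument breaks down precisely for the constraint you need most (the feature constraint), while for the classifier constraint the PSD-scaling direction runs into the same margin issue as above.

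For comparison, the paper does not isolate such a structural fact; it proves Lemma \ref{lemma:mini unblance limit} by deriving an explicit lower bound on the partial objective (Lemma \ref{lemma:imbalance lemma fun1}) parameterized by free constants $C_a,C_b,C_c$, choosing these constants differently in three regimes of $(K_A,K_B,\Etheta\Ew)$, exhibiting feasible solutions attaining the bound (which gives $L_0$ as an explicit function of $K_A,K_B,\Etheta,\Ew$), reading off $\bh_{k,i}=\mathbf{0}_p$ for $k\in[K_A+1:K]$ from the equality conditions of the bound, and obtaining the quantitative gap $\epsilon'$ directly by inserting $\sum_{k>K_A}\|\bw_k-\bw_B\|^2\ge\epsilon^2/2$ into the same bound; the activeness of the constraints comes out as a byproduct rather than being assumed. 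Two smaller issues in your write-up: the claim that $L_0$ (and likewise $g(\delta)$) is independent of $p$, $\barN_A$, $\barN_B$ is asserted rather than argued---independence of $p\ge K$ needs either a dimension-free lower bound or a construction attaining the optimum in dimension $K$, since $K$ classifiers plus $K_A$ class means can span more than $K$ dimensions---and your $\epsilon'$ is non-constructive, which is acceptable for the statement but only once $g$ is shown to depend solely on the listed parameters. Until the activeness of both constraints at every minimizer is actually established, the proposal has a genuine gap at its load-bearing step.
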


Now we are ready to prove Theorem \ref{theo:imbalance limit}.
The proof is based on the  contradiction.   

\begin{proof}[Proof of Theorem \ref{theo:imbalance limit}]
Consider sequences $n_A^{\ell}$ and $n_B^{\ell}$ with $R^{\ell} := n_A^{\ell}/n^{\ell}_B$ for $\ell = 1,2,\dots$. We have $R^{\ell}\to  \infty$.  For each optimization program indexed by $\ell\in\mathbb{N}_+$,  we introduce $(\bH^{\ell, \star}, \bW^{\ell,\star})$ as a minimizer and  separate the objective function  into two parts.  We consider
 $$\cL^{\ell}\left(\bTheta^{\ell}, \bW^{\ell}\right) = \frac{K_A\barN_A^{\ell}}{K_A\barN_A^{\ell}+K_B \barN_B^{\ell}} \cL^{\ell}_A\left(\bTheta^{\ell}, \bW^{\ell}\right) +  \frac{K_B\barN_B^{\ell}}{K_A\barN_A^{\ell}+K_B \barN_B^{\ell}} \cL^{\ell}_B\left(\bTheta^{\ell}, \bW^{\ell}\right), $$
with
$$ \cL^{\ell}_A\left(\bTheta^{\ell}, \bW^{\ell}\right) := \frac{1}{K_A\barN_A^{\ell}} \sum_{k=1}^{K_A} \sum_{i=1}^{\barN_A^{\ell}} \cL\left( \bW^{\ell}\btheta_{k,i}^{\ell}, \by_k \right)  $$
and 
$$ \cL^{\ell}_B\left(\bTheta^{\ell}, \bW^{\ell}\right) := \frac{1}{K_B\barN_B^{\ell}} \sum_{k=K_A+1}^{K} \sum_{i=1}^{\barN_B^{\ell}} \cL\left( \bW^{\ell}\btheta_{k,i}^{\ell}, \by_k \right). $$
 We define $\left(\bH^{\ell,A}, \bW^{\ell,A} \right)$ as a minimizer of the optimization program:
\begin{equation}\label{eq: NN unblance limit A}
\begin{aligned}
\min_{\bH^{\ell}, \bW^{\ell}} \quad& \cL^{\ell}_A\left(\bTheta^{\ell}, \bW^{\ell}\right)\\
\mathrm{s.t.}\quad&\frac{1}{K}\sum_{k=1}^K \left\|\bw_k^{\ell} \right\|^2 \leq E_{W},\\
& \frac{1}{K}\sum_{k=1}^{K_A} \frac{1}{n_A^{\ell}}\sum_{i=1}^{n_A^{\ell}}\left\|\btheta_{k,i}^{\ell} \right\|^2 + \frac{1}{K}\sum_{k=K_A+1}^K \frac{1}{n_B^{\ell}}\sum_{i=1}^{n_B^{\ell}}\left\|\btheta_{k,i}^{\ell} \right\|^2\leq E_{\Theta},
\end{aligned}
\end{equation}
and $\left(\bH^{\ell,B}, \bW^{\ell,B} \right)$ as a minimizer of the optimization program:
\begin{equation}\label{eq: NN unblance limit B}
\begin{aligned}
\min_{\bH^{\ell}, \bW^{\ell}} \quad& \cL^{\ell}_B\left(\bTheta^{\ell}, \bW^{\ell}\right)\\
\mathrm{s.t.}\quad&  \frac{1}{K}\sum_{k=1}^K \left\|\bw_k^{\ell} \right\|^2 \leq \Ew,\\
&    \frac{1}{K}\sum_{k=1}^{K_A} \frac{1}{n_A^{\ell}}\sum_{i=1}^{n_A^{\ell}}\left\|\btheta_{k,i}^{\ell} \right\|^2 + \frac{1}{K}\sum_{k=K_A+1}^K \frac{1}{n_B^{\ell}}\sum_{i=1}^{n_B^{\ell}}\left\|\btheta_{k,i}^{\ell} \right\|^2\leq E_{\Theta}.
\end{aligned}
\end{equation}
Note that  Programs  \eqref{eq: NN unblance limit A} and \eqref{eq: NN unblance limit B} and their minimizers have been studied in Lemma \ref{lemma:mini unblance limit}. We define:
$$ L_A :=  \cL^{\ell}_A\left(\bTheta^{\ell,A}, \bW^{\ell,A}\right) \quad\text{and}\quad  L_B :=  \cL^{\ell}_B\left(\bTheta^{\ell,B}, \bW^{\ell,B}\right). $$
Then Lemma \ref{lemma:mini unblance limit} implies  that $L_A$ and $L_B$ only depend on $K_A$, $K_B$, $\Etheta$, and $\Ew$,  and are independent of $\ell$. Moreover, since $\bh_{k,i}^{\ell,A} = \mathbf{0}_p$ for all $k\in[K_A+1:K]$ and $i\in[\barN_B]$, we have
\begin{equation}\label{eq:lblog}
 \cL^{\ell}_B\left(\bTheta^{\ell,A}, \bW^{\ell,A}\right) = \log(K). 
\end{equation}

Now we prove Theorem \ref{theo:imbalance limit} by contradiction. Suppose there exists a pair $(k,\kk)$ such that $\lim_{\ell\to \infty}\bw^{\ell,\star}_k - \bw^{\ell,\star}_\kk \neq \mathbf{0}_p $.  Then there exists $\epsilon>0$ such that for a subsequence $\left\{\left(\bTheta^{a_\ell,\star}, \bW^{a_\ell,\star}\right)\right\}_{\ell=1}^{\infty}$ and an index $\ell_0$ when $\ell\geq \ell_0$, we have  $\left\|\bw^{a_\ell,\star}_k - \bw^{a_{\ell},\star}_\kk\right\|  \geq \epsilon$. Now we figure out a contradiction by estimating the objective function value on $\left(\bTheta^{a_\ell,\star}, \bW^{a_\ell,\star}\right)$. In fact,  because  $\left(\bTheta^{a_\ell,\star}, \bW^{a_\ell,\star}\right)$ is a minimizer of $\cL^{\ell}(\bTheta^{\ell}, \bW^{\ell})$, we have
\begin{align}\label{eq: imbalance upper for L}
\cL^{a_\ell}\left(\bTheta^{a_\ell,\star}, \bW^{a_\ell,\star}\right) \leq  \cL^{a_\ell}\left(\bTheta^{a_\ell,A}, \bW^{a_\ell,A}\right) &\overset{\eqref{eq:lblog}}=    \frac{K_A\barN_A^{a_\ell}}{K_A\barN_A^{a_\ell}+K_B \barN_B^{a_\ell}} L_A +  \frac{K_B\barN_B^{a_\ell}}{K_A\barN_A^{a_\ell}+K_B \barN_B^{a_\ell}}\log(K)\notag\\
&= L_A  + \frac{1}{K_R R^{a_\ell} +1} \left(\log(K) - L_A\right) \overset{\ell\to\infty}{\to} L_A,
\end{align}
where we define $K_R := K_A/K_B$ and use $R^{\ell} = \barN_A^{\ell}/\barN_B^{\ell}$.

However, when $\ell>\ell_0$,  because $\left\|\bw^{a_\ell,\star}_k - \bw^{a_{\ell},\star}_\kk\right\|  \geq \epsilon>0$,  Lemma \ref{lemma:mini unblance limit} implies that 
$$  \cL^{a_\ell}_A\left(\bTheta^{a_\ell,\star}, \bW^{a_\ell,\star}\right) \geq L_A+\epsilon_2, $$
where $\epsilon_2>0$ only depends on $\epsilon$, $K_A$, $K_B$, $\Etheta$, and $\Ew$,  and is independent of $\ell$. We obtain
\begin{align}\label{eq: imbalance lower for L}
\cL^{a_\ell}\left(\bTheta^{a_\ell,\star}, \bW^{a_\ell,\star}\right)  &=   \frac{K_A\barN_A^{a_\ell}}{K_A\barN_A^{a_\ell}+K_B \barN_B^{a_\ell}}\cL^{a_\ell}_A\left(\bTheta^{a_\ell,\star}, \bW^{a_\ell,\star}\right)+\frac{K_B\barN_B^{a_\ell}}{K_A\barN_A^{a_\ell}+K_B \barN_B^{a_\ell}} \cL^{a_\ell}_B\left(\bTheta^{a_\ell,\star}, \bW^{a_\ell,\star}\right)\notag\\
&\overset{a}\geq   \frac{K_A\barN_A^{a_\ell}}{K_A\barN_A^{a_\ell}+K_B \barN_B^{a_\ell}}\cL^{a_\ell}_A\left(\bTheta^{a_\ell,\star}, \bW^{a_\ell,\star}\right)+ \frac{K_B\barN_B^{a_\ell}}{K_A\barN_A^{a_\ell}+K_B \barN_B^{a_\ell}}\cL^{a_\ell}_B\left(\bTheta^{a_\ell,B}, \bW^{a_\ell,B}\right)\notag\\
&=  \frac{K_A\barN_A^{a_\ell}}{K_A\barN_A^{a_\ell}+K_B \barN_B^{a_\ell}} (L_A+ \epsilon_2) + \frac{K_B\barN_B^{a_\ell}}{K_A\barN_A^{a_\ell}+K_B \barN_B^{a_\ell}}L_B\notag\\
 &= L_A + \epsilon_2 + \frac{1}{K_R R^{a_\ell}+1}(  L_B - L_A - \epsilon_2) \overset{\ell\to\infty}{\to} L_A+\epsilon_2,
\end{align}
where $\overset{a}\geq$ uses $\left(\bTheta^{a_\ell,B}, \bW^{a_\ell,B}\right)$ is the minimizer of  \eqref{eq: NN unblance limit B}.
Thus we meet contradiction by
comparing \eqref{eq: imbalance upper for L} with \eqref{eq: imbalance lower for L} and achieve Theorem \ref{theo:imbalance limit}.  
\end{proof}

\begin{proof}[Proof of Lemma \ref{lemma:mini unblance limit}]
For any constants $C_a>0$, $C_b>0$, and $C_c>0$, define $C_a' :=  \frac{C_a}{C_a+(K_A-1)C_b+K_B C_c} \in(0,1)$,   $C_b' :=  \frac{C_b}{C_a+(K_A-1)C_b+K_B C_c}\in(0,1)$, and $C_c' :=  \frac{C_c}{C_a+(K_A-1)C_b+K_B C_c}\in(0,1)$,   $C_d := -C_a'\log(C_a')-C_b'(K_A-1)\log(C_b')-K_B C_c'\log(C_c') $, $C_e:=\frac{K_A C_b}{K_A C_b+ K_B C_c}\in(0,1)$, $C_f:= \frac{K_B C_c}{K_A C_b+ K_B C_c  } \in(0,1)$, and  $C_g:=  \frac{K_AC_b+K_B C_c}{C_a +(K_A -1)C_b + K_B C_c}>0$.   Using a similar argument  as Theorem \ref{theo: cross-entropy balance}, we show in Lemma \ref{lemma:imbalance lemma fun1} (see the end of the proof), for any feasible solution $(\bH, \bW)$ of  \eqref{eq: NN unblance limit}, the objective value can be  bounded from below by:
\begin{align}
 &\frac{1}{K_A\barN_A} \sum_{k=1}^{K_A} \sum_{i=1}^{\barN_A} \cL( \bW\btheta_{k,i}, \by_k )\label{eq: imbalance lower12}\\
\overset{a}\geq& - \frac{C_g}{K_A}  \sqrt{K \Etheta}\sqrt{ \sum_{k=1}^{K_A} \left\|C_e \bw_A +C_f \bw_B - \bw_{k}\right\|^2 } + C_d\notag\\
\overset{b}\geq &  -\frac{C_g}{K_A}  \sqrt{ K\Etheta}\sqrt{ K\Ew  - K_A\left( 1/K_R -C_f^2 -  \frac{C_f^4}{C_e(2 -C_e)}  \right) \|\bw_B \|^2- \sum_{k=K_A+1}^{K}\left\|\bw_k -\bw_B \right\|^2 } + C_d, \notag
\end{align}
where $\bw_A := \frac{1}{K_A}\sum_{k=1}^{K_A} \bw_k$,  $\bw_B := \frac{1}{K_B}\sum_{k=K_A+1}^{K} \bw_k$, and $K_R := \frac{K_A}{K_B} $. Moreover, the equality in $\overset{a}\geq$ holds only if $\bh_{k,i} = \mathbf{0}_p$ for all $k\in[K_A+1:K]$ and $i\in[\barN_B]$.

Though $C_a$, $C_b$, and $C_c$ can be any positive numbers, we need to carefully pick  them to exactly reach the  global minimum of   \eqref{eq: NN unblance limit}. In the following, we separately consider three cases  according to the values of $K_A$, $K_B$, and $\Etheta\Ew$. 

\begin{itemize}
   \item[]  \hypertarget{Case:A}{(i)}  Consider the case when $K_A=1$.
   We pick $C_a := \exp\left(\sqrt{K_B(1+K_B)\Etheta\Ew}\right)$, $C_b :=1$, and $C_c := \exp\left(-\sqrt{(1+K_B)\Etheta\Ew/K_B} \right)$.  
   
Then from $\overset{a}\geq$ in \eqref{eq: imbalance lower12}, we have
   \begin{align}\label{eq:imbalance case 0}
        &\frac{1}{K_A\barN_A} \sum_{k=1}^{K_A} \sum_{i=1}^{\barN_A} \cL( \bW\btheta_{k,i}, \by_k )\notag\\
        \overset{a}\geq& - C_gC_f  \sqrt{K \Etheta} \sqrt{\left\| \bw_1  - \bw_B\right\|^2} + C_d\notag\\
      =& - C_gC_f  \sqrt{K \Etheta} \sqrt{\| \bw_1\|^2-2\bw_1^\top\bw_B  + \|\bw_B\|^2} + C_d\notag\\
      \overset{b}\geq&  - C_gC_f  \sqrt{K \Etheta} \sqrt{(1+1/K_B)(\| \bw_1\|^2+K_B \|\bw_B\|^2)} + C_d\notag\\
      \overset{c}\geq&  -C_gC_f  \sqrt{K \Etheta}\sqrt{(1+1/K_B)\left(K\Ew-\sum_{k=2}^{K}\|\bw_k - \bw_B\|^2 \right) }+C_d\notag\\
      \geq&  -C_gC_f  \sqrt{K \Etheta}\sqrt{(1+1/K_B)K\Ew}+C_d:=L_1,
   \end{align}
   where $\overset{a}\geq$ uses $C_e+C_f = 1$,   $\overset{b}\geq$ follows from $2ab \le a^2 + b^2$, i.e., $-2\bw_1^\top\bw_B \leq (1/K_B)\|\bw_1 \|^2+ K_B \|\bw_B\|^2 $, and $\overset{c}\geq$ follows from  $\sum_{k=2}^{K}\|\bw_k\|^2 = K_B \| \bw_B\|^2 +  \sum_{k=2}^{K}\|\bw_k - \bw_B\|^2$ and the constraint that $\sum_{k=1}^{K}\|\bw_k\|^2\leq K\Ew$.
   
 On the other hand, when $(\bTheta, \bW)$ satisfies  that
 \begin{eqnarray}\label{eq:imbalance solution0}
 \begin{aligned}
  \bw_1&= \sqrt{K_B\Ew}\bu, \quad  \bw_k=- \sqrt{1/K_B\Ew}\bu, ~k\in[2:K], \notag\\
\btheta_{1,i} =& \sqrt{(1+K_B)\Etheta}\bu, ~ i\in[n_A],\quad\quad
\btheta_{k,i} =  \mathbf{0}_p, ~ k\in[2:K],~ i\in[\barN_B]  \notag,\\
 \end{aligned}
 \end{eqnarray}
 where $\bu$ is any unit vector,  the inequalities in \eqref{eq:imbalance case 0} reduce to equalities. So $L_1$ is the global minimum  of  \eqref{eq: NN unblance limit}.   Moreover, $L_1$ is achieved only if  $\overset{a}\geq$  in  \eqref{eq: imbalance lower12} reduces to inequality. From Lemma \ref{eq: NN unblance limit}, we have that any minimizer  satisfies that   $\bh_{k,i} = \mathbf{0}_p$ for all $k\in[K_A+1:K]$ and $i\in[\barN_B]$.
 
Finally, for any feasible solution  $\left(\bH', \bW'\right)$, if there exist $k,\kk \in[K_A+1:K]$ such that  $\left\|\bw_k - \bw_\kk\right\| = \epsilon>0 $, we have 
 \begin{equation}\label{eq:dif eps}
  \sum_{k=K_A+1}^{K}\|\bw_k - \bw_B\|^2 \geq \|\bw_k - \bw_B\|^2+ \|\bw_\kk - \bw_B\|^2\geq \frac{\|\bw_k - \bw_\kk\|^2}{2} = \epsilon^2/2.
  \end{equation}
  It follows from $\overset{c}\geq$ in \eqref{eq:imbalance case 0} that 
\begin{equation}
       \frac{1}{K_A\barN_A} \sum_{k=1}^{K_A} \sum_{i=1}^{\barN_A} \cL( \bW\btheta_{k,i}, \by_k )\geq  -C_gC_f  \sqrt{K \Etheta}\sqrt{(1+1/K_B)\left(K\Ew-\epsilon^2/2 \right) }+C_d:= L_1+\epsilon_1 \notag
\end{equation}
with $\epsilon_1>0$ depending on $\epsilon$, $K_A$, $K_B$, $\Etheta$, and $\Ew$.

    \item  \hypertarget{Case:B}{(ii)}   Consider the case when  $K_A>1$ and  $\exp\left( (1+1/K_R) \sqrt{\Etheta\Ew} /(K_A-1) \right) <  \sqrt{1+K_R}+1$. Let us pick $C_a := \exp\left( (1+1/K_R)\sqrt{ \Etheta\Ew }\right)$, $C_b := \exp\left( -\frac{1}{K_A -1} (1+1/K_R) \sqrt{ \Etheta\Ew } \right) $, and $C_c :=1$. 
    
  Following from $\overset{b}\geq$ in \eqref{eq: imbalance lower12},  we know if  $ 1/K_R -C_f^2 -  \frac{C_f^4}{C_e(2 -C_f)}>0$,  then
    \begin{eqnarray}\label{eq:imbalance lower bound f}
    &\frac{1}{K_A\barN_A}  \sum_{k=1}^{K_A} \sum_{i=1}^{\barN_A} \cL( \bW\btheta_{k,i}, \by_k ) \geq  -C_g(1+1/K_R)\sqrt{\Etheta\Ew} + C_d:=L_2.
    \end{eqnarray}
In fact,  we do have $1/K_R -C_f^2 -  \frac{C_f^4}{C_e(2 -C_f)}>0$ because
\begin{eqnarray}\label{eq:unblance case simple}
\begin{aligned}
 \quad&1/K_R >  C_f^2 -  \frac{C_f^4}{C_e(2 -C_e)}\quad\quad\quad \left(\text{by~} C_e+C_f =1 \right)\\
\iff\quad&  C_e > \sqrt{\frac{1}{1+ K_R}} \quad\quad\quad \left(\text{by~} C_e = \frac{K_B C_c}{K_A C_b+K_B C_c} \right)\notag\\
\iff \quad &  \frac{C_b}{C_c} > \frac{1}{\sqrt{1+K_R}+1}\notag\\
\iff \quad  & \exp\left( (1+1/K_R) \sqrt{\Etheta\Ew} /(K_A-1) \right) <  \sqrt{1+K_R}+1.
\end{aligned}
\end{eqnarray}
 On the other hand, when $(\bTheta, \bW)$ satisfies  that
 \begin{eqnarray}\label{eq:imbalance solution}
 \begin{aligned}
  \left[\bw_1, \bw_2, \ldots, \bw_{K_A}\right] =&  \sqrt{\frac{\Ew}{\Etheta}} ~ \bigg[\btheta_{1},\ldots,\btheta_{K_A}\bigg]^\top  = \sqrt{(1+1/K_R)\Ew} ~(\mathbf{M}_A^{\star})^\top,\notag\\
\btheta_{k,i} =& \btheta_{k}, \quad k\in[K_A],~ i\in[\barN_A] \notag\\
\btheta_{k,i} =& \bw_k = \mathbf{0}_p, \quad k\in[K_A+1:K],~ i\in[\barN_B]  \notag,\\
 \end{aligned}
 \end{eqnarray}
 where $\mathbf{M}_A^{\star}$ is a $K_A$-simplex ETF,  \eqref{eq:imbalance lower bound f} reduces to equality. So $L_2$ is the global minimum  of  \eqref{eq: NN unblance limit}. Moreover, $L_2$ is achieved only if  $\overset{a}\geq$  of \eqref{eq: imbalance lower12} reduces to equality. From Lemma \ref{lemma:imbalance lemma fun1}, we have that any minimizer  satisfies that   $\bh_{k,i} = \mathbf{0}_p$ for all $k\in[K_A+1:K]$ and $i\in[\barN_B]$.
 
Finally, for any feasible solution  $\left(\bH', \bW'\right)$, if there exist $k,\kk \in[K_A+1:K]$ such that  $\left\|\bw_k - \bw_\kk\right\| = \epsilon>0 $,  plugging \eqref{eq:dif eps} into $\overset{b}\geq$ in \eqref{eq: imbalance lower12}, we have 
\begin{equation}\label{eq:dif ep22}
        \frac{1}{K_A\barN_A}  \sum_{k=1}^{K_A} \sum_{i=1}^{\barN_A} \cL( \bW\btheta_{k,i}, \by_k ) \geq  -\frac{C_g}{K_A}\sqrt{K\Etheta}\sqrt{K\Ew - \epsilon^2/2} + C_d:=L_2+\epsilon_2, 
\end{equation}
with $\epsilon_2>0$ depending on $\epsilon$, $K_A$, $K_B$, $\Etheta$, and $\Ew$.

    \item  \hypertarget{Case:C}{(iii)} Consider the case when $K_A>1$ and   $\exp((1+1/K_R) \sqrt{\Etheta\Ew} /(K_A-1) ) \geq  \sqrt{1+K_R}+1$.
Let $C_f' :=  \frac{1}{\sqrt{K_R +1}}$ and $C_e' := 1 - C_f'$. For $x\in[0,1]$,  we define:
\begin{eqnarray}
\begin{aligned}
 g_N(x): &= \sqrt{\frac{(1+K_R)\Ew}{K_R x^2+ (K_R+K_R^2)(1-x)^2} },\notag\\
 g_a(x): &=  \exp\left(\frac{g_N(x) \sqrt{(1+K_R)\Etheta/K_R} }{ \sqrt{x^2 +\left(1+ \frac{C_e'}{C_f'}\right)^2 (1-x)^2}}\left[  x^2 + \left(1+ \frac{C_e'}{C_f'}\right)(1-x)^2  \right]\right),\notag\\
 g_b(x): &=  \exp\left(\frac{g_N(x) \sqrt{(1+K_R)\Etheta/K_R} }{ \sqrt{x^2 +\left(1+ \frac{C_e'}{C_f'}\right)^2 (1-x)^2}}\left[ - \frac{1}{K_A -1} x^2 + \left(1+ \frac{C_e'}{C_f'}\right)(1-x)^2  \right]\right),\notag\\
 g_c(x): &=  \exp\left(\frac{g_N(x) \sqrt{(1+K_R)\Etheta/K_R} }{\sqrt{x^2 +\left(1+ \frac{C_e'}{C_f'}\right)^2 (1-x)^2}}\left[  -\left(1+ \frac{C_e'}{C_f'}\right)K_R(1-x)^2  \right]\right).
\end{aligned}
\end{eqnarray}
Let $x_0\in[0,1]$ be a root of the equation
$$ g_b(x) / g_c(x) = \frac{1/C_f' - 1}{K_R}. $$
We first show that the solution $x_0$ exists.  First of all, one can directly verify when $x\in[0,1]$,  $ g_b(x) / g_c(x)$ is continuous. It suffices to prove that (A) $g_b(0) / g_c(0)  \geq \frac{1/C_f' - 1}{K_R} $ and  (B) $g_b(1) / g_c(1)  \leq \frac{1/C_f' - 1}{K_R}$.

\begin{itemize}
    \item[(A)]  When $x = 0$, we have  $g_b(x) / g_c(x) \geq \exp(0)=1$. At the same time, 
     $\frac{1/C_f' - 1}{K_R} =\frac{\sqrt{K_R+1}-1}{K_R} = \frac{1}{\sqrt{K_R+1}+1} \leq 1$. Thus  $(i)$ is achieved. 
    \item[(B)]  When $x = 1$,  we have $g_N(1) =  \sqrt{(1+1/K_R)\Ew}$, so
    \begin{eqnarray}
    \begin{aligned}
      g_b(1) /  g_c(1) =  \exp\left( -(1+1/K_R) \sqrt{\Etheta\Ew} /(K_A-1) \right) \overset{a}\leq \frac{1}{\sqrt{K_R+1}+1}  = \frac{1/C_f' - 1}{K_R}.\notag
    \end{aligned}
    \end{eqnarray}
    where $\overset{a}\leq$ is obtained by the condition that  $$\exp\left( (1+1/K_R) \sqrt{\Etheta\Ew} /(K_A-1) \right) \geq  \sqrt{1+K_R}+1.$$
\end{itemize}
Now we pick $C_a := g_a(x_0) $,  $C_b := g_b(x_0)$, and $C_c := g_c(x_0)$, because $\frac{C_b }{C_c} = \frac{1/C_f' - 1}{K_R}$, we have $C_e = C_e'$ and $C_f = C_f'$ and   $1/K_R =C_f^2 +  \frac{C_f^4}{C_e(2 -C_e)}$. Then it follows from $\overset{b}\geq$ in \eqref{eq: imbalance lower12} that
    \begin{eqnarray}\label{eq:imbalance lower bound f2}
    &\frac{1}{K_A\barN_A}  \sum_{k=1}^{K_A} \sum_{i=1}^{\barN_A} \cL( \bW\btheta_{k,i}, \by_k ) \geq  -C_g(1+1/K_R)\sqrt{\Etheta\Ew} + C_d=L_2.
    \end{eqnarray}
    On the other hand,  consider the solution $(\bTheta, \bW)$ that satisfies 
\begin{eqnarray}\label{eq: imbalance solution 2}
\begin{aligned}
&\bw_{k} =  g_N(x_0) \bP_A\left[   \frac{x_0}{\sqrt{(K_A-1)K_A}}(K_A\by_k -\cI_{K_A})    +\frac{1-x_0}{\sqrt{K_A}} \cI_{K_A} \right],  \quad k\in[K_A],\notag\\
&\bw_{k} =  - \frac{C_e(2- C_e)}{C_f^2 K_A}\bP_A \sum_{k=1}^{K_A} \bw_k, \quad k\in[K_A+1:K],\notag\\
&\btheta_{k,i} =  \frac{\sqrt{(1+1/K_R)\Etheta}}{\| \bw_{i} + \frac{C_e}{C_fK_A} \sum_{k=1}^{K_A} \bw_k\|} \bP_A \left[\bw_{i} + \frac{C_e}{C_fK_A} \sum_{k=1}^{K_A} \bw_k\right], \quad k\in[K_A], ~i\in[\barN_A],\notag\\
&\btheta_{k,i} =\mathbf{0}_p,  \quad k\in[K_A+1:K], ~i\in[\barN_B],
\end{aligned}
\end{eqnarray}
where $\by_k\in\RR^K$ is the vector containing one in the $k$-th entry and zero elsewhere and 
$\bP_A\in \RR^{p\times K_A}$ is a partial orthogonal matrix such that $\bP^\top_A \bP_A = \bI_{K_A}$.
We have  $\exp\left(\btheta_{k,i}^\top \bw_k\right) = g_a(x_0)$ for $i\in[\barN_A]$ and $k\in[K_A]$, $\exp\left(\btheta_{k,i}^\top \bw_\kk\right) = g_b(x_0)$ for   $i\in[\barN_A]$ and $k,\kk\in[K_A]$ such that  $k\neq \kk$, and $\exp\left(\btheta_{k,i}^\top \bw_\kk\right) = g_c(x_0)$ for $i\in[\barN_A]$,  $k\in[K_A]$, and $\kk\in[K_B]$.  Moreover, $(\bTheta, \bW)$ can achieve the equality in \eqref{eq:imbalance lower bound f2}. Finally, following the same argument as Case (\hyperlink{Case:B}{ii}), we have that (1)  $L_2$ is the global minimum  of  \eqref{eq: NN unblance limit}; (2) any minimizer  satisfies that   $\bh_{k,i} = \mathbf{0}_p$ for all $k\in[K_A+1:K]$ and $i\in[\barN_B]$; (3) for any feasible solution  $\left(\bH', \bW'\right)$, if there exist $k,\kk \in[K_A+1:K]$ such that  $\left\|\bw_k - \bw_\kk\right\| = \epsilon>0 $,  then \eqref{eq:dif ep22} holds. 
\end{itemize}
Combining the three cases, we  obtain Lemma \ref{lemma:mini unblance limit}, completing the proof.
\end{proof}

\begin{lemma}\label{lemma:imbalance lemma fun1}
For any constants $C_a>0$, $C_b>0$, and $C_c>0$,  define $C_a' :=  \frac{C_a}{C_a+(K_A-1)C_b+K_B C_c}\in(0,1)$,   $C_b' :=  \frac{C_b}{C_a+(K_A-1)C_b+K_B C_c}\in(0,1)$, and $C_c' :=  \frac{C_c}{C_a+(K_A-1)C_b+K_B C_c}\in(0,1)$,   $C_d := -C_a'\log(C_a')-C_b'(K_A-1)\log(C_b')-K_B C_c'\log(C_c') $, $C_e:=\frac{K_A C_b}{K_A C_b+ K_B C_c}\in(0,1)$, $C_f:= \frac{K_B C_c}{K_A C_b+ K_B C_c  } \in(0,1)$, and  $C_g:=  \frac{K_AC_b+K_B C_c}{C_a +(K_A -1)C_b + K_B C_c}>0$.    For any feasible solution $(\bH, \bW)$ of  \eqref{eq: NN unblance limit}, the objective value of   \eqref{eq: NN unblance limit} can be  bounded from below by:
\begin{align}
 &\frac{1}{K_A\barN_A} \sum_{k=1}^{K_A} \sum_{i=1}^{\barN_A} \cL( \bW\btheta_{k,i}, \by_k )\label{eq: imbalance lower1}\\
\overset{a}\geq& - \frac{C_g}{K_A}  \sqrt{K \Etheta}\sqrt{ \sum_{k=1}^{K_A} \left\|C_e \bw_A +C_f \bw_B - \bw_{k}\right\|^2 } + C_d\notag\\
\overset{b}\geq &  -\frac{C_g}{K_A}  \sqrt{ K\Etheta}\sqrt{ K\Ew\!  - K_A\left( 1/K_R -C_f^2 -  \frac{C_f^4}{C_e(2 -C_e)}  \right) \|\bw_B \|^2-\! \sum_{k=K_A+1}^{K}\left\|\bw_k -\bw_B \right\|^2} + C_d, \notag
\end{align}
where $\bw_A := \frac{1}{K_A}\sum_{k=1}^{K_A} \bw_k$,  $\bw_B := \frac{1}{K_B}\sum_{k=K_A+1}^{K} \bw_k$, and $K_R := \frac{K_A}{K_B} $. Moreover, the equality in $\overset{a}\geq$ hold only if $\bh_{k,i} = \mathbf{0}_p$ for all $k\in[K_A+1:K]$.
\end{lemma}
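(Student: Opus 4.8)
The plan is to mimic the proof of Theorem~\ref{theo: cross-entropy balance}: rewrite the cross-entropy loss, apply the concavity of $\log$ to the $K$ exponentials in its denominator against a carefully weighted convex combination, and reduce the objective to a linear functional of the features that is controlled by Cauchy--Schwarz and the two norm budgets of \eqref{eq: NN unblance limit}. Concretely, for $k\in[K_A]$ and $i\in[\barN_A]$ write $\cL(\bW\btheta_{k,i},\by_k)=\log\big(\sum_{\kk=1}^{K}\exp(\btheta_{k,i}\cdot\bw_\kk)\big)-\btheta_{k,i}\cdot\bw_k$ and split the $K$ exponentials into the correct term $\kk=k$, the $K_A-1$ remaining majority terms, and the $K_B$ minority terms, weighted respectively by $C_a',C_b',C_c'$, which sum to one by construction. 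Jensen's inequality then lower-bounds $\log\big(\sum_{\kk}\exp(\btheta_{k,i}\cdot\bw_\kk)\big)$ by a quantity linear in the inner products $\btheta_{k,i}\cdot\bw_\kk$ plus the constant $C_d$; subtracting $\btheta_{k,i}\cdot\bw_k$, collecting terms with $\bw_A:=\frac1{K_A}\sum_{\kk\le K_A}\bw_\kk$ and $\bw_B:=\frac1{K_B}\sum_{\kk> K_A}\bw_\kk$, and using the identities $K_AC_b'+K_BC_c'=C_g$, $K_AC_b'/(K_AC_b'+K_BC_c')=C_e$, $K_BC_c'/(K_AC_b'+K_BC_c')=C_f$, this collapses to $\cL(\bW\btheta_{k,i},\by_k)\ge C_g\,\btheta_{k,i}\cdot(C_e\bw_A+C_f\bw_B-\bw_k)+C_d$. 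Averaging over $k\le K_A$, $i\le\barN_A$ reduces the task to lower bounding $\tfrac1{\barN_A}\sum_{k\le K_A}\sum_{i\le\barN_A}\btheta_{k,i}\cdot\bv_k$, where $\bv_k:=C_e\bw_A+C_f\bw_B-\bw_k$.

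For inequality $\overset{a}\geq$, use $\bm x\cdot\bm y\ge-\|\bm x\|\|\bm y\|$ followed by Cauchy--Schwarz for sums, obtaining $\sum_{k\le K_A,\,i\le\barN_A}\btheta_{k,i}\cdot\bv_k\ge-\big(\sum_{k\le K_A,\,i\le\barN_A}\|\btheta_{k,i}\|^2\big)^{1/2}\big(\barN_A\sum_{k\le K_A}\|\bv_k\|^2\big)^{1/2}$. Since $n_k=\barN_A$ for $k\le K_A$, the feature constraint of \eqref{eq: NN unblance limit} gives $\sum_{k\le K_A,\,i\le\barN_A}\|\btheta_{k,i}\|^2\le K\Etheta\barN_A$ (the nonnegative minority feature block only helps), and combining with the Step-1 bound yields $\overset{a}\geq$. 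For the equality statement, equality forces this last estimate to be tight, i.e.\ $\sum_{k=1}^{K_A}\tfrac1{\barN_A}\sum_{i=1}^{\barN_A}\|\btheta_{k,i}\|^2=K\Etheta$; since the full feature constraint reads $\le K\Etheta$ and the minority block is nonnegative, this forces $\sum_{k=K_A+1}^{K}\tfrac1{\barN_B}\sum_{i=1}^{\barN_B}\|\btheta_{k,i}\|^2=0$, hence $\bh_{k,i}=\bm{0}_p$ for every $k\in[K_A+1:K]$ and $i\in[\barN_B]$.

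For inequality $\overset{b}\geq$, write $\bw_k=\bw_A+\bm{\delta}_k$ for $k\le K_A$ (so $\sum_{k\le K_A}\bm{\delta}_k=\bm{0}_p$); since $C_e+C_f=1$ we get $\bv_k=C_f(\bw_B-\bw_A)-\bm{\delta}_k$, whence $\sum_{k=1}^{K_A}\|\bv_k\|^2=K_AC_f^2\|\bw_B-\bw_A\|^2+\sum_{k=1}^{K_A}\|\bw_k\|^2-K_A\|\bw_A\|^2$. Now substitute the classifier budget $\sum_{k=1}^{K}\|\bw_k\|^2\le K\Ew$ together with $\sum_{k> K_A}\|\bw_k\|^2=K_B\|\bw_B\|^2+\sum_{k> K_A}\|\bw_k-\bw_B\|^2$; using $K_A/K_R=K_B$ and $1-C_f^2=C_e(2-C_e)$ (valid because $C_e=1-C_f$), the desired bound $\overset{b}\geq$ follows provided the scalar inequality $(1-C_f^2)\|\bw_A\|^2+2C_f^2(\bw_A\cdot\bw_B)+\tfrac{C_f^4}{C_e(2-C_e)}\|\bw_B\|^2\ge0$ holds. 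Bounding $\bw_A\cdot\bw_B\ge-\|\bw_A\|\|\bw_B\|$ rewrites its left-hand side as $\big(\sqrt{C_e(2-C_e)}\,\|\bw_A\|-\tfrac{C_f^2}{\sqrt{C_e(2-C_e)}}\|\bw_B\|\big)^2\ge0$ (note $C_e(2-C_e)>0$ since $C_e\in(0,1)$), which settles it; taking square roots and multiplying through by $\tfrac{C_g}{K_A}\sqrt{K\Etheta}>0$ gives $\overset{b}\geq$.

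The main obstacle is Step~3: Steps~1 and~2 are bookkeeping once the three-type weighting $(C_a',C_b',C_c')$ is in place, but Step~3 requires keeping careful track of the auxiliary constants in order to recognize that, once the classifier budget is injected, the residual quadratic form in $\|\bw_A\|^2$, $\|\bw_B\|^2$, $\bw_A\cdot\bw_B$ collapses --- thanks to $C_e+C_f=1$ and $1-C_f^2=C_e(2-C_e)$ --- into a perfect square, which is precisely what permits discarding the indefinite-sign cross term $\bw_A\cdot\bw_B$ by Cauchy--Schwarz.
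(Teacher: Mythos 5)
Your proposal is correct and follows essentially the same route as the paper: the weighted log-concavity (Jensen) bound with the three-type weights $(C_a',C_b',C_c')$, then Cauchy--Schwarz plus the feature budget for $\overset{a}\geq$ (with the equality condition forcing the minority features to vanish), then the classifier budget and a completion of squares in $\|\bw_A\|,\|\bw_B\|$ for $\overset{b}\geq$. Your only deviations are cosmetic: you apply Cauchy--Schwarz directly to the per-example features instead of first passing to class means, and you complete the square in scalar form after bounding $\bw_A\cdot\bw_B$, whereas the paper completes the square in vector form and drops the nonpositive term --- the two computations are equivalent.
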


\begin{remark}
Note that the case $\bh_{k,i} = \mathbf{0}_p$ does not imply that the network activations all die for the classes $k\in[K_A+1:K]$. This is because our analysis does not include the bias term for simplicity.
\end{remark}

\begin{proof}[Proof of Lemma \ref{lemma:imbalance lemma fun1}]
For $k\in[K_A]$ and $i\in[n_k]$, we introduce $\bz_{k,i} = \bW\btheta_{k,i}$. Because that $C_a'+(K_A-1)C_b'+K_B C_c'=1$, $C_a'>0$, $C_b'>0$, and $C_c'>0$,   by the concavity of $\log(\cdot)$,  we have
\begin{align}\label{eq:imbalance log}
  &- \log\left(\frac{\exp(\bz_{k,i}(i))}{\sum_{\kk=1}^{K} \exp(\bz_{\kk,i}(k))}\right)\\
  =&  -\bz_{k,i}(k) + \log\left(  C_a' \left( \frac{\exp(z_{k,i}(k))}{C_a'} \right) +\sum_{\kk=1,~\kk\neq k}^{K_A} C_b' \left( \frac{\exp(z_{k,i}(\kk))}{C_b'} \right) +\sum_{\kk=K_A+1}^{K}  C_c' \left( \frac{\exp(z_{k,i}(\kk))}{C_c'} \right) \right)\notag\\
  \geq& -  \bz_{k,i}(k)+ C_a' \bz_{k,i}(k)+  C_b'\sum_{\kk=1,~\kk\neq k}^{K_A}\bz_{k,i}(\kk)+ C_C' \sum_{\kk=K_A+1}^{K} \bz_{i,j}(k) +C_d\notag\\
  =& C_g  C_e\left( \frac{1}{K_A}\sum_{\kk=1}^{K_A}\bz_{k,i}(\kk) - \bz_{k,i}(k) \right)+  C_g  C_f\left( \frac{1}{K_B}\sum_{\kk=K_A+1}^{K}\bz_{k,i}(\kk) - \bz_{k,i}(k) \right) +C_d.\notag
\end{align}
 Therefore,  integrating \eqref{eq:imbalance log} with $k\in[K_A]$ and $i\in[n_A]$,  recalling that $\bw_A = \frac{1}{K_A}\sum_{k=1}^{K_A} \bw_k$ and $\bw_B = \frac{1}{K_B}\sum_{k=K_A+1}^{K} \bw_k$, 
 we have
  \begin{align}
 &\frac{1}{K_A\barN_A} \sum_{k=1}^{K_A} \sum_{i=1}^{\barN_A} \cL( \bW\btheta_{k,i}, \by_k )\label{eq: imbalance lower}\\
 \geq& \frac{1}{K_A\barN_A} \sum_{k=1}^{K_A} \sum_{i=1}^{\barN_A} C_g\left[ C_e(\btheta_{k,i}\bw_A  -  \btheta_{k,i}\bw_k ) +C_f(\btheta_{k,i}\bw_B  -  \btheta_{k,i}\bw_k )      \right]  +C_d\notag\\
 \overset{a}{=}& \frac{C_g}{K_A}\sum_{k=1}^{K_A} \btheta_{k}^\top (C_e \bw_A +C_f \bw_B - \bw_{k})+C_d,\notag
  \end{align}
where in $\overset{a}=$, we introduce $\btheta_{k} := \frac{1}{n_k} \sum_{i=1}^{n_k}\btheta_{k,i}$ for $k\in[K]$, and  use $C_e+C_f = 1$. Then it is sufficient to bound $\sum_{k=1}^{K_A} \btheta_{k}^\top (C_e \bw_A +C_f \bw_B - \bw_{k})$.
By the Cauchy–Schwarz inequality, we have
 \begin{align}
 \sum_{k=1}^{K_A} \btheta_{k}^\top (C_e \bw_A +C_f \bw_B - \bw_{k})\geq& - \sqrt{\sum_{k=1}^{K_A}\|\btheta_{k}\|^2 }\sqrt{ \sum_{k=1}^{K_A} \left\|C_e \bw_A +C_f \bw_B - \bw_{k}\right\|^2   }\notag\\
 \overset{a}\geq& - \sqrt{\sum_{k=1}^{K_A} \frac{1}{n_k} \sum_{i=1}^{n_k} \|\btheta_{k,i}\|^2 }\sqrt{ \sum_{k=1}^{K_A} \left\|C_e \bw_A +C_f \bw_B - \bw_{k}\right\|^2 }\notag\\
 \overset{b}\geq& - \sqrt{K\Etheta} \sqrt{ \sum_{k=1}^{K_A} \left\|C_e \bw_A +C_f \bw_B - \bw_{k}\right\|^2 },   \label{eq: imbalance bound theta}
 \end{align}
where $\overset{a}\geq$ follows from Jensen's inequality $\frac{1}{n_k}\sum_{i=1}^{n_k} \|\btheta_{k,i}\|^2 \geq \btheta_{k}$ for $k\in[K_A]$  and  $\overset{b}\geq$ uses the constraint that $\frac{1}{K}\sum_{k=1}^K \frac{1}{n_k}\sum_{i=1}^{n_k}\left\|\btheta_{k,i} \right\|^2 \leq E_{\Theta}$. Moreover,  we have $\sum_{k=1}^{K_A} \frac{1}{n_k}\sum_{i=1}^{n_k}\left\|\btheta_{k,i} \right\|^2 = E_{\Theta}$ only if $\bh_{k,i} = \mathbf{0}_p$ for all $k\in[K_A+1:K]$.   Plugging \eqref{eq: imbalance bound theta} into \eqref{eq: imbalance lower}, we obtain $\overset{a}\geq$ in  \eqref{eq: imbalance lower1}.

We then bound $\sum_{k=1}^{K_A} \left\|C_e \bw_A +C_f \bw_B - \bw_{k}\right\|^2$.  First, we  have
     \begin{align}\label{eq: imbalance bound w}
      &\frac{1}{K_A}\sum_{k=1}^{K_A} \left\|C_e \bw_A +C_f \bw_B - \bw_{k}\right\|^2\notag\\
      =&\frac{1}{K_A}\sum_{k=1}^{K_A}\| \bw_k\|^2  - 2\frac{1}{K_A} \sum_{k=1}^{K_A}\bw_k \cdot (C_e\bw_A+C_f\bw_B) + \| C_e\bw_A+C_f\bw_B \|^2 \notag\\
      \overset{a}=& \frac{1}{K_A}\sum_{k=1}^{K_A}\| \bw_k\|^2  - 2C_f^2 \bw_A^\top \bw_B  - C_e(2 - C_e) \|\bw_A\|^2+    C_f^2 \| \bw_B\|^2.
     \end{align}
where $ \overset{a}=$ uses $\sum_{k=1}^{K_A} \bw_k = K_A \bw_A$.
Then using the constraint that $\sum_{k=1}^K\|\bw_k \| \leq K\Ew$ yields that
    \begin{align}\label{eq: imbalance fun l}
      &\frac{1}{K_A}\sum_{k=1}^{K_A}\| \bw_k\|^2  - 2C_f^2 \bw_A^\top \bw_B  - C_e(2 - C_e) \|\bw_A\|^2+    C_f^2 \| \bw_B\|^2\\
      \leq&  \frac{K}{K_A}\Ew^2  -  \frac{1}{K_A}\sum_{k=K_A+1}^{K}\!\| \bw_k\|^2 -  C_e(2 -C_f)\left\|\bw_A + \frac{C_f^2}{C_e(2 -C_e)}\bw_B\right\|^2 \!\!+\! \left(C_f^2 +  \frac{C_f^4}{C_e(2 -C_e)} \right)\| \bw_B\|^2\notag\\
      \overset{a}{=}&   \frac{K}{K_A}\Ew^2 - \left( 1/K_R -C_f^2 -  \frac{C_f^4}{C_e(2 -C_e)}  \right) \|\bw_B \|^2 - \frac{1}{K_A}\sum_{k=K_A+1}^{K}\!\left\|\bw_k -\bw_B \right\|^2,\notag
      \end{align}
   where  $\overset{a}\geq$ applies $\sum_{k=K_A+1}^{K} \|\bw_k\|^2 = K_B\|\bw_B\|^2  + \sum_{k=K_A+1}^{K}\left\|\bw_k -\bw_B \right\|^2 $. Plugging \eqref{eq: imbalance bound w} and \eqref{eq: imbalance fun l} into $\overset{a}\geq$ in \eqref{eq: imbalance lower1}, we obtain  $\overset{b}\geq$ in \eqref{eq: imbalance lower1}, completing the proof.
\end{proof}

\subsection{Additional Results}\label{sec:more convex relation}

\paragraph{Comparison of Oversampling and Weighted Adjusting.}
 Oversampling and weight adjusting are two 
commonly-used tricks in deep learning \cite{johnson2019survey}.   Both of them actually consider the  same objective as \eqref{eq: loss:adjust weight}, but applies different optimization algorithms to minimize the objective.
  It was observed that oversampling is more stable than weight adjusting in optimization.  As a by product of this work, we compare the two algorithms below and 
  shows that  the variance of updates for  oversampling will be potentially much smaller than  that of  weight adjusting. It was well-known in stochastic optimization field that the variance of the updates decides the convergence of an optimization algorithm (see e.g, \cite{bottou2018optimization,fang2018spider,fang2019sharp}). Thus we  offer a reasonable justification for the stability of  the oversampling technique. 
We simply consider sampling  the training data  without replacement. It slightly differs from the deep learning training methods in practice.   Besides, we only  consider sampling a single data in each update.  The analysis can be directly extended to the mini-batch setting. 

We first introduce the two methods.
    The weight adjusting algorithm in each update randomly samples a training data, and updates the parameters $\wf$ by the Stochastic Gradient Descent algorithm as
\begin{eqnarray}
 \wf^{t+1} =  \wf^{t}- \eta_w \bv_w^t, \quad t=0,1,2,\dots,
\end{eqnarray}
where $\wf^t$ denotes the parameters at iteration step $t$, $\eta_w$ is a positive step size, and the stochastic gradient $\bv_w^t$ satisfies that
\begin{equation}
\bv_w^t =  
\begin{cases}
\nabla_{\wf} \cL(f(\bx_{k,i}; \wf^t ),\by_k),& k\in[K_A], i\in[\barN_{A}],  \text{~with probability~} \frac{1}{K_A n_A+K_B n_B},\\
w_r\nabla_{\wf} \cL(f(\bx_{k,i}; \wf^t ),\by_k),& k\in[K_A+1:K_B], i\in[\barN_B], \text{~with probability~} \frac{1}{K_A n_A+K_B n_B}.  \notag
\end{cases}
\end{equation}
We have
\begin{align}
&\E \left[\bv_w^t \mid \wf^t \right]\\
=&   \frac{1}{\barN_AK_A+\barN_BK_B }\left[ \sum_{k=1}^{K_A}\sum_{i=1}^{\barN_A} \nabla_{\wf} \cL(f(\bx_{k,i}; \wf^t ),\by_k) +  w_r\!\!\sum_{k=K_A+1}^{K}\!\sum_{i=1}^{\barN_B} \nabla_{\wf} \cL(f(\bx_{k,i}; \wf^t ),\by_k)\right],  \notag
\end{align}
and 
\begin{align}\label{eq:second weight}
\E \left[\|\bv_w^t\|^2 \mid \wf^t \right]=&   \frac{1}{\barN_AK_A+\barN_BK_B } \sum_{k=1}^{K_A}\sum_{i=1}^{\barN_A} \left\|\nabla_{\wf} \cL(f(\bx_{k,i}; \wf^t ),\by_k)\right\|^2\notag\\
&+\frac{ w_r^2}{\barN_AK_A+\barN_BK_B }  \sum_{k=K_A+1}^{K}\sum_{i=1}^{\barN_B}\left\| \nabla_{\wf} \cL(f(\bx_{k,i}; \wf^t ),\by_k)\right\|^2.  
\end{align}

For the oversampling method, the algorithm in effect duplicates the data by $w_r$ times and runs  Stochastic Gradient Descent on the ``whole'' data. Therefore,   the update goes as
\begin{eqnarray}
 \wf^{t+1} =  \wf^{t}- \eta_s \bv_s^t, \quad t=0,1,2,\dots,
\end{eqnarray}
where $\bv_s^t$ satisfies that
\begin{equation}
\bv_s^t =  
\begin{cases}
\nabla_{\wf} \cL(f(\bx_{k,i}; \wf^t ),\by_k),& k\in[K_A], i\in[\barN_{A}],  \text{~with probability~} \frac{1}{K_A n_A+K_B w_r n_B},\\
\nabla_{\wf}  \cL(f(\bx_{k,i}; \wf^t ),\by_k),& k\in[K_A+1:K_B], i\in[\barN_B], \text{~with probability~} \frac{w_r}{K_A n_A+K_B w_r n_B}.  \notag
\end{cases}
\end{equation}
We obtain 
\begin{align}
\E \left[\bv_s^t \mid \wf^t \right]
=&   \frac{1}{\barN_AK_A+w_r\barN_BK_B }\sum_{k=1}^{K_A}\sum_{i=1}^{\barN_A} \nabla_{\wf} \cL(f(\bx_{k,i}; \wf^t ),\by_k)\notag\\
&+   \frac{w_r}{\barN_AK_A+w_r\barN_BK_B }\sum_{k=K_A+1}^{K}\sum_{i=1}^{\barN_B} \nabla_{\wf} \cL(f(\bx_{k,i}; \wf^t ),\by_k), \notag
\end{align}
and 
\begin{align}\label{eq:second up}
\E \left[\|\bv_s^t\|^2 \mid \wf^t \right]=&   \frac{1}{\barN_AK_A+w_r\barN_BK_B } \sum_{k=1}^{K_A}\sum_{i=1}^{\barN_A} \left\|\nabla_{\wf} \cL(f(\bx_{k,i}; \wf^t ),\by_k)\right\|^2\notag\\
&+\frac{ w_r}{\barN_AK_A+w_r\barN_BK_B }  \sum_{k=K_A+1}^{K}\sum_{i=1}^{\barN_B}\left\| \nabla_{\wf} \cL(f(\bx_{k,i}; \wf^t ),\by_k)\right\|^2.  
\end{align}

We suppose the two updates in expectation are in a same scale.  That means we  assume \ $\eta_w =  \frac{n_AK_A+w_rn_BK_B}{n_AK_A+n_BK_B} \eta_s$.  Then $\eta_w \E \left[ \bv_w^t \mid \wf^t \right] =  \eta_s \E \left[ \bv_s^t \mid \wf^t \right]$. In fact, if $K_A\asymp 1$, $K_B \asymp 1$, $n_A \gg n_B$, and $1\ll w_r \lesssim \left(n_A/n_B\right) $, we  have $\frac{n_AK_A+w_rn_BK_B}{n_AK_A+n_BK_B} \asymp 1$ and so $\eta_w \asymp \eta_s$.  Now by comparing \eqref{eq:second weight} with \eqref{eq:second up}, we obtain that the second moment of $\eta_w  \bv_w^t$ is much smaller than that of $\eta_s  \bv_s^t$ since the order of $w_r$ for the latter is larger by $1$.  For example, let us  assume that  all the  norms of the gradients are in a same order, i.e.,  $\left\| \nabla_{\wf} \cL(f(\bx_{k,i}; \wf^t ),\by_k)\right\| \asymp a $ for all $k$ and $i$, where $a>0$. Then 
\eqref{eq:second up} implies that
$\E \left[\|\eta_s \bv_s^t\|^2 \mid \wf^t \right] \asymp  \eta_s^2 a^2  $. However, \eqref{eq:second weight} reads that $\E \left[\|\eta_w \bv_w^t\|^2 \mid \wf^t \right] \asymp   \eta_s^2  \frac{n_AK_A+w_r^2n_BK_B }{n_AK_A+w_rn_BK_B} a^2 $. Furthermore,  if we set $w_r  \asymp n_A/n_B$, then $\E \left[\|\eta_w \bv_w^t\|^2 \mid \wf^t \right] \asymp   \eta_s^2  w_r a^2 $. Thus the second moment for $\eta_w  \bv_w^t$ is around $w_r$ times of that for $\eta_s  \bv_s^t$. And  this fact also holds for the variance because $\left\|\eta_s \E \left[\bv_s^t \mid \wf^t \right]\right\| \asymp \eta_s a$ and the property that  $  \E\|\bx - \E [\bx]\|^2 =  \E\|\bx \|^2 -  \|\E [\bx]\|^2$ for any random variable $\bx$. Therefore, we can conclude that the variance of updates for  oversampling is potentially much smaller than   that of  weight adjusting.

\paragraph{More Discussions on Convex Relaxation and Cross-Entropy Loss.}

We show Program
\eqref{eq: NN simplified model} can also be relaxed as a  nuclear norm-constrained convex optimization.  The result  heavily relies on the progress of matrix decomposition, e.g. \cite{bach2008convex, haeffele2019structured}. We will  use the equality (see e.g., \cite[Section 2]{bach2008convex}) that for any matrix $\bZ$ and $a>0$,  
\begin{equation}\label{eq:nuclear rel}
    \| \bZ\|_* = \inf_{r\in  \mathbb{N}_+} \inf_{\bU,\bV: \bU\bV^\top = \bZ}  \frac{a}{2}\|\bU\|^2 + \frac{1}{2a} \|\bV\|^2,
\end{equation}
where  $r$ is the  number of columns for $\bU$ and $\| \cdot\|_*$ denotes the nuclear norm. 

 For any feasible solution $\left(\bH, \bW\right)$ for the original program  \eqref{eq: NN simplified model},  we define 
\begin{equation}\label{eq:define z}
    \bh_k = \frac{1}{n_k}\sum_{i=1}^{n_k} \bh_{k,i},~k\in[K], \quad\tbH=[\bh_1, \bh_2, \dots, \bh_K]\in\RR^{p\times K}, ~~ \text{and} ~~  \bZ = \bW\tbH\in\RR^{K\times K}. 
\end{equation}
We consider the  convex program:
\begin{equation}\label{eq:convex nuclear problem}
    \begin{aligned}
     \min_{\bZ\in \RR^{K\times K}}\quad& \sum_{k=1}^K \frac{n_k}{N}  \cL( \bZ_k, \by_k  )\\
\mathrm{s.t.}\quad&  \| \bZ\|_* \leq K \sqrt{\Etheta\Ew}.
    \end{aligned}
\end{equation}
   where $\bZ_k$ denotes the $k$-th column of $\bZ$ for $k\in[K]$.

\begin{lemma}\label{theo:to convex NUCLEAR}
Assume $p\geq K$ and the loss function $\cL$ is convex on the first argument.   Let $\bZ^{\star}$ be a minimizer of the convex program \eqref{eq:convex nuclear problem}.
Let $r$ be the rank of $\bZ^{\star}$ and consider  thin  
Singular Value Decomposition (SVD) of $\bZ^{\star}$ as $\bZ^{\star}  = \bU^{\star} \bSigma^{\star} \bV^{\star}$.
Introduce two diagonal matrices $\bSigma_1^{\star}$ and $\bSigma_2^{\star}$ with the entries defined as $\bSigma_1^{\star}(i,i) =  \sqrt{\frac{\Ew}{\Etheta}} \sqrt{|\bSigma^{\star}(i,i)|} $ and  $\bSigma_2^{\star}(i,i) =  \sqrt{\frac{\Etheta}{\Ew}} \bSigma^{\star}(i,i)/ \sqrt{|\bSigma^{\star}(i,i)|}$ for $i\in[r]$, respectively.
Let $\left(\bTheta^{\star}, \bW^{\star}\right)$ be
\begin{equation}
    \begin{aligned}
   & \bW = 
    \bU^{\star} \bSigma_1^{\star}\bP^\top, \quad \left[\btheta_{1}^{\star},\btheta_{2}^{\star},\dots,\btheta_{K}^{\star}\right] = \bP \bSigma_2^{\star}\bV^{\star},\\
&\btheta_{k,i}^{\star} = \btheta_{k}^{\star}, \quad k\in[K],~ i\in[n_k], \label{eq:general solution 2} 
    \end{aligned}
\end{equation}
where $\bP\in \RR^{p\times r}$ is any partial orthogonal matrix such that $\bP^\top \bP = \bI_{r}$.  Then $(\bTheta^{\star}, \bW^{\star})$ is a minimizer of   \eqref{eq: NN simplified model}.  
\end{lemma}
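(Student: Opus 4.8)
The plan is to follow the template of Lemma~\ref{theo:to convex}, replacing the positive semidefinite lifting argument used there by the nuclear-norm variational identity \eqref{eq:nuclear rel}. Write $L_0$ for the optimal value of the convex program \eqref{eq:convex nuclear problem}. The argument has two halves: (a) the convex program lower-bounds \eqref{eq: NN simplified model}; and (b) the explicitly constructed pair $(\bTheta^\star,\bW^\star)$ from \eqref{eq:general solution 2} is feasible for \eqref{eq: NN simplified model} and attains the value $L_0$. Combining the two forces the minimum of \eqref{eq: NN simplified model} to equal $L_0$ and to be attained at $(\bTheta^\star,\bW^\star)$.

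For (a), I would take an arbitrary feasible $(\bH,\bW)$ of \eqref{eq: NN simplified model} and pass to the class means $\bh_k$, the matrix $\tbH$, and $\bZ=\bW\tbH$ as in \eqref{eq:define z}. Jensen's inequality gives $\|\bh_k\|^2\le \frac{1}{n_k}\sum_{i}\|\bh_{k,i}\|^2$, hence $\|\tbH\|^2\le K\Etheta$, while the first constraint gives $\|\bW\|^2\le K\Ew$. Feeding the factorization $\bZ=\bW\tbH$ into \eqref{eq:nuclear rel} with the balancing weight $a=\sqrt{\Etheta/\Ew}$ yields $\|\bZ\|_*\le \frac{a}{2}\|\bW\|^2+\frac{1}{2a}\|\tbH\|^2\le K\sqrt{\Etheta\Ew}$, so $\bZ$ is feasible for \eqref{eq:convex nuclear problem}. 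Then convexity of $\cL$ in its first argument, again via Jensen, gives $\frac{1}{N}\sum_{k,i}\cL(\bW\bh_{k,i},\by_k)\ge \sum_k\frac{n_k}{N}\cL(\bW\bh_k,\by_k)=\sum_k\frac{n_k}{N}\cL(\bZ_k,\by_k)\ge L_0$; that is, the minimum of \eqref{eq: NN simplified model} is at least $L_0$.

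For (b), let $\bZ^\star$ be a minimizer of \eqref{eq:convex nuclear problem} with thin SVD $\bZ^\star=\bU^\star\bSigma^\star\bV^\star$ of rank $r$ (so $r\le K\le p$, which is why $p\ge K$ is assumed), and let $(\bTheta^\star,\bW^\star)$ be as in \eqref{eq:general solution 2}. I would verify three points. First, since $\bSigma_1^\star(i,i)\bSigma_2^\star(i,i)=\bSigma^\star(i,i)$ and $\bP^\top\bP=\bI_r$, one has $\bW^\star[\btheta_1^\star,\dots,\btheta_K^\star]=\bU^\star\bSigma^\star\bV^\star=\bZ^\star$. Second, because $\bU^\star$ has orthonormal columns, $\bV^\star$ orthonormal rows, and $\bP$ orthonormal columns, the Frobenius norms collapse to $\|\bW^\star\|^2=\|\bSigma_1^\star\|^2$ and $\sum_k\|\btheta_k^\star\|^2=\|\bSigma_2^\star\|^2$, both of which are scalar multiples of $\|\bZ^\star\|_*\le K\sqrt{\Etheta\Ew}$; together with $\btheta_{k,i}^\star=\btheta_k^\star$ and the chosen scalings of $\bSigma_1^\star,\bSigma_2^\star$ this makes both constraints of \eqref{eq: NN simplified model} hold. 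Third, using $\bW^\star\btheta_k^\star=\bZ_k^\star$, the objective of \eqref{eq: NN simplified model} at this point is $\frac{1}{N}\sum_k n_k\cL(\bW^\star\btheta_k^\star,\by_k)=\sum_k\frac{n_k}{N}\cL(\bZ_k^\star,\by_k)=L_0$. Parts (a) and (b) together complete the proof.

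The main obstacle is matching the norm bookkeeping in (a) with its converse in (b): the bilinear image of the constraint set of \eqref{eq: NN simplified model} under $(\bW,\tbH)\mapsto\bW\tbH$ must coincide with the nuclear-norm ball of radius $K\sqrt{\Etheta\Ew}$, which is precisely what pins down the weight $a=\sqrt{\Etheta/\Ew}$ in \eqref{eq:nuclear rel} and the scalings defining $\bSigma_1^\star,\bSigma_2^\star$, so that the two norm constraints are simultaneously tight. The remaining ingredients---Jensen's inequality, convexity of $\cL$, and the orthogonality reductions of Frobenius norms---are routine and run exactly parallel to Lemma~\ref{theo:to convex}.
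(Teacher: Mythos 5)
Your proposal follows essentially the same route as the paper's proof: the same Jensen-inequality passage to class means, the same use of the variational identity \eqref{eq:nuclear rel} with weight $a=\sqrt{\Etheta/\Ew}$ to show $\bZ=\bW\tbH$ is feasible for \eqref{eq:convex nuclear problem} and hence that $L_0$ lower-bounds \eqref{eq: NN simplified model}, and the same verification that the SVD-based construction \eqref{eq:general solution 2} is feasible and attains $L_0$. The only difference is that you spell out the feasibility bookkeeping (factorization, Frobenius-norm reductions, constraint checks) that the paper compresses into ``we can verify,'' so the argument is correct and matches the paper.
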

\begin{proof}[Proof of Lemma \ref{theo:to convex NUCLEAR}]
 For any feasible solution $\left(\bH, \bW\right)$ for the original program  \eqref{eq: NN simplified model},  define $\bh_k$ for $k\in[K]$, $\tbH$, and $\bZ$ by \eqref{eq:define z}.
We show $\bZ$ is a feasible solution for the convex program
\eqref{eq:convex nuclear problem}. In fact,   by \eqref{eq:nuclear rel} with $r =K$ and $a = \sqrt{\Etheta/\Ew}$, we have 
\begin{align}\label{eq:z bound}
    \left\|\bZ\right\|_* &\leq \frac{\sqrt{\Etheta/\Ew}}{2}\left\| \bW\right\|^2 +   \frac{\sqrt{\Ew/\Etheta}}{2}\left\| \tbH\right\|^2\notag\\
    &\overset{a}\leq   \frac{ \sqrt{\Etheta/\Ew} }{2}\sum_{k=1}^K \|\bw_k\|^2  + \frac{\sqrt{\Ew/\Etheta}}{2} \sum_{k=1}^K \frac{1}{n_k}\sum_{i=1}^{n_k}\left\|\btheta_{k,i} \right\|^2\notag\\ &\leq K \sqrt{\Etheta\Ew}, 
\end{align}
where $\overset{a}\leq$ applies Jensen's inequality as: 
$$
 \left\|\tbH\right\|^2 =  \sum_{k=1}^K\|\bh_k \|^2
 \leq \sum_{k=1}^K \frac{1}{n_k}\sum_{i=1}^{n_k}\left\|\btheta_{k,i} \right\|^2.
$$
Let $L_0$ be the global minimum  of the convex problem  \eqref{eq:convex nuclear problem}. 
Since $\cL$ is convex on the first argument, by the same argument as \eqref{eq:convex fea1}, we obtain, for any feasible solution $\left(\bH, \bW\right)$,
 \begin{align}\label{eq:convex fea1nu}
    \frac{1}{N} \sum_{k=1}^K \sum_{i=1}^{n_k} \cL( \bW\btheta_{k,i}, \by_k ) &=  \sum_{k=1}^K\frac{\barN_k}{N}  \left[\frac{1}{\barN_k}\sum_{k=1}^{n_k} \cL( \bW\btheta_{k,i}, \by_k )\right]\notag\\
    &\geq   \sum_{k=1}^{K} \frac{\barN_k}{N} \cL( \bW\btheta_{k}, \by_k )
    = \sum_{k=1}^K \frac{n_k}{N}  \cL( \bZ_k, \by_k  ) \geq L_0.
 \end{align}
  On the other hand,  for the solution $\left(\bH^{\star}, \bW^{\star}\right)$ defined in \eqref{eq:general solution 2} with $\bZ^{\star}$, we can verify that $\left(\bH^{\star}, \bW^{\star}\right)$ is a feasible solution for  \eqref{eq: NN simplified model} and
\begin{equation}\label{eq:convex fea2nu}
     \frac{1}{N} \sum_{k=1}^K \sum_{i=1}^{n_k} \cL( \bW^{\star}\btheta_{k,i}^{\star}, \by_k ) =  \sum_{k=1}^K  \frac{n_k}{N}\cL( \bZ_k^{\star}, \by_k  ) = L_0.
\end{equation}
 Combining \eqref{eq:convex fea1nu}  and \eqref{eq:convex fea2nu}, we have that  $L_0$ is the global minimum of \eqref{eq: NN simplified model} and $(\bTheta^{\star}, \bW^{\star})$ is a minimizer.
\end{proof}

\begin{property}\label{property:reach const}
For the cross-entropy loss,   we have the following properties.
\begin{itemize}
    \item[] \hypertarget{pro:reach  A}{(i)}   Any   minimizer $\bZ^{\star}$ of \eqref{eq:convex nuclear problem} satisfies that  $ \| \bZ\|_* = \sqrt{\Etheta\Ew}$.
    \item[] \hypertarget{pro:reach B}{(ii)}  Any   minimizer $(\bH^{\star},\bW^{\star})$ of  \eqref{eq: NN simplified model} satisfies $$\frac{1}{K}\sum_{k=1}^K\frac{1}{\barN}\sum_{i=1}^{\barN}\left\|\btheta_{k,i}^{\star} \right\|^2 = \Etheta,\quad \text{and}\quad\quad
     \frac{1}{K}\sum_{k=1}^K\left\|\bw_k^{\star} \right\|^2 = \Ew.$$
    \item[] \hypertarget{pro:reach C}{(iii)}  Any   minimizer $\bX^{\star}$ of  \eqref{eq:convex sdp problem}  satisfies that $$\frac{1}{K}\sum_{k=1}^K \bX^{\star}(k,k) = \Etheta, \quad\text{and}\quad\quad \frac{1}{K}\sum_{k=K+1}^{2K} \bX^{\star}(k,k) =\Ew.$$
\end{itemize}
\end{property}
\begin{proof}[Proof of Property \ref{property:reach const}]
We first prove (\hyperlink{pro:reach A}{i}).  Let $\bZ^{\star}$ be any minimizer of  \eqref{eq:convex nuclear problem}.  Then by the 
Karush–Kuhn–Tucker conditions,  there is a pair $(\lambda, \bXi)$ with  $\lambda\geq0$ and $\bXi\in \partial \|\bZ^{\star}\|_*$ such that
$$ \nabla_{\bZ} \left[\sum_{k=1}^K \frac{n_k}{N}  \cL( \bZ_k^{\star}, \by_k  )\right] + \lambda \bXi = \mathbf{0}^{K\times K}, $$
where $\partial \|\bZ\|_* $ denotes the set of sub-gradient of $\|\bZ\|_*$.  For the cross-entropy loss, one can verify that $ \nabla_{\bZ} \left[\sum_{k=1}^K \frac{n_k}{N}  \cL( \bZ_k, \by_k  )\right]\neq \mathbf{0}^{K\times K}  $ for all $\bZ$. So $\lambda\neq0$. By the complementary slackness condition, we have that $\bZ$ will reach  the boundary of the constraint, achieving (\hyperlink{pro:reach A}{i}).

For (\hyperlink{pro:reach B}{ii}), suppose there is a minimizer $(\bH^{\star}, \bW^{\star})$ of  \eqref{eq: NN simplified model}  such that  $\frac{1}{K}\sum_{k=1}^K\frac{1}{\barN}\sum_{i=1}^{\barN}\left\|\btheta_{k,i}^{\star} \right\|^2 < \Etheta$ or $\frac{1}{K}\sum_{k=1}^K\left\|\bw_k^{\star} \right\|^2 < \Ew$.  Letting $\bZ^{\star}$ defined by \eqref{eq:define z},  it follows from \eqref{eq:convex fea1nu} that  $\bZ^{\star}$ is a minimizer of  \eqref{eq:convex nuclear problem}. However,  by \eqref{eq:z bound},  we have $\| \bZ^{\star}\|_* < \sqrt{\Etheta\Ew}$, which is contradictory to (\hyperlink{pro:reach A}{i}). We obtain (\hyperlink{pro:reach B}{ii}).

For (\hyperlink{pro:reach C}{iii}), suppose there is a minimizer $\bX^{\star}$ of  \eqref{eq:convex sdp problem}  such that $\frac{1}{K}\sum_{k=1}^K \bX^{\star}(k,k) < \Etheta$ or  $ \frac{1}{K}\sum_{k=K+1}^{2K} \bX^{\star}(k,k) <\Ew$. Then letting $(\bH^{\star}, \bW^{\star})$ defined by \eqref{eq:general solution},   $(\bH^{\star}, \bW^{\star})$ is a minimizer of  \eqref{eq: NN simplified model} from Theorem \ref{theo:to convex}. However,  we have  $\frac{1}{K}\sum_{k=1}^K\frac{1}{\barN}\sum_{i=1}^{\barN}\left\|\btheta_{k,i}^{\star} \right\|^2 < \Etheta$ or $\frac{1}{K}\sum_{k=1}^K\left\|\bw_k^{\star} \right\|^2 < \Ew$, which  contradicts to (\hyperlink{pro:reach B}{ii}). We complete the proof.

\end{proof}

\subsection{Additional Experimental Results}
In this part, we provide some additional experimental results for Minority Collapse. As for the experiments for Minority Collapse in Figure \ref{fig:collapse}, the corresponding training and test accuracy are shown in Tables \ref{table:training-accuracy}-\ref{table:test-accuracy}. Furthermore, we find that the pre-trained neural networks on ImageNet (an imbalanced dataset with $K=1000$ classes) that are officially released by Pytorch\footnote{\url{https://pytorch.org/vision/stable/models.html}.} also do not converge to a Simplex ETF, indicating that neural collapse does not emerge during the terminal phase of imbalanced training. Specifically, the minimal (maximal) between-class angle of pre-trained classifiers for VGG19 and ResNet152 are $43^\circ$ ($103^\circ$) and $37^\circ$ ($102^\circ$), respectively. The corresponding standard deviation of between-class angles of pre-trained classifiers for VGG19 and ResNet152 are $4.1^\circ$ and $3.6^\circ$, respectively. More details can be
found in Figure \ref{fig:imagenet}. The phase transition point of the imbalance ratio is in Figure \ref{fig:LPM-simulations-extra} with multiple choices of $\Ew$ and $\Etheta$.

\begin{table}[!htp]
\centering
\scalebox{0.7}{
\begin{tabular}{c||c|c|c||c|c|c||c|c|c||c|c|c}
\hline
Dataset &\multicolumn{6}{c||}{FashionMNIST} & \multicolumn{6}{c}{CIFAR10} \\ \hline
Network architecture & \multicolumn{3}{c||}{VGG11} & \multicolumn{3}{c||}{ResNet18} & \multicolumn{3}{c||}{VGG13} & \multicolumn{3}{c}{ResNet18} \\ \hline
No.~of majority classes & $K_A=3$ & $K_A=5$ & $K_A=7$ & $K_A=3$ & $K_A=5$ & $K_A=7$  & $K_A=3$ & $K_A=5$ & $K_A=7$ & $K_A=3$ & $K_A=5$ & $K_A=7$ \\ \hline
$R=1$  &100 & 100& 100 &100 &100 &100 & 100 & 100 & 100 & 100& 100 & 100\\ \hline
$R=10$ & 100 &100 & 100& 100& 100& 100 & 100 & 100 &100 & 100 & 100& 100\\ \hline
$R=100$ & 100& 100 & 100 &100 &100 & 100 &100 & 100& 100 &100 & 100 & 100 \\ \hline
$R=1000$ &99.87 &99.94 & 99.97 & 99.97 & 99.93 & 99.97 & 99.80& 99.90& 99.96 & 99.90&100 & 99.97 \\ \hline
$R=inf$ &100 &100 &100 & 100 & 100 & 100 & 100& 100 &100 &100 &100   & 100\\ \hline
\end{tabular}
}
\caption{Training accuracy (\%) for different settings.}
\label{table:training-accuracy}
\end{table}

\begin{table}[!htp]
\centering
\scalebox{0.7}{
\begin{tabular}{c||c|c|c||c|c|c||c|c|c||c|c|c}
\hline
Dataset &\multicolumn{6}{c||}{FashionMNIST} & \multicolumn{6}{c}{CIFAR10} \\ \hline
Network architecture & \multicolumn{3}{c||}{VGG11} & \multicolumn{3}{c||}{ResNet18} & \multicolumn{3}{c||}{VGG13} & \multicolumn{3}{c}{ResNet18} \\ \hline
No.~of majority classes & $K_A=3$ & $K_A=5$ & $K_A=7$ & $K_A=3$ & $K_A=5$ & $K_A=7$  & $K_A=3$ & $K_A=5$ & $K_A=7$ & $K_A=3$ & $K_A=5$ & $K_A=7$ \\ \hline
$R=1$  &93.02 & 93.02 & 93.02 &93.80 &93.80 & 93.80&88.62 & 88.62 & 88.62& 88.72& 88.72&  88.72\\ \hline 
$R=10$ & 87.12 & 89.79& 92.00 &86.07 & 88.77 & 92.78 &65.55 &71.80 &80.41 &58.66 & 66.44 & 78.79\\ \hline
$R=100$ & 73.48& 85.00& 88.03 &70.82 &84.62 & 86.24 & 30.87& 48.36& 64.52 &28.90 &45.97 & 62.91 \\ \hline
$R=1000$ &40.10 & 57.61 & 69.09 &45.51 & 57.95 & 66.13& 28.48& 45.44& 61.82 &28.57 &45.10 & 60.89\\ \hline
$R=inf$ &29.39 &47.61 & 63.86 &29.44 &47.72 & 64.59 &28.31 & 44.87 &61.40 & 28.44 & 45.27 & 61.16 \\ \hline
\end{tabular}
}
\caption{Test accuracy (\%) for different settings.}
\label{table:test-accuracy}
\end{table} 

\begin{figure}
		\centering
        \hspace{0.01in}
		\subfigure[$\Ew=0.5$, $\Etheta=5$]{
			\centering
			\includegraphics[scale=0.21]{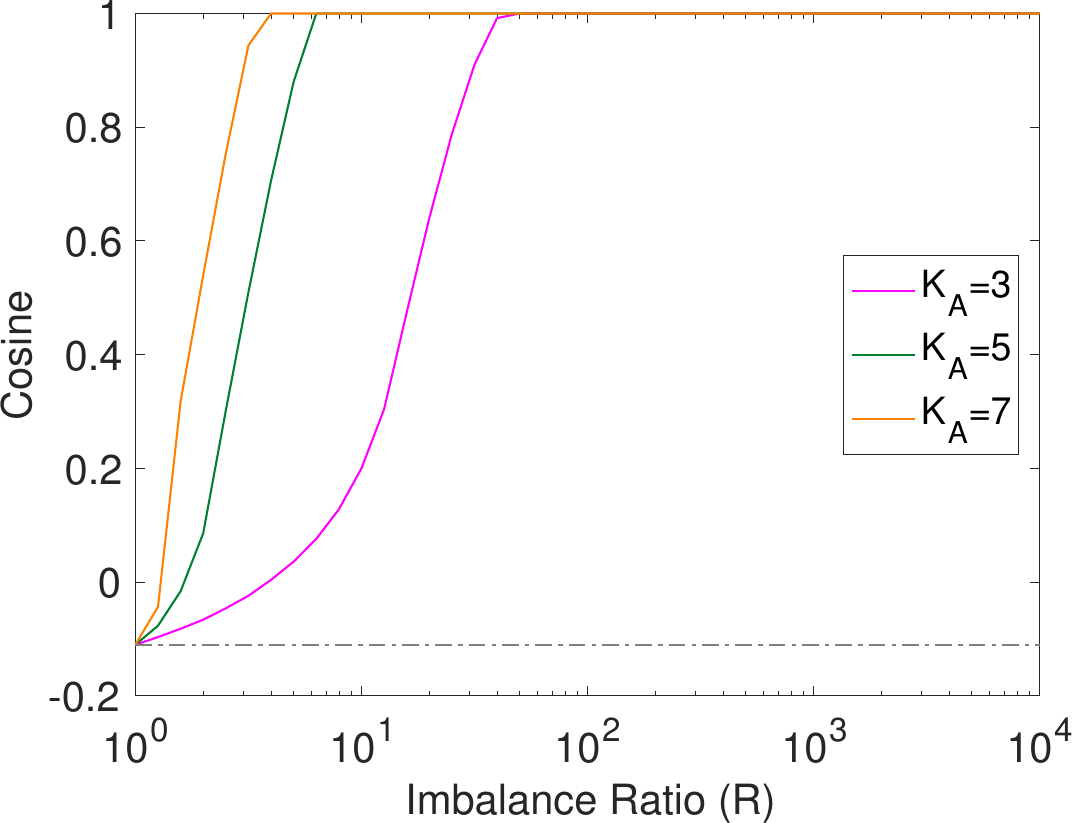}
			\label{fig:simulation-a}}
        \hspace{0.01in}
        \subfigure[$\Ew=0.5$, $\Etheta=10$]{
			\centering
			\includegraphics[scale=0.21]{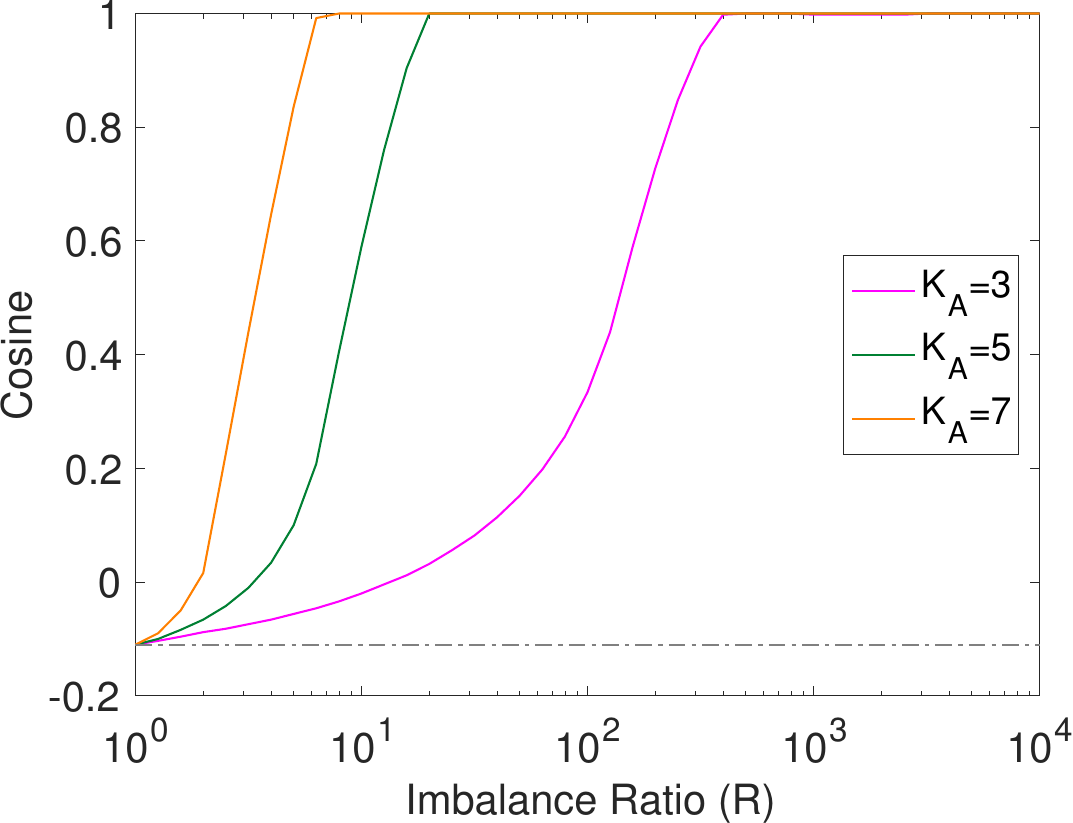}
			\label{fig:simulation-b}}
		\hspace{0.01in}
		\subfigure[$\Ew=1$, $\Etheta=5$]{
			\centering
			\includegraphics[scale=0.21]{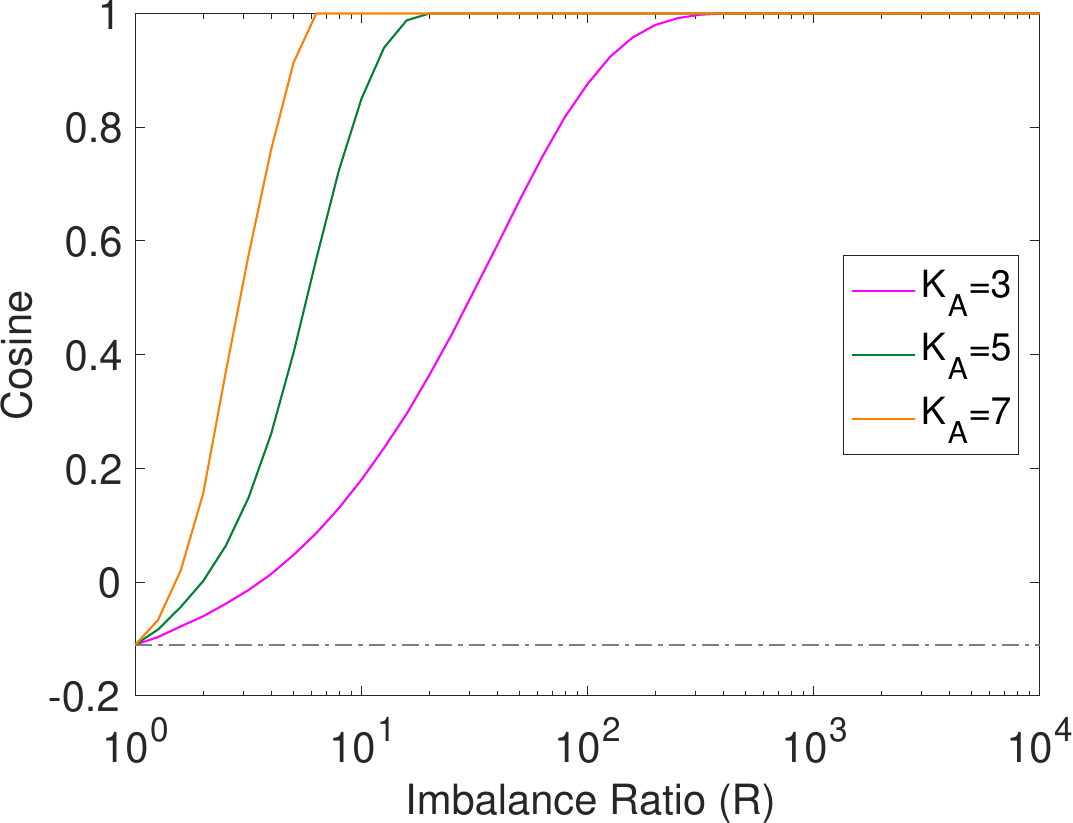}
			\label{fig:simulation-c}}
        \hspace{0.01in}
        \subfigure[$\Ew=1$, $\Etheta=10$]{
			\centering
			\includegraphics[scale=0.21]{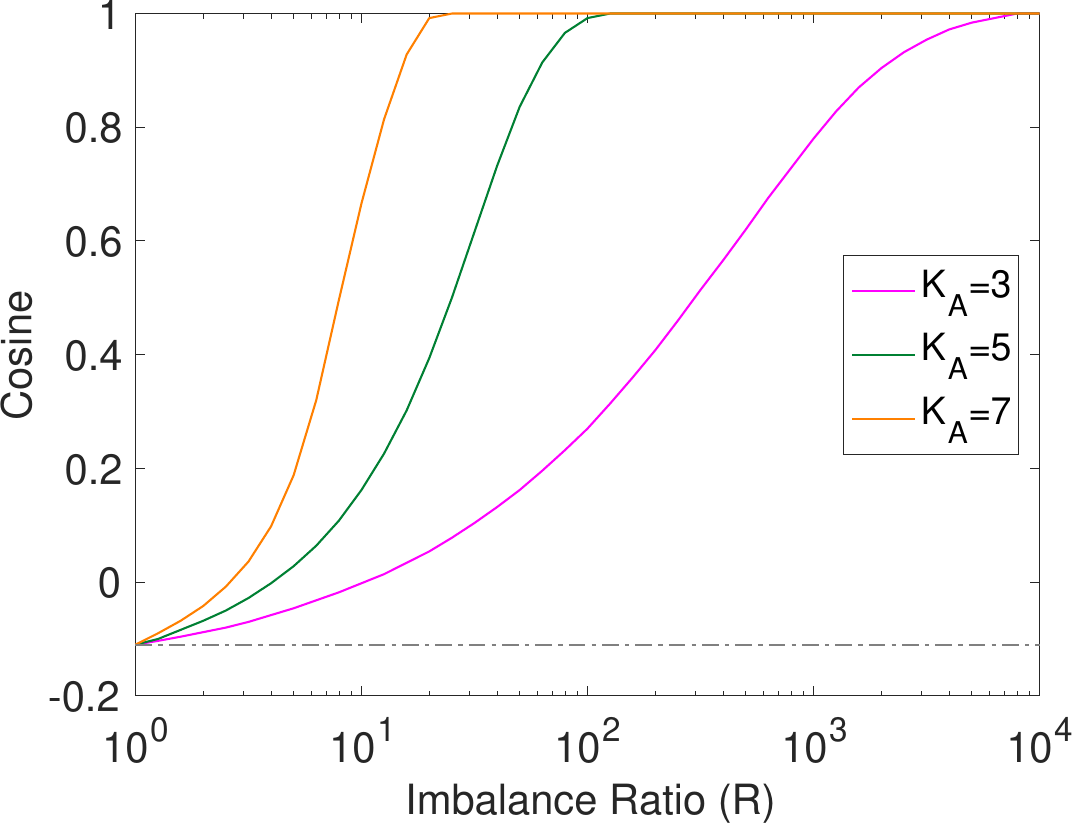}
			\label{fig:simulation-d}}
		\caption{
		 The average cosine of the angles between any pair of the minority classifier solved from the Layer-Peeled Model. The average cosine reaches $1$ once $R$ is above some threshold. The total number of classes $K_A + K_B$ is fixed to $10$. The gray dash-dotted line indicates the value of $-\frac{1}{K-1}$, which is given by \eqref{eq:cos}.   
		}	
\label{fig:LPM-simulations-extra}
\end{figure}

\begin{figure}[t]
		\centering
		\hspace{0.01in}
		\subfigure[Pre-trained VGG19 on ImageNet ]{
			\centering
			\includegraphics[scale=0.45]{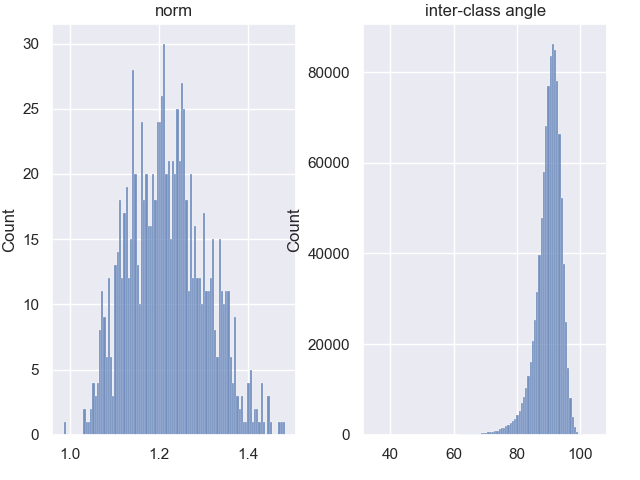}
			\label{fig:imagenet-vgg19}}  
        \hspace{0.01in}
        	\subfigure[Pre-trained ResNet152 on ImageNet]{
			\centering
			\includegraphics[scale=0.45]{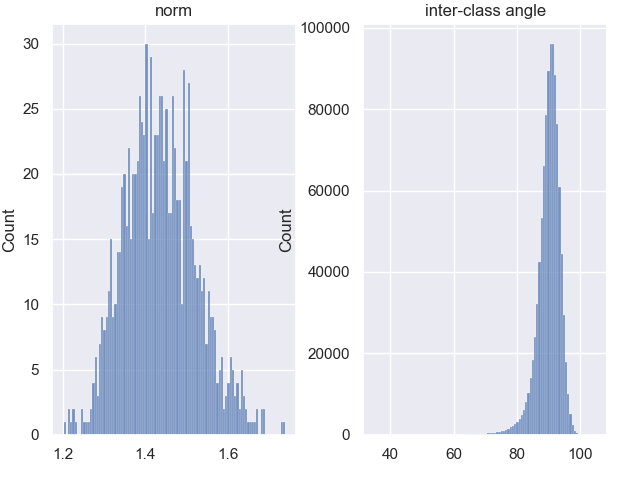}
			\label{fig:imagenet-resnet152}}
		\caption{The neural networks that are pre-trained on ImageNet by PyTorch do not converge to a Simplex ETF.}		
\label{fig:imagenet}
\end{figure}

\end{document}